\renewcommand{\epsilon}{\varepsilon}
\newcommand{\trans}{^{\top}}
\newcommand{\cA}{\mathcal{A}}
\newcommand{\cS}{\mathcal{S}}
\newcommand{\cE}{\mathcal{E}}
\newcommand{\cZ}{\mathcal{Z}}
\newcommand{\EE}{\mathbb{E}}
\newcommand{\hV}{{\hat{V}}}
\newcommand{\hQ}{{\hat{Q}}}
\newcommand{\hP}{{\hat{P}}}
\newcommand{\ph}{{h'}}
\newcommand{\pk}{{k'}}
\newcommand{\tk}{{\tilde{k}}}
\newcommand{\nk}{{k}}
\newcommand{\mP}{{\mathbb{P}}}
\newcommand{\mR}{{\mathbb{R}}}
\newcommand{\hmP}{{\hat{\mathbb{P}}}}
\newcommand\numberthis{\addtocounter{equation}{1}\tag{\theequation}}
\let\hat\widehat
\let\tilde\widetilde
\newtheorem{theorem}{Theorem}[section]
\newtheorem{lemma}[theorem]{Lemma}
\newtheorem{corollary}[theorem]{Corollary}
\newtheorem{remark}[theorem]{Remark}
\newtheorem{definition}[theorem]{Definition}
\newtheorem{condition}[theorem]{Condition}
\newtheorem{assumption}{Assumption}
\newcommand{\kibitz}[2]{\ifnum\Comments=1\textcolor{#1}{#2}\fi}
\newcommand{\blue}[1]{{\color{blue} #1}}
\newcommand{\red}[1]{{\color{red} #1}}
\DeclareMathOperator*{\argmax}{arg\,max}
\newcommand{\algO}{\text{Alg}^\text{O}}
\newcommand{\algE}{\text{Alg}^\text{E}}
\newcommand{\alg}{\text{Alg}}
\newcommand{\envO}{\text{Env}^\text{O}}
\newcommand{\envE}{\text{Env}^\text{E}}
\newcommand{\piO}{{\pi}^\text{O}}
\newcommand{\piE}{{\pi}^\text{E}}
\newcommand{\piPVI}{{\pi}^\text{PVI}}
\newcommand{\groupO}{\text{G}^\text{O}}
\newcommand{\groupE}{\text{G}^\text{E}}
\newcommand{\tauO}{{\tau}^\text{O}}
\newcommand{\tauE}{{\tau}^\text{E}}
\newcommand{\surplus}{\mathbf{E}}
\newcommand{\Regret}{\text{Regret}}
\newcommand{\epsClip}{\epsilon_\text{Clip}}
\newcommand{\Clip}{\text{Clip}}
\newcommand{\lc}{\lceil}
\newcommand{\rc}{\rceil}
\newcommand{\cEB}{\cE_{\textbf{Bonus}}}
\newcommand{\cECk}{\cE_{\textbf{Con},k}}
\newcommand{\cEBk}{\cE_{\textbf{Bonus},k}}
\newcommand{\cEOk}{\cE_{\algO,k}}
\newcommand{\btau}{\bar{\tau}}
\newcommand{\ttau}{\tilde{\tau}}
\newcommand{\tOmega}{\tilde{\Omega}}
\newcommand{\tTheta}{\tilde{\Theta}}
\newcommand{\tO}{\tilde{O}}
\newcommand{\scalar}{\lambda}
\newcommand{\Gl}{G^{\text{lower}}}
\newcommand{\Gu}{G^{\text{upper}}}
\newcommand{\compilehidecomments}{true}
	\newcommand{\wei}[1]{}
	\newcommand{\jiawei}[1]{}
	\newcommand{\nan}[1]{}
	\newcommand{\li}[1]{}
	\newcommand{\wei}[1]{{\color{red}  [\text{Wei:} #1]}}
	\newcommand{\jiawei}[1]{{\color{orange} [\text{Jiawei:} #1]}}
	\newcommand{\nan}[1]{{\color{teal} [\text{Nan:} #1]}}
	\newcommand{\li}[1]{{\color{magenta} [\text{Li:} #1]}}
\newcommand{\compilefullversion}{true} 
	\newcommand{\OnlyInFull}[1]{}
	\newcommand{\OnlyInShort}[1]{#1}
	\newcommand{\OnlyInFull}[1]{#1}%
	\newcommand{\OnlyInShort}[1]{}%
\title{Tiered Reinforcement Learning: Pessimism in the Face of Uncertainty and Constant Regret}
\author{
 Jiawei Huang$^1$\thanks{Work done during the internship at Microsoft Research Asia.} 
 \quad
 Li Zhao$^2$ 
 \quad
 Tao Qin$^2$ 
 \quad
 Wei Chen$^2$ 
 \quad
 Nan Jiang$^1$
 \quad
 Tie-Yan Liu$^2$ \\
 $^1$ Department of Computer Science, University of Illinois at Urbana-Champaign\\
 \texttt{\{jiaweih, nanjiang\}@illinois.edu}\\
 $^2$ Microsoft Research Asia\\
 \texttt{\{lizo, taoqin, weic, tyliu\}@microsoft.com}
}
\begin{document}
\maketitle

\begin{abstract}
We propose a new learning framework that captures the tiered structure of many real-world user-interaction applications, where the users can be divided into two groups based on their different tolerance on exploration risks and should be treated separately.
In this setting, we simultaneously maintain two policies $\pi^{\text{O}}$ and $\pi^{\text{E}}$: $\pi^{\text{O}}$ (``O'' for ``online'') interacts with more risk-tolerant users from the first tier and minimizes regret by balancing exploration and exploitation as usual, 
while $\pi^{\text{E}}$ (``E'' for ``exploit'') exclusively focuses on exploitation for risk-averse users from the second tier utilizing the data collected so far.
An important question is whether such a separation yields advantages over the standard online setting (i.e., $\pi^{\text{E}}=\pi^{\text{O}}$) for the risk-averse users. 
We individually consider the gap-independent vs.~gap-dependent settings. For the former, we prove that the separation is indeed not beneficial from a minimax perspective.
For the latter, we show that if choosing Pessimistic Value Iteration as the exploitation algorithm to produce $\pi^{\text{E}}$, we can achieve a constant regret for risk-averse users independent of the number of episodes $K$, which is in sharp contrast to the $\Omega(\log K)$ regret for any online RL algorithms in the same setting, while the regret of $\pi^{\text{O}}$ (almost) maintains its online regret optimality and does not need to compromise for the success of $\pi^{\text{E}}$.
\end{abstract}

\allowdisplaybreaks
\section{Introduction}\label{sec:introduction}
Reinforcement learning (RL) has been applied to many real-world user-interaction applications to provide users with better services, such as in recommendation systems \citep{afsar2021reinforcement} and medical treatment \citep{yu2021reinforcement, lipsky2001idea}.
In those scenarios, the users take the role of the environments and the interaction strategies (e.g. recommendation or medical treatment) correspond to the agents in RL. 
In the theoretical study of such problems, 
most of the existing literature adopts the online interaction protocol, where in each episode $k\in[K]$, the learning agent executes a policy $\pi_k$ to interact with users (i.e. environments), receives new data to update the policy, and moves on to the next episode. 
While this formulation \emph{treats each user equivalently} when optimizing the regret,
many scenarios have a special ``\textbf{Tiered Structure}''\footnote{We consider the cases with two tiers in this paper.}: \emph{users can be divided into multiple groups depending on their different preference and tolerance about the risk that results from the necessary exploration to improve the policy}, and such grouping is available to the learner in advance so it would be better to treat them separately. 
As a concrete example, in medical treatment, after a new treatment plan comes out, some courageous patients or paid volunteers (denoted as $\groupO$; ``O'' for ``Online'') may prefer it given the potential risks, while some conservative patients (denoted as $\groupE$; ``E'' for ``Exploit'') may tend to receive mature and well-tested plans, even if the new one is promising to be more effective. 
As another example, companies offering recommendation services may recruit paid testers or use bonus to attract customers ($\groupO$) to interact with the system to shoulder the majority of the exploration risk during policy improvement, which may result in better service (low regret) for the remaining customers ($\groupE$).
Moreover, many online platforms have free service open for everyone ($\groupO$), while some users are willing to pay for enhanced service ($\groupE$). 
If we follow the traditional online setting and treat the users in these two groups equivalently, then in expectation each group will suffer the same regret and risk.
In contrast, if we leverage the group information by using policies with different risk levels to interact with different groups, it is potentially possible to transfer some exploration risks from users in $\groupE$ to $\groupO$, while the additional risks suffered by $\groupO$ will be compensated in other forms (such as payment, the users' inherent motivation, or the free service itself).
\begin{figure}
    \centering
    \includegraphics[scale=0.38]{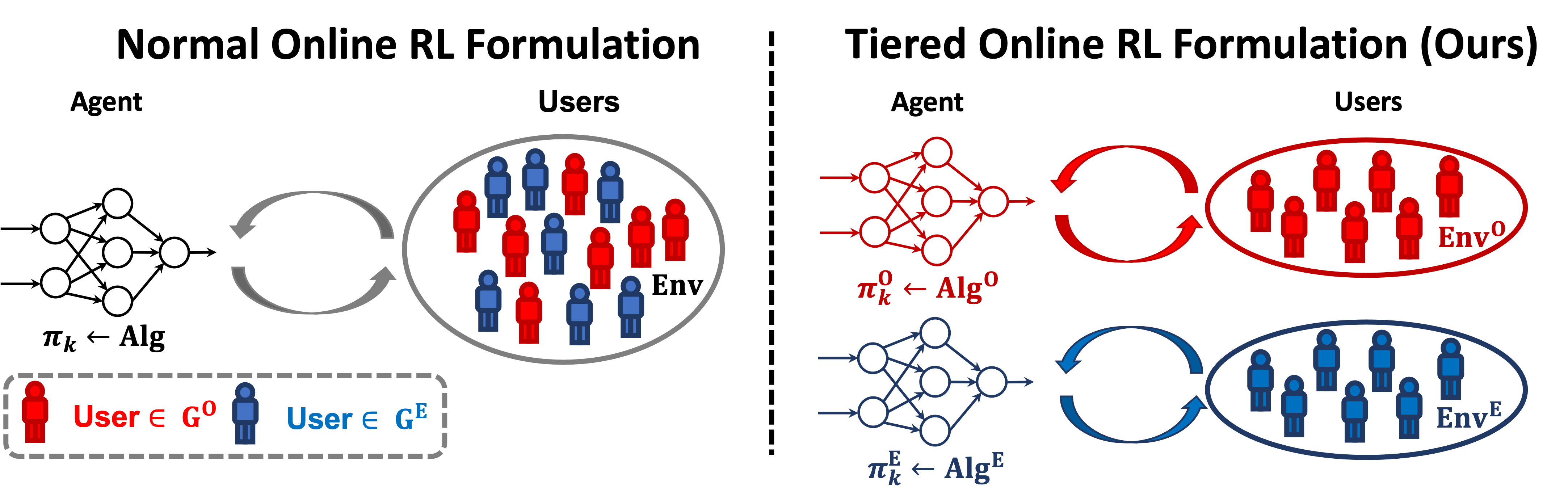}
    \caption{Comparison between the standard setting and our tiered RL setting (\#Tiers = 2), where we use red and blue to color users from different groups. The main difference is that, in the standard setting (LHS), the learner does not distinguish users from different groups and treats them equivalently with a single policy $\pi_k$ produced by algorithm $\alg$, while in our setting (RHS), we leverage the tier information and interact with different groups with different policies $\piE$ and $\piO$.}
    \label{fig:tieredRL_formulation}
\end{figure}

To make our objective more clear, we abstract the problem setting into Frw.~\ref{alg:general_learning_framework} and compare it with the standard online setting in Fig. \ref{fig:tieredRL_formulation}, where we use $\algO$ and $\algE$ to denote the two algorithms producing policies $\piE$ and $\piO$ to interact with users in $\groupO$ and $\groupE$, respectively. 
To enable theoretical analyses, we do adopt a few simplification assumptions while still modelling the core challenges in the aforementioned scenarios: 
firstly, at each iteration of Frw.~\ref{alg:general_learning_framework}, the algorithms will interact with and collect one trajectory from each group, 
which assumes that users from two groups will come to seek for service in pair with the same frequency.
In practice, usually, the users come in random order and the frequencies from different groups are not the same; see Appx.~\ref{appx:discussion_abs_framework_examples} for why our abstraction is still a valid surrogate and how our results can be generalized.
Secondly, for convenience, 
\emph{we only use the samples generated from $\groupO$}, because $\algE$ is expected to best exploit the available information and not encouraged to perform intelligent exploration. 
Nonetheless, our results hold with minor modifications if one also chooses to use trajectories from $\groupE$. 
Thirdly, we assume that 
the dynamics and rewards during the interactions with users in different groups are all the same (i.e. $\text{Env}=\text{Env}^\text{O}=\text{Env}^\text{E}$). It is possible that the users in different tiers can behave differently, and we leave the relaxation of such an assumption to future work.
\begin{algorithm}[H]
    \SetAlgorithmName{Framework} \\
    \textbf{Input}: $\envO$ and $\envE$ // \blue{Note that $\envO=\envE$} \\
    Initialize $D_1 \gets \{\}.$ \\
    \For{$k = 1,2,...,K$}{
        $\piO_k \gets \algO(D_k)$; $\piE_k \gets \algE(D_k)$.\\
        $\piO_k$ interacts with customers/users/patients in $\groupO$ (i.e. $\envO$), and collect data $\tauO_k$. \\
        $\piE_k$ interacts with customers/users/patients in $\groupE$ (i.e. $\envE$), and collect data $\tauE_k$. \\
        $D_{k+1} = D_k \cup \{\tauO_k\}$. // \blue{We do not consider to use $\tauE_k$ in this paper.}
    }
    \caption{The Tiered RL Framework}\label{alg:general_learning_framework}
\end{algorithm}

Similar to the online setting, we use the expected pseudo-regret to measure the performance of the algorithm, which is formalized in Def.~\ref{def:Pseduo_regret}. 
The key problem we would like to investigate
is provable benefits of leveraging the tiered structure by Frw.~\ref{alg:general_learning_framework} comparing with the standard online setting:

\begin{center}
    \textbf{Is it possible for $\Regret(\algE)$ to be strictly lower than any online learning algorithms in certain scenarios, while keeping $\Regret(\algO)$ near-optimal?}
\end{center}
Note that we still expect regret of $\algO$ to enjoy near-optimal regret guarantees, which is a reasonable requirement as the experience of users in $\groupO$ also matters in many of our motivating applications. 
We regard the above problem formulation as \textbf{our first contribution}, which is mainly conceptual.

As \textbf{our second contribution},  Sec.~\ref{sec:Regret_lower_bound_without_gap_assump} shows that $\algE$ has the same minimax gap-independent lower bound as online learning algorithms.
This result reveals the difficulty to leverage tiered structure in standard tabular MDPs, and motivates us to investigate the benefits under the gap-dependent setting,
which is frequently considered in the Multi-Armed Bandit (MAB) \citep{lattimore2020bandit, rouyer2020tsallis} and RL literature \citep{xu2021fine,simchowitz2019non}.

As \textbf{our third contribution} and our main technical results, Sec.~\ref{sec:PVI_and_ConstantRegret} establishes provable benefits of Frw.~\ref{alg:general_learning_framework} by proposing a new algorithmic framework and showing $\Regret(\algE)$ is constant and independent of the number of episodes $K$, which is in sharp contrast with the $\Omega(\log K)$ regret lower bound for any algorithms in the standard online setting that do not leverage the tiered structure. Specifically, we use Pessimistic Value Iteration (PVI) as $\algE$ for exploitation to interact with $\groupE$, while $\algO$ can be arbitrary online algorithms with near-optimal regret.
Concretely, we first study stochastic MABs as a warm-up, where we choose $\algE$ to be LCB (Lower Confidence Bonus), a degenerated version of PVI in bandits, and choose UCB (Upper Confidence Bonus) as $\algO$ for a concrete case study.
We prove that $\algE$ can achieve constant pseudo-regret $\tilde{O}(\sum_{\Delta_i > 0}  (A-i)\Big(\frac{1}{\Delta_i}- \frac{\Delta_i}{\Delta_{i-1}^2}\Big))$ with $A$ being the number of actions and $\Delta_i$'s being the gaps with $\Delta_1 \geq...\geq\Delta_{A-1}\geq\Delta_A = 0$, 
while $\algO$ is near-optimal due to the regret guarantee of UCB.
After that, Sec.~\ref{sec:Tabular_RL}  extends the success of PVI to tabular MDPs, and establishes results that apply to a wide range of online algorithms $\algO$ with near-optimal regret.
Although the benefits of pessimism have been widely recognized in offline RL \citep{jin2021pessimism}, to our knowledge, we are the first to study PVI in a gap-dependent online setting. We also contribute several novel techniques for overcoming the difficulties in achieving constant regret, and defer their summary to Sec.~\ref{sec:Tabular_RL}.
Moreover, in Appx.\ref{appx:experiments}, we report experiment results to demostrate the advantage of leveraging tiered structure as predicted by theory.

\textbf{Closest Related Work}\label{sec:related_work}
Due to space limit, we only discuss the closest related work here and defer the rest to Appx.~\ref{appx:detailed_related_work}.
To our knowledge, there is no previous works on leveraging tiered structure in MDPs. In the bandit setting, there is a line of related works studying decoupling exploration and exploitation \citep{avner2012decoupling,rouyer2020tsallis}, where \citep{rouyer2020tsallis} studied ``best of both worlds'' methods and reported a similar constant regret. 
First, in stochastic bandits, there are many cases when our result is tighter than theirs,
(see a detailed comparison in Sec.~\ref{sec:analysis_for_MAB}), 
and more importantly, our methods can naturally extend to RL (i.e., MDPs), whereas a similar extension of their techniques can run into serious difficulties:
they relied on \textit{importance sampling} to provide unbiased off-policy estimation for policy value, which incurs the infamous ``curse of horizon'', a.k.a., a sample complexity \textit{exponential} in the planning horizon $H$ in long-horizon RL (see examples in Sec.~2 in \citep{liu2018breaking}). Our approach overcomes this difficulty by developing a pessimism-based learning framework, which is fundamentally different from their approach and requires several novel techniques in the analyses. 
Second, they did not provide any guarantee for the regret of exploration algorithm, whereas in our results the regret of $\algO$ can be near-optimal, which we believe is also important as the experience of users in $\groupO$ also matters in many of our motivating applications. Third, their bandit results require a unique best arm, whereas we allow the optimal arms/policies to be non-unique, which can cause non-trivial difficulties in the analyses as we will discuss in Sec.~\ref{sec:tabular_RL_main_analysis}.

%
%
%
%

\section{Preliminary and Problem formulation}\label{sec:learning_framework}
\textbf{Stochastic Multi-Armed Bandits (MABs)}~~The MAB model consists of a set of arms $\cA=\{1,2,...,A\}$. When sampling an arm $i\in\cA$, the agent observes a random variable $r_i\in[0,1]$. We use 
$\mu_i = \EE[r_i]$ to denote the mean value for arm $i$ for each $i\in\cA$.
We allow the optimal arms to be non-unique.
For simplicity of notation, we assume the arms are ordered such that $\mu_1 \leq \mu_2...\leq \mu_A$.

\textbf{Finite-Horizon Tabular Markov Decision Processes (MDPs)}~~For the reinforcement learning (RL) setting, we consider the episodic tabular MDPs denoted by $M(\cS, \cA, H, P, r)$, where $\cS$ is the finite state space, $\cA$ is the finite action space, $H$ is the horizon length, and $\mP=\{\mP_h\}_{h=1}^H$ and $r=\{r_h\}_{h=1}^H$ are the time-dependent transition and reward functions, respectively. 
We assume all steps share the state and action space (i.e. $\cS_1=\cS_2...=\cS_H=\cS$, $\cA_1=\cA_2...=\cA_H=\cA$) while the transition and reward functions can be different.
At the beginning of each episode, the environment will start from a fixed initial state $s_1$ (w.l.o.g.). Then, for each time step $h\in[H]$, the agent selects an action $a_h \in \cA$ based on the current state $s_h$, receives the reward $r_h(s_h,a_h)$, and observes the system transition to the next state $s_{h+1}$, until $s_{H+1}$ is reached. W.l.o.g., we assume the reward function $r$ is deterministic and our results can be easily extended to handle stochastic rewards.

A time-dependent policy is specified as $\pi=\{\pi_1,\pi_2,...,\pi_H\}$ with $\pi_h: \cS\to \Delta(\cA)$ for all $h\in[H]$. Here $\Delta(\cA)$ denotes the probability simplex over the action space.
With a slight abuse of notation, when $\pi_h$ is a deterministic policy, we use $\pi_h:\cS\to \cA$ to refer to a deterministic mapping. $V^\pi_h(s)$ and $Q^\pi_h(s,a)$ denote the value function and Q-function at step $h\in[H]$, which are defined as:
    $
    V^\pi_h(s)=\EE[\sum_{h'=h}^H r_{h'}(s_{h'},a_{h'})|s_h=s,\pi],\quad Q^\pi_h(s,a)=\EE[\sum_{h'=h}^H r_{h'}(s_{h'},a_{h'})|s_h=s,a_h=a,\pi].
    $

We use $V^{*}_h(\cdot):=\max_\pi V^\pi_h(\cdot)$ and $Q^{*}_h(\cdot,\cdot)=\max_\pi Q^{\pi}_h(\cdot,\cdot)$ to refer to the optimal state/action-value functions, and  $\Pi^*(s_h):=\{a_h|Q^{*}(s_h,a_h)=V^*_h(s_h)\}$ to denote the collection of all optimal actions at state $s_h$. With an abuse of notation, we define $\Pi^* := \{\pi: V_1^\pi(s_1) = V_1^*(s_1)\}$, i.e., the set of policies that maximize the total expected return. In this paper, when we say that the MDP has ``unique optimal (deterministic) policy'', it is up to the occupancy measure, that is, all policies in $\Pi^*$ share the same state-action occupancy $d^{\pi}(s_h,a_h):=\Pr(S_h=s_h,A_h=a_h|S_1=s_1,\pi)$ for all $h\in[H],s_h\in\cS_h,a_h\in\cA_h$.
In the following, we use $|\Pi^*|=1$ to refer to the case of unique optimal (deterministic) policy, where the cardinality of $\Pi^*$ is counted up to the equivalence of occupancies.
Besides, for any function $V:\cS\rightarrow\mR$, we denote $\mP_h V(s_h,a_h) := \EE_{s_{h+1}\sim \mP_h(\cdot|s_h,a_h)}[V(s_{h+1})]$.


\textbf{Gap-Dependent Setting}~~We follow the standard formulation of gap-dependent setting in previous bandits \citep{lattimore2020bandit} and RL literature \citep{simchowitz2019non, xu2021fine, dann2021beyond}. 
In bandits, the gap w.r.t.~arm $i$ is defined as $\Delta_i:=\max_{j\in[A]}\mu_j - \mu_i,\forall i \in [A]$, and we assume that there exists a strictly positive value $\Delta_{\min}$ such that, either $\Delta_i = 0$ or $\Delta_i \geq \Delta_{\min}$.
For tabular RL setting, we define $\Delta_h(s_h,a_h):=V^*(s_h)-Q^*(s_h,a_h),\forall h\in[H],s_h\in\cS_h,a_h\in\cA_h$. We use the same notation $\Delta_{\min}$ to refer to the minimal gap in tabular setting and assume that either $\Delta_h(s_h,a_h)=0$ or $\Delta_h(s_h,a_h) \geq \Delta_{\min}$. 

\textbf{Performance Measure}~~We use 
Pseudo-Regret defined below to measure the performance of $\algO$ and $\algE$. 
In the following, we will also use ``exploitation regret'' to refer to $\Regret_K(\algE)$.
\begin{definition}[Pseudo-Regret]\label{def:Pseduo_regret}
We define the regret of $\algO$ and $\algE$ to be:
\begin{align*}
    \Regret_K(\algO):=&\EE\left[\sum_{k=1}^K V_1^*(s_1)-V_1^{\piO_k}(s_1)\right];\quad 
    \Regret_K(\algE):=\EE\left[\sum_{k=1}^K V_1^*(s_1)-V_1^{\piE_k}(s_1)\right],
\end{align*}
where $\piO_k$ and $\piE_k$ are generated according to the procedure in Framework \ref{alg:general_learning_framework} and the expectation is taken over the randomness in data generation and algorithms.
\end{definition}

\section{Lower Bound of $\Regret(\algE)$ without Gap Assumption}\label{sec:Regret_lower_bound_without_gap_assump}
In this section, we show that, 
in normal tabular RL setting, for arbitrary algorithm pair $(\algO, \algE)$, even if we do not constrain $\algO$ to be near-optimal, the regret of $\algE$ has the same minimax lower bound as algorithms in online setting. We defer the formal statement and proof to Appendix~\ref{appx:lower_bound_for_general_tabular_MDP}.
\begin{restatable}{theorem}{ThmLBNormalMDP}[Lower Bound for $\algE$ without Gap Assumption]\label{thm:Regret_Lower_Bound_of_AlgP}
    There exist positive constants $c,\epsilon_0, \delta_0$, such that, for arbitrary $S \geq 4, A \geq 2, H \geq 2, K \geq \frac{c}{\epsilon_0^2}H^3SA$, and arbitrary algorithm pair $(\algO, \algE)$, there must exist a hard tabular MDP $M_{hard}$,
        $
        \EE_{(\algO, \algE), M_{hard}}\left[\sum_{k=1}^K V^* - V^{\piE_k}\right] \geq \delta_0\sqrt{cH^3SAK},
        $
    where the expectation is taken over the randomness of algorithms and MDP.
\end{restatable}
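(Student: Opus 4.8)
The plan is to reduce the claim to a minimax lower bound of the standard online type, adapted so that it applies to the exploitation policies $\piE_1,\dots,\piE_K$ even though the data is gathered by the unconstrained $\algO$. The single structural fact I will exploit is that, by Framework~\ref{alg:general_learning_framework}, each $\piE_k$ is a (possibly randomized) function of $D_k$ alone, and $D_k$ consists of exactly $k-1$ trajectories collected from $\groupO$. Thus, no matter how cleverly $\algO$ explores, the information available to $\algE$ at episode $k$ is carried by at most $k-1$ episodes of interaction with $M_{hard}$; the tiered protocol gives $\algE$ no side information beyond this sampling budget. The whole argument then becomes: a map from a dataset of $k-1$ trajectories to a policy cannot, on a worst-case instance, identify an $\epsilon$-optimal policy when $\epsilon$ is of order $\sqrt{H^3SA/K}$, and this failure must persist across enough episodes to force $\Omega(\sqrt{H^3SAK})$ cumulative regret.

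First I would fix the standard hard family used for episodic tabular RL lower bounds (e.g., a layered MDP with a bandit-like choice of one advantageous action of gap $\epsilon$ embedded at each of $\Theta(SA)$ informative $(s,a,h)$ components, with the chain structure supplying the $H^3$ amplification). Parametrize the family by the hidden configuration $\theta$ of advantageous actions, so that $V_1^*(s_1)-V_1^{\pi}(s_1)$ is lower bounded by $\epsilon$ times the occupancy-weighted number of informative components at which $\pi$ disagrees with $\theta$. This converts the per-episode suboptimality $\EE[V_1^*(s_1)-V_1^{\piE_k}(s_1)]$ into a sum, over informative components, of the probability that $\piE_k$ chooses the wrong action there, where that probability is taken over the randomness of $\algO$'s data collection and of $\algE$.

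Next I would run the standard change-of-measure / KL argument, but applied to the law of the full dataset $D_K$ rather than to the law of a self-collected trajectory sequence. For any pair of configurations differing in a single component, the KL divergence between the induced laws of $D_K$ is controlled by the expected number of visits that $\algO$ pays to that component, and the expected total visitation across all $\Theta(SA)$ components is capped by the global budget $\Theta(KH)$. An Assouad-type (or Fano-type) aggregation over the components then shows that, for the worst-case $\theta$, a constant fraction of components remain statistically indistinguishable from the data, so $\piE_k$ must err on them with constant probability. Summing the resulting per-episode regret over $k=1,\dots,K$ and choosing the gap $\epsilon\asymp\sqrt{H^3SA/K}$ (which is $\le\epsilon_0$ precisely under the hypothesis $K\ge \tfrac{c}{\epsilon_0^2}H^3SA$) yields the claimed $\delta_0\sqrt{cH^3SAK}$ bound; earlier episodes, holding even smaller datasets, only strengthen the lower bound.

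The hard part will be making the information bound robust to the fact that $\algO$ is completely unconstrained and may explore adaptively and even optimally. A naive single-component perturbation fails: $\algO$ could pour all of its $K$ episodes into that one component, drive the KL large, and let $\algE$ identify it, collapsing the bound to $O(1/\epsilon)$. The essential device is therefore the multi-component construction together with the budget-sharing inequality $\sum_{\text{components}}\EE[\text{visits}]\le \Theta(KH)$: because there are $\Theta(SA)$ components but only $\Theta(KH)$ samples to spread among them, no exploration strategy can render all components identifiable within $K$ episodes, so $\algE$ is forced to carry $\Omega(\sqrt{H^3SAK})$ regret regardless of what $\algO$ does. Extra care is needed because the optimal policy may be non-unique and because $\piE_k$ is indexed by a growing dataset; both are handled by phrasing the aggregation over components and over episodes rather than relying on a single terminal identification event.
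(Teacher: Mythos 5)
Your plan is sound, but it takes a genuinely different route from the paper. The paper proves this theorem by a short black-box reduction to an existing PAC lower bound (Theorem C.1 of Dann and Brunskill, restated as Thm.~\ref{thm:PAC_Lower_Bound}): given any pair $(\algO,\algE)$, it defines a PAC algorithm that runs the pair for $K$ episodes and outputs $\piE_{K_{\mathrm{PAC}}}$ for a uniformly random index $K_{\mathrm{PAC}}\in[K]$; the expected suboptimality of this output is exactly $\frac{1}{K}\sum_{k=1}^K\EE[V^*-V^{\piE_k}]$, and Markov's inequality at accuracy $\bar\epsilon=\sqrt{cH^3SA/K}$ (admissible since $K\le cH^3SA/\bar\epsilon^2$ and $\bar\epsilon<\epsilon_0$) combined with the PAC lower bound forces this average to be at least $\delta_0\bar\epsilon$, i.e., cumulative regret at least $\delta_0\sqrt{cH^3SAK}$. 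You instead propose to re-derive the underlying statistical fact from scratch: build the multi-component hard family, control the KL divergence of the law of $D_k$ via $\algO$'s visitation counts, and aggregate with an Assouad-type budget-sharing argument. Both arguments hinge on the same structural observation --- $\piE_k$ depends on the instance only through the $k-1$ trajectories in $D_k$, so the obstruction is a sample-complexity (PAC) one rather than an online-regret one, which is exactly why an unconstrained and adaptively exploring $\algO$ cannot help. What the paper's reduction buys is brevity and safe constants: the delicate instance construction, the change of measure under adaptive data collection, and in particular the $H^3$ amplification (which your sketch defers to ``the standard hard family'') are all inherited from the cited theorem. What your route buys is self-containedness and transparency about where the tiered structure enters, and it would adapt more directly to protocol variants. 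If you carry it out, make sure the Assouad step lower-bounds the average over configurations of the \emph{cumulative} regret, so that a single hard instance is simultaneously hard for every episode $k$; the paper sidesteps this monotonicity-in-$k$ bookkeeping entirely because the uniformly random index averages over all episodes at once.
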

The theorem above is stating that, comparing with the regret lower bound for online algorithms $\tO(\sqrt{H^3SAK})$ in Theorem 9 of \citet{domingues2021episodic}, the exploitation algorithm cannot reduce the dependence on any of parameters $H,S,A,K$ in hard MDPs, even if we allow $\algO$ to sacrifice its performance to gather the best possible data for $\algE$. 
Also, the lower bound would still hold even if we allow both $\algO$ and $\algE$ to additionally use the data $\tauE$ generated by $\piE$.
This negative result implies that without any further assumptions, the separation is not beneficial from a minimax optimality perspective, and we can simply choose both $\algE$ and $\algO$ to be the same near-optimal online algorithm as  without worrying about separating them. 

However, in the next section,
we will show that, in tabular MDPs with strictly positive gaps, in contrast with the $\Omega(\log K)$ lower bound for online algorithms, we can have $\algE$ such that its regret is constant and independent on the number of time horizon $K$, which reveals the fundamental differences between the pure online setting and the Tiered RL setting considered in this paper.




\section{Pessimism in the Face of Uncertainty and Constant Regret}\label{sec:PVI_and_ConstantRegret}
In this section, we consider the gap-dependent setting and contribute to identifying the possibility to achieve constant regret by using pessimistic algorithms for $\algE$.
Intuitively, the main reason why PVI can lead to a constant regret is that the quality of the policy returned by PVI is positively correlated to the accumulation of optimal trajectories in the dataset $D$, which is directly connected with $\Regret(\algO)$.
As a result, on the one hand, the regret minimization objective of $\algE$ coincidentally aligns with the optimality constraint of $\algO$.
On the other hand, thanks to the positive gap assumption, $\piE$ will gradually converge to the optimal policy with high probability when $\algE$ is PVI, so there will be no regret after that.
In Sec. \ref{sec:analysis_for_MAB}, we start with stochastic MAB as a warm-up, and in Sec. \ref{sec:Tabular_RL} we extend our success to tabular RL setting. We defer the proofs in this section to Appendix~\ref{appx:analysis_for_bandit_setting}.
\subsection{Warm-Up: Gap-Dependent Regret Bound for Stochastic Multi-Armed Bandits}\label{sec:analysis_for_MAB}

\begin{algorithm}
    \textbf{Initilize}: $\alpha > 1;\quad N_i(1) \gets 0,~ \hat\mu_i(1)\gets 0,~\forall i \in \cA;\quad f(k):=1+16A^2(k+1)^2$\\
    \For{$k=1,2,...,K$}{
        $\piO_k \gets \arg\max_i \hat\mu_i(k) + \sqrt{\frac{2\alpha\log f(k)}{N_i(k)}},\quad\quad\piE_k \gets \arg\max_i \hat\mu_i(k) - \sqrt{\frac{2\alpha\log f(k)}{N_i(k)}}.$\\
        Interact with $\groupE$ and $\groupO$ by $\piE_k$ and $\piO_k$, and observe reward $r(\piE_k)$ and $r(\piO_k)$, respectively. \\
        \For{$i=1,2,...,A$}{
            $N_{i}(k+1) \gets N_i(k) + \mathbb{I}[\piO_k=i];\qquad \hat{\mu}_i(k+1) \gets \hat{\mu}_i(k)\frac{N_i(k)}{N_i(k+1)}+r(\piO_k)\frac{\mathbb{I}[\piO_k=i]}{{N_i(k+1)}}.$ 
        }
    }
    \caption{UCB-Exploration-LCB-Exploitation}\label{alg:UCB_Explore_LCB_Exploit}
\end{algorithm}
Our main algorithm for bandit setting is shown in Alg \ref{alg:general_learning_framework}, where we consider the UCB algorithm \citep{lattimore2020bandit} as $\algO$ and choose the LCB as $\algE$, which flips the sign of the bonus term in UCB.
We use $N_i(k)$ to denote the number of times that arm $i$ was pulled previous to step $k$, and use $\hat\mu_i$ to record the empirical average of arm $i$. 
Besides, we assume $1/N_i(\cdot)=+\infty$ if $N_i(\cdot)=0$, which implies that at the first $|\cA|$ steps the algorithm will pull each arm one by one.
Moreover, as we will show later, the choice of $\alpha > 1$ is crucial to avoiding dependence on $K$ in $\Regret(\algE)$ with our techniques. 
For Alg. \ref{alg:UCB_Explore_LCB_Exploit}, we have the following guarantee:
\begin{restatable}{theorem}{ThmLCBRegret}[Exploitation Regret]\label{thm:total_regret_UCB_LCB}
    In Algorithm \ref{alg:UCB_Explore_LCB_Exploit}, by choosing arbitrary $\alpha > 1$, there exists an absolute constant $c$, such that, for arbitrary $K \geq 1$, the pseduo-regret of $\algE$ is upper bounded by:
    $
        \Regret_K(\algE) \leq \tilde{O}\left(\frac{A}{\alpha - 1} +\alpha \sum_{\Delta_i > 0}  (A-i)\Big(\frac{1}{\Delta_i}- \frac{\Delta_i}{\Delta_{i-1}^2}\Big)\right)
    $ 
    where $\Delta_0 := \infty$ so $\frac{\Delta_1}{\Delta_{0}^2} = 0$.
\end{restatable}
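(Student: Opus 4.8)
The plan is to reduce the exploitation regret to a per-arm pull count and then bound how often the pessimistic (LCB) rule can be fooled into selecting a suboptimal arm, crucially exploiting that the counts $N_i(k)$ are driven entirely by the near-optimal explorer $\algO$ (UCB). First I would write $\Regret_K(\algE)=\sum_{i:\Delta_i>0}\Delta_i\,\EE[\sum_{k=1}^K \mathbb{I}(\piE_k=i)]$, so it suffices to bound the expected number of times LCB plays each suboptimal arm $i$. I would then fix a clean event $\cE$ on which $|\hat\mu_i(k)-\mu_i|\le \sqrt{2\alpha\log f(k)/N_i(k)}$ holds simultaneously for all arms $i$ and all episodes $k$. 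A Hoeffding bound together with a union bound over the per-arm pull count and over $k$ controls $\Pr[\cE^c]$; the specific choice $f(k)=1+16A^2(k+1)^2$ and $\alpha>1$ are exactly what make the resulting tail $\sum_k\sum_n f(k)^{-\Theta(\alpha)}$ a convergent geometric-type series whose value is $O(1/(\alpha-1))$ per arm, so all regret accrued off $\cE$ is $O(A/(\alpha-1))$, the first additive term.

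Second, on $\cE$ I would prove the key structural fact: if $\piE_k=i$ with $\Delta_i>0$, then every better arm must be under-explored. Indeed, LCB selects $i$ only if its index beats that of each arm $j$ with $\mu_j\ge\mu_i$; on $\cE$ the LCB index of such $j$ is at least $\mu_j-2\sqrt{2\alpha\log f(k)/N_j(k)}$ while that of $i$ is at most $\mu_i$, forcing
\[ N_j(k)\le \frac{8\alpha\log f(k)}{(\Delta_i-\Delta_j)^2}\quad\text{for every } j>i, \]
and in particular $N_A(k)\le 8\alpha\log f(k)/\Delta_i^2=:n_i$ for an optimal arm $A$. This is where the tiered structure pays off: because $D_k$ is filled only by $\tauO_k$, the count $N_A(k)$ is the number of times UCB has played the optimal arm, and the near-optimality of $\algO$ forces $N_A(k)$ to grow quickly, so the condition above can hold for only a $K$-independent number of episodes.

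Third, and this is the crux, I would turn these conditions into the telescoping count. I would organize episodes into ``levels'' by the value $m=N_A(k)$; since the thresholds satisfy $n_1\le\cdots\le n_{A-1}$, when $m\in(n_{i-1},n_i]$ the necessary condition $N_A\le n_j$ already fails for all $j<i$, so the worst suboptimal arm LCB can still select is $i$ and the per-episode exploitation regret in that level is at most $\Delta_i$. Charging one unit to each increment of $N_A$ already yields the base sum $\sum_m \Delta_{i^\ast(m)}=\sum_i \Delta_i(n_i-n_{i-1})=8\alpha\log f\sum_i(\tfrac1{\Delta_i}-\tfrac{\Delta_i}{\Delta_{i-1}^2})$ via a telescoping identity (with $n_0=0$ because $\Delta_0=\infty$). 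The extra factor $(A-i)$ comes from the episodes in level $i$ where UCB plays a suboptimal arm and hence does not advance $N_A$: when $N_A\approx n_i$, the only arms UCB can still prefer over the optimal arm are those whose threshold has not yet been reached, i.e. the $\le A-i$ arms with gap at most $\Delta_i$, so UCB's exploration budget in the level is split among at most $A-i$ competitive arms and inflates the number of episodes spent there by that factor. Combining gives $\Delta_i\,\EE[\sum_k \mathbb{I}(\piE_k=i)]\lesssim \alpha(A-i)(\tfrac1{\Delta_i}-\tfrac{\Delta_i}{\Delta_{i-1}^2})\log f(k)$, and summing over $i$ and adding the clean-event term yields the claim once $\log f(k)=O(\log(A/\Delta_{\min}))$ over the relevant bounded range of $k$ is absorbed into $\tilde O$.

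The main obstacle I anticipate is making the $(A-i)$ inflation rigorous: relating ``how long UCB keeps $N_A$ at a given level'' to the set of still-competitive arms requires carefully coupling LCB's selection rule with the dynamics of UCB's counts, and must use the full conjunction over all $j>i$ from the structural step rather than only the single $j=A$ bound. A secondary difficulty is non-unique optimal arms: with ties in $\mu_A$ the ``optimal arm'' whose count certifies $N_A(k)$ large is not fixed, so the level decomposition and the definition of the competitive set must be phrased with respect to the gap ordering and must allow any maximizing arm to absorb the exploration mass.
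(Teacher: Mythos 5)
Your first two steps match the paper's: the per-arm regret decomposition, the anytime clean event whose total failure mass is $O(A/(\alpha-1))$, and the structural fact that LCB can select a suboptimal arm $i$ only while every better arm $j$ satisfies $N_j(k)\le 8\alpha\log f(k)/(\Delta_i-\Delta_j)^2$ (this is exactly the paper's ``Blessing of Pessimism'', Lem.~\ref{lem:blessing_of_pessimism}). The gap is in your third step, and it is the one you yourself flag. Organizing episodes by the level of $N_A(k)$ for a fixed optimal arm $A$ cannot give the claimed bound: the number of episodes during which $N_A(k)\le n_i$ is not $(A-i)\cdot n_i$ but can be as large as $n_i$ plus the total number of pulls UCB devotes to all other arms in that window, and a near-optimal arm $j$ with a tiny gap can absorb up to $\tilde O(1/\Delta_j^2)\gg 1/\Delta_i^2$ pulls before the single condition on $N_A$ is violated. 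With this accounting the regret charged to the largest-gap arm would scale like $\Delta_1/\Delta_{\min}^2$ rather than $(A-1)/\Delta_1$. Your justification for the $(A-i)$ inflation --- that by the time $N_A$ reaches $n_{i-1}$ only the $\le A-i$ small-gap arms remain ``competitive'' for UCB --- conflates LCB's selection thresholds (conditions on $N_j$ relative to $1/\Delta_i^2$) with UCB's exploration budgets (conditions on $N_j$ relative to $1/\Delta_j^2$), and silently assumes UCB exhausts the high-gap arms before advancing $N_A$, which its adaptive pull order does not guarantee.

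The paper's resolution is the ingredient your sketch is missing: it never tracks the optimal arm's count. It splits the arms into $\Gl_i=\{j:\Delta_j>\Delta_i/2\}$ and $\Gu_i=\{j:\Delta_j\le\Delta_i/2\}$, proves a high-probability \emph{linear} upper bound $N_j(k)\le k/\scalar$ on UCB's counts of $\Gl_i$-arms (Lem.~\ref{lem:upper_bound_of_Nk_geq_k_div_scalar}), and uses pigeonhole ($\sum_j N_j(k)=k$) to conclude that once $k\ge k_i=\tilde O\big(\alpha(\sum_{\iota\in\Gl_i}\Delta_\iota^{-2}+|\Gu_i|\Delta_i^{-2})\big)$, \emph{some} arm in $\Gu_i$ --- not necessarily an optimal one --- has count at least $32\alpha\log f(k)/\Delta_i^2$, which triggers the structural lemma because $(\Delta_j-\Delta_i)^2\ge\Delta_i^2/4$ for $j\in\Gu_i$. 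The $(A-i)$ factor then arises as $|\Gu_{i-1}|\le A-i+1$ in the telescoping of $k_i-k_{i-1}$, not as an inflation of level lengths for $N_A$. This also disposes of your secondary worry about non-unique optimal arms for free, since the pigeonhole does not care which arm in $\Gu_i$ absorbs the exploration mass. To repair your argument you would need to replace the $N_A$-level decomposition with the full conjunction over all of $\Gu_i$ together with this pigeonhole step.
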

Our result implies that by choosing PVI as $\algE$, we can achieve constant regret while keeping $\algO$ near-optimal.
Besides the advantages discussed in the related work paragraph in Sec.~\ref{sec:introduction}, our guarantee is also more favorable in certain cases compared to the $O(\sqrt{\frac{A}{\Delta_{\min}}}\sqrt{\sum_{\Delta_i > 0}\frac{1}{\Delta_i}})$ result in \citet{rouyer2020tsallis}: while 
it is not easy to verify whether our guarantee dominates theirs, in many cases ours can be strictly better (or at least no worse) than theirs. For example, consider the following two representative cases: $\Delta_1=\Delta_2=...\Delta_{A-1}=\Delta_{\min}$ (uniform gap) and $\Delta_1=\Delta_2=...=\Delta_{A-2}\gg\Delta_{A-1}=\Delta_{\min}$ (small last gap); our result achieves $\tO(\frac{A}{\Delta_{\min}})$ and $\tO(\frac{1}{\Delta_{\min}})$, respectively, in contrast to their $\tO(\frac{A}{\Delta_{\min}})$ and $\tO(\frac{\sqrt{A}}{\Delta_{\min}})$. 

\textbf{Proof Sketch}: The proof consists of two novel technique lemmas with a carefully chosen failure rate $\delta_k \sim O(1/k^{\Theta(\alpha)})$ so that the accumulative failure probability $\sum_{k=1}^\infty\delta_k<+\infty$. 
The first one is Lem. \ref{lem:blessing_of_pessimism}, where we show that w.p.~$1-\delta_k$, LCB will not prefer $i$ with $\Delta_i > 0$ as long as another better arm has been visited enough times in the dataset.
The second step is to identify a key property of UCB algorithm as stated in Lem. \ref{lem:upper_bound_of_Nk_geq_k_div_scalar}, where we provide a high probability upper bound that $N_i(k) \leq k / \lambda$ if $k\geq \tTheta({\lambda}/{\Delta_i^2})$ for arbitrary $\lambda \in [1, 4A]$, and it serves to indicate that the condition required by the success of LCB is achievable as long as $k$ is large enough 
\footnote{Comparing with results in Thm. 8.1 of \citep{lattimore2020bandit}, although our upper bounds of $N_i(k)$ is linear w.r.t. $k$ rather than $\log$ scale, we want to highlight that ours hold with high probability $O(1-k^{-\Theta(\alpha)})$ while \citep{lattimore2020bandit} only upper bounded the expectation.}.
\begin{restatable}{lemma}{LemBlessingLCB}[Blessing of Pessimism]\label{lem:blessing_of_pessimism}
    With the choice that $f(k)=1+16A^2(k+1)^2$, for arbitrary $i$ with $\Delta_i > 0$, for the LCB algorithm in Alg \ref{alg:UCB_Explore_LCB_Exploit}, and arbitrary $j$ satisfying $\Delta_j < \Delta_i$, we have:
        $
        \Pr\left(\{i = \piE_k\}\cap \{\Delta_j < \Delta_i\} \cap \left\{N_j(k) \geq \frac{8\alpha\log f(k)}{(\Delta_j - \Delta_i)^2}\right\}\right) \leq \frac{2}{k^{2\alpha}}.
        $
\end{restatable}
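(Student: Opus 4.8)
The plan is to translate the selection event $\{i=\piE_k\}$ into a comparison between lower confidence bounds and then show that, under the stated sample-count condition on arm $j$, this comparison can only hold when one of two Hoeffding concentration events fails. Throughout, abbreviate $\delta := \Delta_i - \Delta_j = \mu_j - \mu_i > 0$ and write $b_\ell(k):=\sqrt{2\alpha\log f(k)/N_\ell(k)}$ for the confidence width of arm $\ell$. Since $\piE_k$ is the maximizer of the LCB index, the event $\{i=\piE_k\}$ implies $\hat\mu_i(k)-b_i(k)\geq \hat\mu_j(k)-b_j(k)$. Moreover, on the event $\{N_j(k)\geq 8\alpha\log f(k)/\delta^2\}$ we have $b_j(k)\leq \delta/2$ directly by substituting the lower bound on $N_j(k)$.

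First I would set up the two ``good'' concentration events: (a) the LCB index of arm $i$ is a valid lower bound, i.e.\ $\hat\mu_i(k)-b_i(k)\leq \mu_i$; and (b) the empirical mean of arm $j$ is not too pessimistic, i.e.\ $\hat\mu_j(k) > \mu_j - \delta/2$. The key observation is that (a), (b), and $b_j(k)\le\delta/2$ are jointly incompatible with $\{i=\piE_k\}$: combining (b) with $b_j(k)\le\delta/2$ gives that arm $j$'s LCB index exceeds $\mu_j-\delta=\mu_i$, while (a) gives that arm $i$'s LCB index is at most $\mu_i$; hence arm $j$'s index is strictly larger, contradicting that $i$ is the LCB-maximizer. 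Consequently the target event is contained in the union of the complement of (a) and the complement of (b) intersected with $\{N_j(k)\geq 8\alpha\log f(k)/\delta^2\}$, and it remains to bound these two failure probabilities.

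Next I would control each failure by Hoeffding's inequality combined with a union bound over the (random) number of pulls. For arm $i$, the complement of (a) is $\{\hat\mu_i(k)-\mu_i > b_i(k)\}$; because the confidence width is exactly calibrated to the Hoeffding deviation, for each fixed count $n$ the probability that the first-$n$-pull average exceeds $\mu_i$ by $\sqrt{2\alpha\log f(k)/n}$ is at most $\exp(-4\alpha\log f(k))=f(k)^{-4\alpha}$, and a union bound over $n\in\{1,\dots,k\}$ (recall $N_i(k)\le k$, and an unpulled arm has LCB index $-\infty$) yields $k\,f(k)^{-4\alpha}$. For arm $j$, on $\{N_j(k)\geq 8\alpha\log f(k)/\delta^2\}$ the deviation $\delta/2$ is large enough that $\exp(-n\delta^2/2)\leq f(k)^{-4\alpha}$ for every $n$ above the threshold, so bounding the at most $k$ relevant terms by the largest again gives $k\,f(k)^{-4\alpha}$. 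Adding the two contributions yields a bound of $2k\,f(k)^{-4\alpha}$ on the target probability.

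Finally, I would plug in $f(k)=1+16A^2(k+1)^2\geq (k+1)^2 > k^2$ to get $f(k)^{-4\alpha}\le k^{-8\alpha}$, so that $2k\,f(k)^{-4\alpha}\le 2k^{1-8\alpha}\le 2k^{-2\alpha}$ whenever $6\alpha\ge 1$, which certainly holds for $\alpha>1$; this matches the claimed bound. I expect the main obstacle to be the concentration with a \emph{random} number of samples $N_j(k)$, namely the coupling between the count and the empirical mean: the clean fix is the union bound over all possible counts $n$, which costs only a polynomial factor $k$. The complementary design point is that the quadratic-in-$k$ choice of $f(k)$ supplies exactly the polynomial slack needed to absorb this factor while preserving the target rate $k^{-2\alpha}$; note the role of $A^2$ inside $f(k)$ is not needed here (it only shrinks the bound) and is reserved for the union bound over arms in the proof of Theorem~\ref{thm:total_regret_UCB_LCB}.
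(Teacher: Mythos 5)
Your proof is correct and follows essentially the same route as the paper's: translate $\{i=\piE_k\}$ into an LCB-index comparison, use the threshold on $N_j(k)$ to make $b_j(k)\le(\Delta_i-\Delta_j)/2$, and reduce the event to two one-sided Hoeffding deviation events (one for arm $i$ exceeding its upper confidence width, one for arm $j$ falling below its mean), each absorbed into the $k^{-2\alpha}$ rate by the quadratic growth of $f(k)$. If anything, you are more explicit than the paper about the union bound over the random pull count $N_\ell(k)$, which the paper's one-line appeal to Azuma--Hoeffding leaves implicit.
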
 
\begin{restatable}{lemma}{LemLBofNiNew}[Property of UCB]\label{lem:upper_bound_of_Nk_geq_k_div_scalar}
    With the choice that $f(k)=1+16A^2(k+1)^2$, there exists a constant $c$, for arbitrary $i$ with $\Delta_i > 0$ and arbitrary $\scalar\in[1, 4A]$, in UCB algorithm, we have:
        $
        \Pr(N_i(k) \geq \frac{k}{\scalar}) \leq \frac{2}{k^{2\alpha-1}},\quad\forall k \geq \scalar + c\cdot\frac{\alpha \scalar}{\Delta_i^2} \log(1+\frac{\alpha A}{\Delta_{\min}}).
        $
\end{restatable}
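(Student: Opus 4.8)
The plan is to combine a deterministic ``clean-event'' argument with a union bound that is deliberately restricted to \emph{late} rounds; this restriction is exactly what upgrades the usual $O(1)$ additive failure probability of UCB analyses into a bound that decays in $k$. First I would record the following deterministic fact. Fix a round $t$ and suppose the two one-sided concentration inequalities $\mu_A\le \hat\mu_A(t)+\sqrt{2\alpha\log f(t)/N_A(t)}$ (an optimal arm is not underestimated) and $\hat\mu_i(t)\le \mu_i+\sqrt{2\alpha\log f(t)/N_i(t)}$ (arm $i$ is not overestimated) both hold. If UCB pulls $i$ at round $t$, then its index dominates that of arm $A$, so $\hat\mu_i(t)+b_i(t)\ge \hat\mu_A(t)+b_A(t)\ge \mu_A=\mu_i+\Delta_i$; chaining with the second inequality gives $2b_i(t)\ge\Delta_i$, i.e. $N_i(t)\le 8\alpha\log f(t)/\Delta_i^2$. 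Thus whenever concentration holds at a round where $i$ is pulled, arm $i$'s count is at most $8\alpha\log f(t)/\Delta_i^2$.

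Next I would reduce the event $\{N_i(k)\ge k/\lambda\}$ to a concentration failure at a large round. Let $\tau<k$ be the last round at which $i$ is pulled; then $N_i(\tau)=N_i(k)-1\ge k/\lambda-1$, and since each round contributes one pull, $\tau\ge N_i(\tau)+1\ge k/\lambda$. The threshold hypothesis is designed precisely so that $N^{*}:=8\alpha\log f(k)/\Delta_i^2+1<k/\lambda$; verifying this is a routine ``log-linearization'' computation: using $f(k)\le 17A^2(k+1)^2$ and $\log(k+1)\le \eta(k+1)+\log(1/\eta)$ with $\eta\asymp\Delta_i^2/(\alpha\lambda)$, one absorbs half of the linear term $k/\lambda$ and is left with a $k$-independent requirement of the stated form $k\ge \lambda+c\,\tfrac{\alpha\lambda}{\Delta_i^2}\log(1+\tfrac{\alpha A}{\Delta_{\min}})$. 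Consequently $N_i(\tau)\ge k/\lambda-1>N^{*}-1\ge 8\alpha\log f(\tau)/\Delta_i^2$, contradicting the deterministic fact; hence at round $\tau$ \emph{either} the optimal-arm inequality \emph{or} the arm-$i$ inequality fails.

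Finally I would bound the probability of these two failures, crucially using that $\tau\ge k/\lambda$ and $N_i(\tau)\ge k/\lambda-1$ are both large. For the arm-$i$ failure I would index by the large sample count $s=N_i(\tau)\ge k/\lambda-1$; using $f(\tau)\ge f(s)$ to replace the threshold and applying Hoeffding to $\hat\mu_{i,s}$ gives a per-$s$ probability $\le f(s)^{-4\alpha}$, and summing over $s\ge k/\lambda-1$ with $f(s)\ge 16A^2s^2$ yields a tail of order $(k/\lambda)^{1-8\alpha}$. For the optimal-arm failure I would instead index by the large round $\tau\ge k/\lambda$ and its count $s_A=N_A(\tau)$: here $s_A$ cancels in the Hoeffding exponent $2s_A\cdot(2\alpha\log f(\tau)/s_A)=4\alpha\log f(\tau)$, giving a per-round probability $\le f(\tau)^{-4\alpha}$, so unioning over the at most $\tau$ values of $s_A$ and over $\tau\ge k/\lambda$ gives a tail of order $(k/\lambda)^{2-8\alpha}$. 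Since $\alpha>1$ we have $1-8\alpha\le 1-2\alpha$ and $2-8\alpha\le 1-2\alpha$, so for $k\ge1$ both tails are dominated by $k^{-(2\alpha-1)}$, and the $A$- and $\lambda$-dependent prefactors collapse (e.g. $(16A^2)^{-4\alpha}(4A)^{8\alpha-1}=1/(4A)$), so each failure contributes at most $k^{-(2\alpha-1)}$ and their sum is at most $2/k^{2\alpha-1}$.

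I expect the main obstacle to be the bookkeeping in this last step: one must index the two arms differently (by sample count for $i$, by round for $A$) so that the randomness of $N_i(\tau)$ and $N_A(\tau)$ is handled without an extra factor that would spoil the $k$-dependence, and one must track the $A$- and $\lambda$-dependent constants to confirm they cancel into the clean $2/k^{2\alpha-1}$. The conceptual crux, which distinguishes this from a standard expected-regret UCB bound, is the observation in the reduction step that $\tau\ge k/\lambda$: only because the union bound may be restricted to late rounds does the failure probability decay in $k$ rather than remain an $O(1)$ constant.
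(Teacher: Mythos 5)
Your proposal is correct and follows essentially the same route as the paper's proof: both reduce the event $\{N_i(k)\ge k/\lambda\}$ to a one-sided concentration failure (for arm $i$ or for the optimal arm) occurring at a round no earlier than roughly $k/\lambda$, and both obtain the $2/k^{2\alpha-1}$ rate precisely because the union bound runs only over these late rounds, where the per-round failure probability is already of order $(Ak/\lambda)^{-2\alpha}$. The only differences are bookkeeping: the paper unions over the round of the $\lceil k/\lambda\rceil$-th pull of arm $i$ and invokes Azuma--Hoeffding directly, whereas you pin down the last pull and union separately over sample counts and over rounds --- which, if anything, treats the randomness of the optimal arm's count $N_{i^*}(\tau)$ more explicitly than the paper's one-line concentration step does.
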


Directly combining the above two results, we can obtain an upper bound for $\Regret(\algE)$ of order $\tO(A/\Delta_{\min}^{-2})$, which is already independent of $K$.
To achieve better dependence on $\Delta_{\min}$ in the regret, we conduct a finer analysis.
For each arm $i$ with $\Delta_i > 0$, we separate all the arms including $i$ into two groups based on whether its gap exceeds $\Delta_i / 2$: $\Gl_i=\{j:\Delta_j > \Delta_i / 2\}$ and $\Gu_i=\{j:\Delta_j \leq \Delta_i / 2\}$. 
As a result of Lem. \ref{lem:blessing_of_pessimism}, we know that $\piE_k$ will not prefer arm $i$ as long as there exists $j \in \Gu_i$ such that $N_j(k)=\tOmega(4\Delta^{-2}_i)=\tOmega(\Delta_i^{-2})$.
Based on Lem. \ref{lem:upper_bound_of_Nk_geq_k_div_scalar}, we know it is true with high probability, as long as $k \geq \tTheta(A\cdot\Delta_i^{-2})$, since at that time $N_l(k) \leq k / A$ holds for arbitrary $l\in \Gl_i$, which directly implies that $\max_{j:j\in \Gu_i} N_j(k)\geq \tOmega(\Delta_i^{-2})$. Then, combining Lem. \ref{lem:blessing_of_pessimism}, with high probability, the regret resulting from taken arm $i$ cannot be higher than $ \tTheta(A\cdot\Delta_i^{-2}) \cdot \Delta_i = \tTheta(A\cdot\Delta_i^{-1})$, which results in a $\tO(\sum_{\Delta_i > 0} A/\Delta_i)$ regret bound. 
As for the techniques leading to the further improvement in our final result, please refer to Lem. \ref{lem:combining_UCB_with_LCB_new} and the proof of Thm. \ref{thm:total_regret_UCB_LCB} in Appx. \ref{appx:analysis_for_bandit_setting}.

\subsection{Constant Regret of $\algE$ in Tabular MDPs}\label{sec:Tabular_RL}

In this section, we establish constant regret of $\algE$ based on realistic conditions for $\algO$ and $\algE$. 
We highlight the key steps of our analysis and our technical contributions here.

First of all, in Sec.\ref{sec:PVI_and_Property}, we propose the concrete PVI algorithm, and inspired by the clipping trick used for optimistic online algorithms \citep{simchowitz2019non}, we develop a high-probability gap-dependent upper bound for the sub-optimality of $\piE$, which is related to the accumulation of the optimal trajectories in dataset $D_k$.
Secondly, in Sec. \ref{sec:choice_algO}, we first introduce a general condition (Cond. \ref{cond:requirement_on_algO}) for the chocie of $\algO$,
based on which we quantify the accumulation of optimal trajectories in $D_k$ with the regret of $\algO$, and connect the exploration by $\algO$ and the optimality of $\algE$. We also supplement some details about how to relax such a condition and inherit the guarantees by the doubling-trick in Appx. \ref{appx:Doubling_Trick}, which may be of independent interest.
In Sec. \ref{sec:tabular_RL_main_analysis}, i.e. the last part of analysis, we bring the above two steps together and complete the proof. 
However, there is an additional challenge when the tabular MDP has multiple deterministic optimal policies, which is possible when there are non-unique optimal actions at some states. 
We overcome this difficulty by Thm.~\ref{thm:existence_of_well_covered_optpi} about policy coverage. 
To our knowledge, the only paper that runs into a similar challenge is \citep{papini2021reinforcement}, and they bypass the difficulty by assuming the uniqueness of optimal policy. Finally, Section \ref{sec:tabular_RL_Discussion} provide  some interpretation and implications of our results.

\subsubsection{Pessimistic Value Iteration as $\algE$ and its Property}\label{sec:PVI_and_Property}


\begin{algorithm}
    \textbf{Input}: Episode number $K$; Confidence level $\{\delta_k\}_{k=1}^K$; Bonus function $\textbf{Bonus}(\cdot,\cdot)$\\
    \For{$k=1,2,...,K$}{
        $\{b_{k,1}(\cdot,\cdot), b_{k,2}(\cdot,\cdot), ...,b_{k,H}(\cdot,\cdot)\}\gets \textbf{Bonus}(D_k, \delta_k).$ //\blue{Compute bonus function for PVI.}\\
        \For{$h = H,H-1,...,1$}{ 
            \For{$s_h \in \cS_h ,a_h \in \cA_h$}{
                $N_{k,h}(s_h,a_h)\gets$ the number of times $s_h,a_h$ occurs in the dataset $D_k$.\\
                $N_{k,h}(s_h,a_h,s_{h+1})\gets$ the number of times $(s_h,a_h,s_{h+1})$ occurs in the dataset $D_k$.\\
                $
                \hmP_{k,h}(\cdot|s_h,a_h)\gets 
                \begin{cases}
                    0,\quad & \text{if } N_{k,h}(s_h,a_h)=0;\\
                    \frac{N_{k,h}(s_h,a_h,\cdot)}{N_{k,h}(s_h,a_h)},\quad & \text{otherwise}.
                \end{cases}
                $\\
                }
            $\hQ_{k,h}(\cdot,\cdot) \gets \max\{R(\cdot,\cdot)+\hmP_{k,h}\hV_{k,h+1}(\cdot,\cdot) - b_{k,h}(\cdot,\cdot), 0\}.$ \\
            $\hV_{k,h}(\cdot) = \max_{a_h\in\cA}\hQ_{k,h}(\cdot, a_h),\quad \piPVI_{k,h}(\cdot)\gets \argmax_{a} \hQ_{k,h}(\cdot,a).$
        }
        $\piE_k \gets \{\piPVI_{k,1}, \piPVI_{k,2},...\piPVI_{k,H}\}$ \\
        // \blue{\textbf{Step 2}: Use $\algO$ satisfying Cond. \ref{cond:requirement_on_algO} to compute $\piO_k$ for $\groupO$} \\
        $\piO_k \gets \algO(D_k).$\\
        // \blue{\textbf{Step 3}: Sample trajectories and collect new data} \\
        Interact with $\groupE$ and $\groupO$ by $\piE_k$ and $\piO_k$, and observe $\tauE_k$ and $\tauO_k$, respectively. \\
        $D_{k+1} \gets D_k \cup \{\tauO_k\}$. 
    }
    \caption{Tiered-RL Algorithm with Pessimistic Value Iteration as $\algE$}\label{alg:PVI}
\end{algorithm}
The full details of our algorithm for tiered RL setting is provided in Alg. \ref{alg:PVI}, where we use PVI as $\algE$. 
Here we do not specify a concrete \textbf{Bonus} function, but provide general results for a range of qualified bonus functions satisfying Cond. \ref{cond:bonus_term} below. Cond. \ref{cond:bonus_term} can be satisfied by many bonus term considered in online literatures, and we briefly dicuss some examples in Appx.~\ref{appx:choice_of_bonus_term}.


\begin{condition}[Condition on Bonus Term for $\algE$]\label{cond:bonus_term}
    We define the following event at iteration $k\in[K]$ during the running of Alg. \ref{alg:PVI}:
        $
        \cEBk := \bigcap_{\substack{h\in[H],s_h\in\cS_h,a_h\in\cA_h}}\Big\{\{|\hat \mP_{k,h} \hV_{k,h+1}(s_h,a_h) - \mP_h\hV_{k,h+1}(s_h,a_h)| < b_{k,h}(s_h,a_h)\} \cap \{b_{k,h}(s_h,a_h) \leq B_1\sqrt{\frac{\log (B_2/\delta_k)}{N_{k,h}(s_h,a_h)}}\}\Big\}
        $
    where $B_1$ and $B_2$ are parameters depending on $S,A,H$ and $\Delta$ but independent of $\delta_k$, $k$.\footnote{Note that 
    we do not require the knowledge of $\Delta_i$'s to compute $b_{k,h}$.} We assume that, \textbf{Bonus} function satisfies that, in Alg. \ref{alg:PVI}, given arbitrary sequence $\{\delta_k\}_{k=1}^K$ with $\delta_1,\delta_2,...,\delta_K\in (0,1/2)$, at arbitrary iteration $k\in[K]$, we have $\Pr(\cEBk)\geq 1-\delta_k$.
\end{condition}
Next, we provide an upper bound for the sub-optimality gap of $\piPVI_k$ with the clipping operator $\Clip[x|\epsilon]:=x\cdot \mathbb{I}[x\geq \epsilon]$.
Previous upper bounds of PVI \citep[e.g.,~Theorem 4.4 of][]{jin2021pessimism} do not leverage the strictly positive gap and can be much looser when $N_{k,h}$ is large, and directly applying those results to our analysis would result in a regret scaling with $\sqrt{K}$.
\begin{restatable}{theorem}{ThmClipTrick}\label{thm:clipping_trick}
    By running Algorithm \ref{alg:PVI} with confidence level $\delta_k$, a function \textbf{Bonus} satisfying Condition \ref{cond:bonus_term}, and a dataset $D=\{\tau_1,...\tau_k\}$ consisting of $k$ complete trajectories generated by executing a sequence of policies $\pi_1,...,\pi_k$, on the event $\cEB$ defined in Condition \ref{cond:bonus_term}:
    \begin{align}
        V^{*}_1(s_1)-V_1^{\piPVI_k}(s_1) \leq 
        2\EE_{\pi^*}\left[\sum_{h=1}^H \Clip\left[\left. \min \left\{ H, 2B_1\sqrt{\frac{\log (B_2/\delta_k)}{N_{k,h}(s_h,a_h)}} \right\}\right|\epsClip \right] \right].
        \label{eq:sub_opt_clip_case1}
    \end{align}
    where $\pi^*$ 
    can be an arbitrary optimal policy, $\epsClip:= \frac{\Delta_{\min}}{2H+2}$ if $|\Pi^*|=1$ and $\epsClip:=\frac{d_{\min}\Delta_{\min}}{2SAH}$ if $|\Pi^*|>1$, where $d_{\min}:= \min_{\pi\in\Pi^*,h\in[H],s_h\in\cS_h,a_h\in\cA_h} d^{\pi}(s_h,a_h)$ subject to $d^{\pi}(s_h,a_h) > 0$.
\end{restatable}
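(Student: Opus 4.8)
The plan is to combine the standard pessimism analysis of PVI with a gap-dependent clipping argument in the spirit of \citet{simchowitz2019non}, but carried out along the \emph{optimal} trajectory rather than the executed one. First I would establish the two one-sided guarantees that pessimism buys us, by backward induction on the event $\cEBk$. Using the bonus bound $|\hmP_{k,h}\hV_{k,h+1} - \mP_h\hV_{k,h+1}| < b_{k,h}$ together with the truncation at $0$ in the definition of $\hQ_{k,h}$, one shows $\hQ_{k,h}(s,a)\le Q^*_h(s,a)$ and hence $\hV_{k,h}(s)\le V^*_h(s)$ (pessimism), and simultaneously $\hV_{k,h}(s)\le V^{\piPVI_k}_h(s)$ (the pessimistic value never overestimates the value of the greedy policy it induces). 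The latter gives $V^*_1(s_1)-V^{\piPVI_k}_1(s_1)\le V^*_1(s_1)-\hV_{k,1}(s_1)$, reducing the problem to controlling the pessimistic gap $V^*_1-\hV_{k,1}$.

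Second, I would run the usual one-step value-difference recursion \emph{against an optimal policy} $\pi^*$. Defining the surplus $\xi_{k,h}(s,a):= R(s,a)+\mP_h\hV_{k,h+1}(s,a)-\hQ_{k,h}(s,a)$, the greedy inequality $\hV_{k,h}(s)\ge \hQ_{k,h}(s,\pi^*_h(s))$ telescopes to $V^*_1(s_1)-\hV_{k,1}(s_1)\le \EE_{\pi^*}[\sum_{h}\xi_{k,h}(s_h,a_h)]$. On $\cEBk$, inspecting the two branches of the $\max\{\cdot,0\}$ in $\hQ_{k,h}$ shows $0\le \xi_{k,h}\le \min\{H,2b_{k,h}\}\le \min\{H,2B_1\sqrt{\log(B_2/\delta_k)/N_{k,h}}\}$; this already yields the un-clipped bound and is the easy part.

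Third --- and this is the crux --- I would insert the clipping threshold $\epsClip$ using the positive-gap assumption. Because $\Clip[\cdot|\epsClip]$ is monotone in its argument, it suffices to prove $V^*_1-V^{\piPVI_k}_1\le 2\EE_{\pi^*}[\sum_h \Clip[\xi_{k,h}|\epsClip]]$, which I would establish by a backward induction showing, for every state reachable under $\pi^*$, that $V^*_h(s_h)-V^{\piPVI_k}_h(s_h)\le 2\EE_{\pi^*}[\sum_{h'\ge h}\Clip[\xi_{k,h'}|\epsClip]\mid s_h]$. The mechanism: whenever the greedy action $\piPVI_{k,h}(s_h)$ is itself optimal, the step contributes no instantaneous gap and (in the $|\Pi^*|=1$ case, where all optimal actions share the same occupancy) the recursion continues cleanly under $\pi^*$; whenever $\piPVI$ deviates, the incurred gap is $\ge\Delta_{\min}$, and since PVI preferred a suboptimal action its pessimism gap $Q^*_h-\hQ_{k,h}$ at the optimal action --- which by the step-two recursion is at most the accumulated surpluses $\EE_{\pi^*}[\sum_{h'\ge h}\xi_{k,h'}\mid s_h]$ --- must itself exceed $\Delta_{\min}$. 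Choosing $\epsClip=\Delta_{\min}/(2H+2)$ ensures $H\epsClip\le\Delta_{\min}/2$, so clipping removes at most $\Delta_{\min}/2$ of the surviving mass; hence a genuine deviation always leaves enough clipped surplus that, inflated by the factor $2$, it dominates the true gap, which is always either $0$ or $\ge\Delta_{\min}$.

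I expect this clipping step to be the main obstacle, for two reasons. The surplus decomposition lives on the optimal trajectory while the gaps are properties of suboptimal actions, so the bookkeeping that charges a deviation to the clipped surpluses must pass through the pessimism gap $Q^*-\hQ$, and the constants ($2H+2$, the outer factor $2$) have to be tuned so that no genuinely suboptimal value gap is erroneously clipped to zero. The second difficulty is the non-unique case $|\Pi^*|>1$: distinct optimal actions induce distinct occupancies, so propagating the recursion under a single fixed optimal $\pi^*$ no longer matches what PVI may do, and a deviation at a rarely-visited state may go undetected. Here I would fix a particular well-covered optimal policy (the object constructed in Thm.~\ref{thm:existence_of_well_covered_optpi}) and track per-$(s,a)$ occupancy; this is precisely why the threshold degrades to $\epsClip=d_{\min}\Delta_{\min}/(2SAH)$, with the extra $d_{\min}/(SA)$ accounting for the worst-case reachability $d_{\min}$ and the union over the state-action pairs whose optimal-action deviations must be ruled out.
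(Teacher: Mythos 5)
Your treatment of the case $|\Pi^*|=1$ is essentially the paper's proof: underestimation by backward induction on $\cEBk$, the surplus decomposition $V^*_1(s_1)-\hV_{k,1}(s_1)\le\EE_{\pi^*}[\sum_h\surplus_{k,h}(s_h,a_h)]$ with $\surplus_{k,h}\le\min\{H,2b_{k,h}\}$, and then charging the first deviation of $\piPVI_k$ from $\pi^*$ to the clipped surpluses, using the gap to absorb the $(H+1)\epsClip\le\Delta_{\min}/2$ clipping loss and produce the factor $2$. The paper packages this via the clipped value $\ddot V^{\pi^*}$ and first-disagreement events rather than your backward induction, but the mechanism is identical, and your accounting (deviation $\Rightarrow$ pessimism gap at the optimal action $\ge\Delta_{\min}$ $\Rightarrow$ surplus sum from that state $\ge\Delta_{\min}$ $\Rightarrow$ clipping removes at most half of it) is exactly right.

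The case $|\Pi^*|>1$ is where your plan has a genuine gap. The theorem must hold for an \emph{arbitrary} optimal $\pi^*$, and the well-covered policy of Thm.~\ref{thm:existence_of_well_covered_optpi} plays no role here (it enters only later, when the per-episode bounds are accumulated into a regret bound); fixing such a policy would not rescue your per-state induction, because, as you yourself observe, a deviation at a state that $\pi^*$ rarely or never visits cannot be charged step by step. The paper abandons the stepwise argument entirely in this case and uses two facts your sketch does not supply: (i) any \emph{deterministic} suboptimal policy satisfies $V^*_1(s_1)-V^\pi_1(s_1)\ge d_{\min}\Delta_{\min}$ (Lemma~\ref{lem:minimal_gap_for_det_subopt_policy}), so the whole quantity $V^*_1(s_1)-V^{\piPVI_k}_1(s_1)$, being dominated by $\EE_{\pi^*}[\sum_h\surplus_{k,h}]$ and being either $0$ or at least $d_{\min}\Delta_{\min}$, can be clipped globally at $d_{\min}\Delta_{\min}$; and (ii) the deterministic inequality $\Clip[\sum_{i=1}^m a_i\,|\,\epsilon]\le 2\sum_{i=1}^m\Clip[a_i\,|\,\epsilon/(2m)]$ (Lemma~\ref{lem:property_of_clip_operator}), applied to the $m=SAH$ occupancy-weighted terms $d^{\pi^*}(s_h,a_h)\surplus_{k,h}(s_h,a_h)$, which is what actually produces the threshold $d_{\min}\Delta_{\min}/(2SAH)$ and the outer factor $2$ --- not a union bound over state--action pairs at which deviations must be ruled out. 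Without these two ingredients the non-unique case does not close.
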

\subsubsection{Choice and Analysis of $\algO$}\label{sec:choice_algO}
Next, we introduce our general condition for $\algO$ that the $\algO$ can achieve $O(\log k)$-regret with high probability. 
It is worth noting that many existing near-optimal online RL algorithms for gap-dependent settings may not directly satisfy the condition \citep{simchowitz2019non,xu2021fine,dann2021beyond} since they use a fixed confidence interval $\delta$.
In Appx.~\ref{appx:Doubling_Trick}, we will introduce a more realistic abstraction of those algorithms in Cond.~\ref{cond:realistic_requirement_on_algO}, and discuss in more details about how to close this gap with an algorithm framework inspired by the doubling trick. 
\begin{condition}[Condition on $\algO$]\label{cond:requirement_on_algO}
    $\algO$ is an algorithm which
    returns deterministic policies at each iteration, and for arbitrary $\nk\geq 2$, we have:
        $
        \Pr\big(\sum_{\tk=1}^\nk V^*_1(s_1)-V^{\piO_\tk}_1(s_1) > C_1 + \alpha C_2 \log \nk \big) \leq \frac{1}{\nk^\alpha},
        $
    where $C_1,C_2$ are parameters only depending on $S,A,H$ and gap $\Delta$ and independent of $\nk$.
\end{condition}
\textbf{Implication of Condition~\ref{cond:requirement_on_algO} for $\algO$}
Intuitively, low regret implies high accumulation of optimal trajectories in the dataset collected by $\algO$. We formalize this intuition in Thm.~\ref{thm:algO_regret_vs_algP_density} by establishing the relationship between the regret of $\algO$, $d^{\pi^*}$ and $\sum_{\tk=1}^k d^{\piO_\tk}(s_h,a_h)$ (the expectation of $N_{k,h}$).
\begin{restatable}{theorem}{ThmORegretPDensity}\label{thm:algO_regret_vs_algP_density}
    For an arbitrary sequence of deterministic policies $\pi_1,\pi_2,...,\pi_k$, there must exist a sequence of deterministic optimal policies $\pi_1^*, \pi_2^*,...,\pi_k^*$, such that $\forall h\in[H], s_h \in\cS_h, a_h\in\cA_h$:
    \begin{align*}
        \sum_{\tk=1}^k d^{\pi_\tk}(s_h,a_h) \geq \sum_{\tk=1}^k d^{\pi^*_\tk}(s_h,a_h) - \frac{1}{\Delta_{\min}} \Big(\sum_{\tk=1}^k V^*_1(s_1)-V^{\pi_\tk}_1(s_1)\Big).
    \end{align*}
\end{restatable}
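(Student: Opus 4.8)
The plan is to reduce the claim to a per-policy statement and then prove that statement by coupling. Since both sides sum over $\tk$, it suffices to exhibit, for each individual deterministic policy $\pi_{\tk}$, a deterministic optimal policy $\pi^*_{\tk}$ satisfying
\[
d^{\pi^*_{\tk}}(s_h,a_h) - d^{\pi_{\tk}}(s_h,a_h) \leq \frac{1}{\Delta_{\min}}\big(V^*_1(s_1)-V^{\pi_{\tk}}_1(s_1)\big)
\]
for every $(h,s_h,a_h)$; summing over $\tk$ and rearranging then yields the theorem. So I would fix a single deterministic $\pi$ and drop the subscript. I would \emph{construct} $\pi^*$ explicitly: at each $(h,s_h)$ set $\pi^*_h(s_h):=\pi_h(s_h)$ whenever $\pi_h(s_h)$ is already optimal (i.e. $\Delta_h(s_h,\pi_h(s_h))=0$), and otherwise let $\pi^*_h(s_h)$ be any fixed action in $\Pi^*(s_h)$. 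By construction $\pi^*$ selects an optimal action at every state, so a backward induction on the Bellman equation $Q^{\pi^*}_h(s_h,a_h)=r_h(s_h,a_h)+\mP_h V^{\pi^*}_{h+1}(s_h,a_h)$ together with the hypothesis $V^{\pi^*}_{h+1}=V^*_{h+1}$ gives $V^{\pi^*}_h=V^*_h$ for all $h$; hence $\pi^*$ is a genuinely optimal deterministic policy, not merely locally greedy.

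Next I would control the probability that $\pi$ ever visits a state where it acts suboptimally. Let $B_h:=\{s_h:\Delta_h(s_h,\pi_h(s_h))>0\}$. By the standard gap (regret) decomposition and the positive-gap assumption,
\[
V^*_1(s_1)-V^{\pi}_1(s_1)=\sum_{h=1}^H\sum_{s_h} d^{\pi}(s_h)\,\Delta_h(s_h,\pi_h(s_h)) \geq \Delta_{\min}\sum_{h=1}^H\sum_{s_h\in B_h} d^{\pi}(s_h).
\]
A union bound then shows that the probability a $\pi$-trajectory ever enters some $B_h$ obeys $\Pr_\pi(\exists h:s_h\in B_h)\le \sum_h\sum_{s_h\in B_h} d^\pi(s_h)\le \frac{1}{\Delta_{\min}}(V^*_1(s_1)-V^\pi_1(s_1))$.

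Finally I would couple $\pi$ and $\pi^*$ on a common transition seed. Because the two policies agree on every good state, the coupled trajectories are \emph{identical} up to the first time $T$ at which a state in $\bigcup_h B_h$ is reached; in particular $\{T\le h\}$ has the same probability under both policies and equals $\Pr_\pi(\exists h'\le h:s_{h'}\in B_{h'})$. For any $(h,s_h,a_h)$, on the event $\{T>h\}$ both coupled trajectories take the same state–action pair at step $h$, so the only extra mass $\pi^*$ can place on $(s_h,a_h)$ comes from $\{T\le h\}$, giving $d^{\pi^*}(s_h,a_h)\le d^\pi(s_h,a_h)+\Pr(T\le h)\le d^\pi(s_h,a_h)+\frac{1}{\Delta_{\min}}(V^*_1(s_1)-V^\pi_1(s_1))$, which is exactly the per-policy bound.

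The main obstacle is the coupling/occupancy step: one must argue carefully that before hitting a bad state the two policies genuinely coincide (in particular that at the step $h=T$ they still occupy the \emph{same} state even though their actions then diverge), so that the occupancy gap at every $(h,s_h,a_h)$ is controlled by the single scalar $\Pr(T\le h)$ rather than accumulating over the horizon. Verifying that the constructed $\pi^*$ is globally optimal via the backward induction is routine but should be stated explicitly, and the gap decomposition used to bound $\Pr_\pi(\exists h:s_h\in B_h)$ is standard.
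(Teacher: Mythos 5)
Your proposal is correct and follows essentially the same route as the paper: you construct $\pi^*_{\tk}$ by patching $\pi_{\tk}$ with optimal actions exactly as in the paper's Definition~\ref{def:conversion_to_opt_det_policy}, and you bound the probability that $\pi_{\tk}$ ever takes a suboptimal action by $\frac{1}{\Delta_{\min}}(V^*_1(s_1)-V^{\pi_{\tk}}_1(s_1))$ just as the paper does via Lemma~\ref{lem:value_decomposition}. The only divergence is the last step: where the paper derives $d^{\pi^*_{\tk}}(s_h,a_h)-d^{\pi_{\tk}}(s_h,a_h)\le \Pr(\bar{\cE}_{\tk,\pi^*_{\tk}}\given \pi_{\tk})$ analytically by applying its value decomposition to the indicator reward $\mathbb{I}[S_h=s_h,A_h=a_h]$ (Lemma~\ref{lem:relationship_between_occupancy_and_failure_rate}), you obtain the identical inequality by coupling the two deterministic policies until the first visit to a bad state; both arguments are valid and yield the same bound, with yours being slightly more probabilistic and the paper's avoiding any explicit coupling construction.
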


\subsubsection{Main Results and Analysis}\label{sec:tabular_RL_main_analysis}
The main analysis is based on our discussion about the properties of $\algE$ and $\algO$ in previous sub-sections. 
In the following, we first discuss the proof sketch for the case when $|\Pi^*|=1$.
The main idea is to show that the unique $\pi^*$ will be ``well-covered'' by dataset, where we say a policy $\pi^*$ is ``well-covered'' if for each $(s_h,a_h)\in\cS_h\times\cA_h$ with $d^{\pi^*}(s_h,a_h) > 0$, 
$N_{k,h}(s_h,a_h)$ can strictly increase so that the RHS of Eq.\eqref{eq:sub_opt_clip_case1} in Thm.~\ref{thm:clipping_trick} will gradually decay to zero (e.g. $N_{k,h}(s_h,a_h)\geq \tilde{O}(k)$).
To show this, the key observation is that, 
with high probability, $N_{k,h}(s_h,a_h)$ will not deviate too much from its expectation $\sum_\tk d^{\pi_\tk}(s_h,a_h)$ (Lem. \ref{lem:concentration}), and can be lower bounded by $\sum_{\tk=1}^k d^{\pi_\tk^*}(s_h,a_h)-O(\log k)=kd^{\pi^*}(s_h,a_h)-O(\log k)$ as a result of Thm.~\ref{thm:algO_regret_vs_algP_density}.
As a result, the clipping operator in Eq.\eqref{eq:sub_opt_clip_case1} will take effects as long as $k$ is large enough, and $\piPVI_k$ will converge to the optimal policy with no regret. 
All that remains is to show the regret under failure events is also at the constant level because we choose a gradually decreasing failure rate $O(\frac{1}{k^\alpha})$, and $\lim_{K\rightarrow \infty}\sum_{k=1}^K O(\frac{1}{k^\alpha})<\infty$ as long as $\alpha > 1$. 

However, when $|\Pi^*| > 1$, the analysis becomes more challenging.
The main difficulty is that, when the optimal policy is not unique,
it is not obvious about the existence of ``well-covered'' $\pi^*$, since it is not guarantee that how much similarity is shared by the sequence of policies $\pi_1^*,...\pi^*_k$, especially when $|\Pi^*|$ is exponentially large (e.g. $|\Pi^*|=\Omega((SA)^H)$).
We overcome this difficulty by proving the existence of ``well-covered'' policy in the theorem stated below:
\begin{restatable}{theorem}{ThmExistWellCovered}[The existance of well-covered optimal policy]\label{thm:existence_of_well_covered_optpi}
    Given an arbitrary tabular MDP, and an arbitrary sequence of deterministic optimal policies $\pi^*_1,\pi^*_2,...\pi^*_\nk$ ($\pi^*_i$ may not equal to $\pi^*_j$ for arbitrary $1\leq i < j\leq \nk$ when there are multiple deterministic optimal policies), 
    there exists a (possibly stochastic) policy $\pi^*_{\text{cover}}$ 
    such that $\forall h\in[H],\forall (s_h,a_h)\in\cS_h\times\cA_h$ with $d^{\pi^*_{\text{cover}}}(s_h,a_h) > 0$:
    \begin{align*} 
        \sum_{\tk=1}^\nk d^{\pi^*_\tk}(s_h,a_h) \geq \frac{\nk}{2}\cdot \tilde{d}^{\pi^*_{\text{cover}}}(s_h,a_h),~ \text{with}~\tilde{d}^{\pi^*_{\text{cover}}}(\cdot,\cdot):= \max \left\{\frac{d^*_{h,\min}(\cdot,\cdot)}{(|\cZ_{h,\text{div}}|+1)H}, d^{\pi^*_{\text{cover}}}(\cdot,\cdot) \right\}.
    \end{align*}
    where $\cZ_{h,\text{div}}^* := \{(s_h,a_h)\in\cS_h\times\cA_h| \exists \pi^*, \tilde{\pi}^* \in \Pi^*,~ s.t.~ d^{\pi^*}(s_h) > 0,~ d^{\tilde{\pi}^*}(s_h) = 0\}$, and $d^*_{h,\min}(s_h,a_h):=\min_{\pi^*\in\Pi^*}d^{\pi^*}(s_h,a_h)$ subject to $d^{\pi^*}(s_h,a_h) > 0$.
\end{restatable}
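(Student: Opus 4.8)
The plan is to produce $\pi^*_{\text{cover}}$ explicitly as a pruned version of the averaged optimal policy. I would begin with two reductions. First, since every $\pi^*_{\tk}\in\Pi^*$ and the set of optimal occupancy measures is convex, the uniform mixture $\bar d:=\frac1{\nk}\sum_{\tk=1}^{\nk} d^{\pi^*_{\tk}}$ is itself the occupancy of a (stochastic) optimal policy, and $\sum_{\tk=1}^{\nk} d^{\pi^*_{\tk}}(s_h,a_h)=\nk\,\bar d(s_h,a_h)$. Second, by definition of $d^*_{h,\min}$, any optimal policy that puts positive mass on $(s_h,a_h)$ puts mass at least $d^*_{h,\min}(s_h,a_h)$ there; hence, writing $m_h(s_h,a_h):=|\{\tk:d^{\pi^*_{\tk}}(s_h,a_h)>0\}|$, we get $\sum_{\tk} d^{\pi^*_{\tk}}(s_h,a_h)\ge m_h(s_h,a_h)\,d^*_{h,\min}(s_h,a_h)$. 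Because $\tilde d^{\pi^*_{\text{cover}}}$ is a maximum of two terms, the target inequality splits into two requirements on the support of $\pi^*_{\text{cover}}$: a \emph{frequency} bound $m_h(s_h,a_h)\ge \frac{\nk}{2(|\cZ^*_{h,\text{div}}|+1)H}$ (which, through the second reduction, dominates the floor term $\frac{d^*_{h,\min}}{(|\cZ^*_{h,\text{div}}|+1)H}$), and a \emph{comparison} bound $\bar d(s_h,a_h)\ge \tfrac12 d^{\pi^*_{\text{cover}}}(s_h,a_h)$ (which handles the other branch of the max).

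Next I would construct $\pi^*_{\text{cover}}$ as a conservation-preserving sub-flow of $\bar d$ obtained by deleting the flow that enters ``infrequent'' pairs (those violating the frequency bound) and renormalizing. Keeping $\pi^*_{\text{cover}}$ a sub-flow of $\bar d$ whose total deleted mass stays below $\tfrac12$ is exactly what secures the comparison bound, since then the retained flow inflates by a factor of at most $2$ on its support. The structural fact I would exploit is the dichotomy built into $\cZ^*_{h,\text{div}}$: a reachable state that is \emph{not} divergent is reached by \emph{all} $\nk$ policies, so the agreeing population can only thin either at a divergent state or through action-splitting at a non-divergent state, and action-splitting that genuinely changes the downstream occupancy can be charged to divergent states appearing at later layers. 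Starting from $s_1$, which all $\nk$ policies reach, I would route $\bar d$ forward, discard low-frequency branches, and charge each instance of thinning to a divergent pair.

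The step I expect to be the main obstacle is the quantitative accounting that converts this pruning into the factor $(|\cZ^*_{h,\text{div}}|+1)H$ rather than a loss that compounds geometrically over the horizon. Concretely, I would maintain an invariant lower-bounding the surviving fraction of the population along the cover flow, charging each drop to a distinct divergent pair so that the cumulative deletion over the whole trajectory is controlled by the total number of divergent pairs, and then spread that budget across the $H$ layers to obtain the per-layer denominator. Two difficulties are entangled here and make the bookkeeping delicate: (a) transitions are stochastic, so everything must be phrased in terms of the flow $\bar d$ and a conservation- and optimality-preserving surgery of it, not in terms of a single trajectory; and (b) optimal actions are non-unique, so action-splitting at non-divergent states must be ``pushed forward'' and paid for by divergent states at subsequent layers, which is precisely what forces the global $H$ factor and is the regime in which $|\Pi^*|$ can be exponentially large. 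Once the invariant is established, the frequency bound holds on the support by construction, the comparison bound follows from the bounded-inflation property of the sub-flow, and feeding both back through the two reductions recovers the stated inequality.
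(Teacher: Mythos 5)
Your two reductions are exactly the right ones and match the paper's: lower-bounding $\sum_{\tk} d^{\pi^*_{\tk}}(s_h,a_h)$ by (number of policies hitting the pair)$\times d^*_{h,\min}(s_h,a_h)$ for the floor branch of the max, and by $\nk$ times a mixture occupancy for the other branch, with the same threshold $\frac{\nk}{2(|\cZ^*_{h,\text{div}}|+1)H}$. The gap is in the construction of $\pi^*_{\text{cover}}$ itself, which is the heart of the theorem and which you leave as a plan. Your proposal is to perform surgery on the averaged flow $\bar{d}$ --- delete the flow entering infrequent pairs and renormalize --- but (a) the set of trajectories avoiding a prescribed collection of future state-action pairs does not project to a Markov flow, so ``conservation-preserving sub-flow'' is not something you get for free and you never say how to repair conservation; and (b) you correctly observe that a pair can be infrequent without being in $\cZ^*_{h,\text{div}}$ (because of action-splitting at commonly reachable states), and you propose to ``push forward'' and ``charge'' such thinning to later divergent pairs, but this charging scheme --- which you yourself identify as the main obstacle, and which must also avoid geometric compounding over $H$ layers --- is exactly the part that is not carried out. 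As written, the proof does not close.

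The paper's proof sidesteps both difficulties with one move that your proposal is missing: prune at the level of \emph{policy indices} rather than at the level of flow. Declare a pair $(s_h,a_h)$ insufficient if fewer than $\frac{\nk}{2(|\cZ^*_{h,\text{div}}|+1)H}$ of the $\pi^*_{\tk}$ put positive occupancy on it, and discard every index $\tk$ whose policy puts positive occupancy on \emph{any} insufficient pair at \emph{any} layer. A union bound (each insufficient pair indicts few indices, and there are at most $|\cZ^*_{h,\text{div}}|$ insufficient pairs per layer) shows at most $\nk/2$ indices are discarded, so at least $\nk/2$ survive; $\pi^*_{\text{cover}}$ is the uniform mixture of the survivors. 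This is automatically a valid (stochastic) optimal policy --- no flow repair needed --- its support contains no insufficient pair by construction (giving the frequency bound), and $\sum_{\tk} d^{\pi^*_{\tk}} \geq |I^{\text{suff}}|\, d^{\pi^*_{\text{cover}}} \geq \frac{\nk}{2} d^{\pi^*_{\text{cover}}}$ gives the comparison bound. If you want to salvage your route, the cleanest fix is to replace your flow surgery with this index-level pruning; otherwise you must actually exhibit the repaired sub-flow and the charging invariant, which is considerably harder than the counting argument.
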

Here we provide some explanation to the above result. According to the definition, $\cZ^*_{h,div}$ is the set including the state-action pairs which can be covered by some deterministic policies but is not reachable by some other deterministic policies, and therefore $|\cZ^*_{h,div}|\leq SA$ (or even $|\cZ^*_{h,div}| \ll SA$). Besides, $d^*_{h,\min}(s_h,a_h)$ denotes the minimal occupancy over all possible deterministic optimal policies which can hit $s_h,a_h$, and therefore, is no less than $d_{\min}$ defined in Thm.~\ref{thm:clipping_trick}. As a result, we know there exists a ``well-covered'' $\pi^*_{\text{cover}}$, since the accumulative density of its arbitrary reachable states can be lower bounded by $O(k)$. Then, following a similar discussion as the case $|\Pi^*|=1$,
we can finish the proof. We summarize our main result below.
\begin{restatable}{theorem}{ThmUBPVI}\label{thm:UB_Regret_PVI}
    By running an Algorithm satisfying Condition~\ref{cond:requirement_on_algO} as $\algO$, running Alg~\ref{alg:PVI} as $\algE$ with a bonus term function $\textbf{Bonus}$ satisfying Condition~\ref{cond:bonus_term} and $\delta_k = 1/k^\alpha$, for some constant $\alpha > 1$, for arbitrary $K \geq 1$, the exploitation regret of $\algE$ can be upper bounded by:

    (i) When $|\Pi^*| = 1$ (unique optimal deterministic policy):
    \begin{align*}
        \Regret_K(\algE) \leq &O\Big(\sum_{h=1}^H\sum_{\substack{s_h,a_h:\\d^{\pi^*}(s_h,a_h) > 0}} 
        \Big(\frac{C_1+C_2}{\Delta_{\min}}\log \frac{ SAH(C_1+C_2)}{d^{\pi^*}(s_h,a_h) \Delta_{\min}} + \frac{ B_1 H}{\Delta_{\min}}\log \frac{ B_2 H}{d^{\pi^*}(s_h,a_h)\Delta_{\min}}\Big)\Big).
    \end{align*}
    \indent (ii) When $|\Pi^*| > 1$ (non-unique optimal deterministic policies):
    \begin{align*}
        \Regret_K(\algE) \leq O\Big(\sum_{h=1}^H\sum_{\substack{s_h,a_h:\\d^{\pi^*_{\text{cover}}}(s_h,a_h) > 0}} \Big(\frac{C_1+C_2}{\Delta_{\min}}\log \frac{ SAH(C_1+C_2)}{\tilde{d}^{\pi^*_{\text{cover}}}(s_h,a_h)\Delta_{\min}} + \frac{ B_1 SAH}{d_{\min}\Delta_{\min}}\log \frac{ B_2 SAH}{d_{\min}\Delta_{\min}}\Big)\Big).
    \end{align*}
    where 
    $\pi^*_{\text{cover}}$ and $\tilde{d}^{\pi^*_{\text{cover}}}(s_h,a_h)$ are introduced in Theorem \ref{thm:existence_of_well_covered_optpi}.
\end{restatable}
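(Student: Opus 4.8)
The plan is to bound $\Regret_K(\algE)=\sum_{k=1}^K\EE[V_1^*(s_1)-V_1^{\piE_k}(s_1)]$ episode by episode, splitting each expectation over a ``good event'' $\cECk\cap\cEBk\cap\cEOk$ and its complement. Here $\cEBk$ is the bonus event of Cond.~\ref{cond:bonus_term} (so Thm.~\ref{thm:clipping_trick} applies), $\cECk$ is the concentration event of Lem.~\ref{lem:concentration} (so $N_{k,h}$ stays close to $\sum_{\tk\le k}d^{\piO_\tk}$), and $\cEOk$ is the low-regret event of Cond.~\ref{cond:requirement_on_algO} (so $\sum_{\tk\le k}V_1^*(s_1)-V_1^{\piO_\tk}(s_1)\le C_1+\alpha C_2\log k$). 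Since we set $\delta_k=1/k^\alpha$, a union bound makes each good event fail with probability $O(k^{-\alpha})$; as a single episode contributes at most $H$ to the regret, the total contribution of all failure events is at most $O\bigl(H\sum_{k=1}^\infty k^{-\alpha}\bigr)$, which is a finite, $K$-independent constant precisely because $\alpha>1$. It therefore suffices to bound the regret accumulated on the good events.

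On the good events I would chain the three ingredients to lower-bound the visitation counts along an optimal occupancy. First, $\cEOk$ controls the realized regret of $\algO$; feeding the deterministic policies $\piO_1,\dots,\piO_k$ into Thm.~\ref{thm:algO_regret_vs_algP_density} yields optimal deterministic policies $\pi^*_1,\dots,\pi^*_k$ with $\sum_{\tk\le k}d^{\piO_\tk}(s_h,a_h)\ge\sum_{\tk\le k}d^{\pi^*_\tk}(s_h,a_h)-\tfrac{1}{\Delta_{\min}}(C_1+\alpha C_2\log k)$, and combining with $\cECk$ gives $N_{k,h}(s_h,a_h)\gtrsim\sum_{\tk\le k}d^{\pi^*_\tk}(s_h,a_h)-O\bigl(\tfrac{1}{\Delta_{\min}}\log k\bigr)$. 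When $|\Pi^*|=1$ all $\pi^*_\tk$ share the occupancy of the single $\pi^*$, so $\sum_{\tk\le k}d^{\pi^*_\tk}(s_h,a_h)=k\,d^{\pi^*}(s_h,a_h)$ and hence $N_{k,h}(s_h,a_h)\gtrsim k\,d^{\pi^*}(s_h,a_h)-O(\log k)$ on the support of $\pi^*$. Plugging this into Thm.~\ref{thm:clipping_trick} with $\pi^*$ this unique policy, the quantity $2B_1\sqrt{\log(B_2/\delta_k)/N_{k,h}}$ inside the clip is forced below $\epsClip=\Delta_{\min}/(2H+2)$ once $k$ exceeds a threshold $k_0(s_h,a_h)=\tilde O\bigl(\tfrac{C_1+C_2}{d^{\pi^*}(s_h,a_h)\Delta_{\min}}+\tfrac{B_1^2}{d^{\pi^*}(s_h,a_h)\epsClip^2}\bigr)$, after which $\Clip[\cdot\,|\epsClip]=0$ and the episode is regret-free. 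Summing the clip terms (which decay and vanish beyond $k_0$) over $k\le k_0(s_h,a_h)$ and over the at most $SAH$ supported pairs yields the $K$-independent bound of part (i), whose two logarithmic terms correspond to the two thresholds above: the $\algO$-regret correction giving the $\tfrac{C_1+C_2}{\Delta_{\min}}\log(\cdot)$ term, and the clipping/bonus threshold (using $\epsClip\asymp\Delta_{\min}/H$) giving the $\tfrac{B_1H}{\Delta_{\min}}\log(\cdot)$ term.

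The hard part will be part (ii), $|\Pi^*|>1$. Now the $\pi^*_\tk$ extracted from Thm.~\ref{thm:algO_regret_vs_algP_density} need not coincide, and there may be exponentially many deterministic optimal policies, so there is no single $\pi^*$ whose occupancy lower-bounds $\sum_{\tk}d^{\pi^*_\tk}$ uniformly over its support, breaking the clean $\sum_{\tk}d^{\pi^*_\tk}=k\,d^{\pi^*}$ step. I would resolve this with Thm.~\ref{thm:existence_of_well_covered_optpi}, which guarantees a single (possibly stochastic) optimal policy $\pi^*_{\text{cover}}$ satisfying $\sum_{\tk\le k}d^{\pi^*_\tk}(s_h,a_h)\ge\tfrac{k}{2}\,\tilde d^{\pi^*_{\text{cover}}}(s_h,a_h)$ for every $(s_h,a_h)$ it reaches. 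The key point is that $\pi^*_{\text{cover}}$ can be used simultaneously in two roles: as the reference policy in Thm.~\ref{thm:clipping_trick} (legitimate because that bound holds for an arbitrary optimal policy and $\pi^*_{\text{cover}}\in\Pi^*$), and as the occupancy that lower-bounds $N_{k,h}$.

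With $\pi^*_{\text{cover}}$ in hand the argument of part (i) transfers verbatim, replacing $d^{\pi^*}$ by $\tilde d^{\pi^*_{\text{cover}}}$ and using the smaller clipping radius $\epsClip=d_{\min}\Delta_{\min}/(2SAH)$, which is exactly what inflates the bonus threshold and produces the extra $SAH/d_{\min}$ factors in part (ii). The remaining bookkeeping is identical in both cases: the good-event regret is a finite sum of decaying clip contributions cut off at the per-pair threshold $k_0$, and the failure-event regret is the summable $O\bigl(H\sum_k k^{-\alpha}\bigr)$ tail; adding the two gives the stated $K$-independent regret. I expect the only delicate steps to be (a) propagating the high-probability count lower bound through the concentration inequality so that it meshes with the clipping threshold, and (b) verifying the covering claim of Thm.~\ref{thm:existence_of_well_covered_optpi}, which carries the entire difficulty of the non-unique-optimum regime.
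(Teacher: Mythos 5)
Your proposal is correct and follows essentially the same route as the paper: the same good-event decomposition ($\cEBk\cap\cECk\cap\cEOk$ with the summable $O(H/k^\alpha)$ failure tail), the same chaining of Cond.~\ref{cond:requirement_on_algO}, Thm.~\ref{thm:algO_regret_vs_algP_density}, and Lem.~\ref{lem:concentration} to lower-bound $N_{k,h}$ along an optimal occupancy, the same use of Thm.~\ref{thm:clipping_trick} with per-pair thresholds after which the clip vanishes, and the same reliance on Thm.~\ref{thm:existence_of_well_covered_optpi} with $\pi^*_{\text{cover}}$ serving as both the reference policy and the occupancy lower bound in the non-unique case. The two delicate points you flag are exactly where the paper's proof does its work (the intermediate lemma conditioning on good events, and the covering theorem).
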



\subsubsection{Interpretation of Results in Tabular RL}\label{sec:tabular_RL_Discussion}
Recall our objective in Sec.~\ref{sec:introduction} is to establish the benefits of leveraging the tiered structure by showing $\Regret_K(\algE)$ is constant. This contrasts the lower bound of online algorithms that continuously increases with the episode number $K$, which corresponds to the regret suffered by users in $\groupE$ without leveraging the tiered structure, while $\Regret_K(\algO)$ keeps (near-)optimal as before. In Appx.~\ref{appx:experiments}, we also provide some simulation results as a verification of our theoretical discovery.

One limitation of our results is that our bounds have additional dependence on $d^{\pi^*}$ (or even ${1}/{d^{\pi^*}}$) compared to most of the regret bounds in the online setting, although similar dependence on $\log d^{\pi^*}$ also appeared in a few recent works \citep[e.g., $\lambda_h^+$ in Thms. 8 and 9 of][]{papini2021reinforcement}. 
Besides, according to the lower and upper bound of online RL in gap-dependent settings \citep{simchowitz2019non},  $C_1+C_2$ in Cond.~\ref{cond:requirement_on_algO} have dependence on $O(\Delta_{\min}^{-1})$, which implies that in the regret bound in Thm.~\ref{thm:UB_Regret_PVI}, the dependence on $\Delta_{\min}$ would be $O(\Delta^{-2}_{\min})$.
For the former, in Appx.~\ref{appx:LB_Dep_Density}, we prove a lower bound, showing that $\log \frac{1}{d^{\pi^*}}$ is unavoidable when $\algO$ is allowed to behave adversarially without violating Cond.~\ref{cond:requirement_on_algO}; for the latter, we note that in the analysis of MAB setting (Sec.\ref{sec:analysis_for_MAB}), specifying the detailed behavior of $\algO$ can help tighten the bound.
Therefore, we conjecture that our results can be improved by putting more constraints on the behavior of $\algO$, which we leave to future work.

\section{Conclusion}
In this paper, we identify the tiered structure in many real-world applications and study the potential advantages of leveraging it by interacting with users from different groups with different strategies. Under the gap-dependent setting, we provide theoretical evidence of benefits by deriving constant regret for the exploitation policy while maintaining the optimality of the online learning policy.

As for the future work, we propose several potentially interesting directions.
\textbf{(i)} As we mentioned in Section~\ref{sec:tabular_RL_Discussion}, it is worth investigating the possibility of improving the regret bound of $\algE$ by considering a more concrete choice of $\algO$, or maybe other choices for $\algE$.
\textbf{(ii)} It would be interesting to relax our constraint on the optimality of $\algO$ by introducing the notion of budget $C$ as the tolerance on the sub-optimality of $\algO$. As a result, our setting and the decoupling exploration and exploitation setting can be regarded as special cases of a more general framework when $C=0$ and $C=\infty$. 
\textbf{(iii)} We assume that the users from different groups share the same transition and reward function, and it would also be interesting to extend our results to more general settings, where the group ID serves as context and will affect the dynamics \citep{abbasi2014online, modi2018markov}.
\textbf{(iv)} We only consider the setting with two tiers, and it may be worth studying the possibility and potential benefits under the setting with multiple tiers.

\section*{Acknowledgements}
JH's research activities on this work were conducted during his internship at MSRA.  
NJ's last involvement was in December 2021. NJ also acknowledges funding support from ARL Cooperative Agreement W911NF-17-2-0196, NSF IIS-2112471, NSF CAREER award, and Adobe Data Science Research Award. 
The authors thank Yuanying Cai for valuable discussion.

\bibliographystyle{plainnat}
\bibliography{references}

\begin{thebibliography}{33}
\providecommand{\natexlab}[1]{#1}
\providecommand{\url}[1]{\texttt{#1}}
\expandafter\ifx\csname urlstyle\endcsname\relax
  \providecommand{\doi}[1]{doi: #1}\else
  \providecommand{\doi}{doi: \begingroup \urlstyle{rm}\Url}\fi

\bibitem[Abbasi-Yadkori and Neu(2014)]{abbasi2014online}
Yasin Abbasi-Yadkori and Gergely Neu.
\newblock Online learning in mdps with side information.
\newblock \emph{arXiv preprint arXiv:1406.6812}, 2014.

\bibitem[Afsar et~al.(2021)Afsar, Crump, and Far]{afsar2021reinforcement}
M~Mehdi Afsar, Trafford Crump, and Behrouz Far.
\newblock Reinforcement learning based recommender systems: A survey.
\newblock \emph{arXiv preprint arXiv:2101.06286}, 2021.

\bibitem[Auer et~al.(2002)Auer, Cesa-Bianchi, and Fischer]{auer2002finite}
Peter Auer, Nicolo Cesa-Bianchi, and Paul Fischer.
\newblock Finite-time analysis of the multiarmed bandit problem.
\newblock \emph{Machine learning}, 47\penalty0 (2):\penalty0 235--256, 2002.

\bibitem[Avner et~al.(2012)Avner, Mannor, and Shamir]{avner2012decoupling}
Orly Avner, Shie Mannor, and Ohad Shamir.
\newblock Decoupling exploration and exploitation in multi-armed bandits.
\newblock \emph{arXiv preprint arXiv:1205.2874}, 2012.

\bibitem[Azar et~al.(2017)Azar, Osband, and Munos]{azar2017minimax}
Mohammad~Gheshlaghi Azar, Ian Osband, and R{\'e}mi Munos.
\newblock Minimax regret bounds for reinforcement learning.
\newblock In \emph{International Conference on Machine Learning}, pages
  263--272. PMLR, 2017.

\bibitem[Buckman et~al.(2020)Buckman, Gelada, and
  Bellemare]{buckman2020importance}
Jacob Buckman, Carles Gelada, and Marc~G Bellemare.
\newblock The importance of pessimism in fixed-dataset policy optimization.
\newblock \emph{arXiv preprint arXiv:2009.06799}, 2020.

\bibitem[Dann et~al.(2017)Dann, Lattimore, and Brunskill]{dann2017unifying}
Christoph Dann, Tor Lattimore, and Emma Brunskill.
\newblock Unifying pac and regret: Uniform pac bounds for episodic
  reinforcement learning, 2017.

\bibitem[Dann et~al.(2021)Dann, Marinov, Mohri, and Zimmert]{dann2021beyond}
Christoph Dann, Teodor~Vanislavov Marinov, Mehryar Mohri, and Julian Zimmert.
\newblock Beyond value-function gaps: Improved instance-dependent regret bounds
  for episodic reinforcement learning.
\newblock \emph{Advances in Neural Information Processing Systems}, 34, 2021.

\bibitem[Domingues et~al.(2021)Domingues, M{\'e}nard, Kaufmann, and
  Valko]{domingues2021episodic}
Omar~Darwiche Domingues, Pierre M{\'e}nard, Emilie Kaufmann, and Michal Valko.
\newblock Episodic reinforcement learning in finite mdps: Minimax lower bounds
  revisited.
\newblock In \emph{Algorithmic Learning Theory}, pages 578--598. PMLR, 2021.

\bibitem[Fujimoto and Gu(2021)]{fujimoto2021minimalist}
Scott Fujimoto and Shixiang~Shane Gu.
\newblock A minimalist approach to offline reinforcement learning.
\newblock \emph{arXiv preprint arXiv:2106.06860}, 2021.

\bibitem[He et~al.(2021)He, Zhou, and Gu]{he2021logarithmic}
Jiafan He, Dongruo Zhou, and Quanquan Gu.
\newblock Logarithmic regret for reinforcement learning with linear function
  approximation.
\newblock In \emph{International Conference on Machine Learning}, pages
  4171--4180. PMLR, 2021.

\bibitem[Jaksch et~al.(2010)Jaksch, Ortner, and Auer]{jaksch2010near}
Thomas Jaksch, Ronald Ortner, and Peter Auer.
\newblock Near-optimal regret bounds for reinforcement learning.
\newblock \emph{Journal of Machine Learning Research}, 11\penalty0 (4), 2010.

\bibitem[Jin et~al.(2018)Jin, Allen-Zhu, Bubeck, and Jordan]{jin2018q}
Chi Jin, Zeyuan Allen-Zhu, Sebastien Bubeck, and Michael~I Jordan.
\newblock Is q-learning provably efficient?
\newblock \emph{arXiv preprint arXiv:1807.03765}, 2018.

\bibitem[Jin et~al.(2021)Jin, Yang, and Wang]{jin2021pessimism}
Ying Jin, Zhuoran Yang, and Zhaoran Wang.
\newblock Is pessimism provably efficient for offline rl?
\newblock In \emph{International Conference on Machine Learning}, pages
  5084--5096. PMLR, 2021.

\bibitem[Kumar et~al.(2020)Kumar, Zhou, Tucker, and
  Levine]{kumar2020conservative}
Aviral Kumar, Aurick Zhou, George Tucker, and Sergey Levine.
\newblock Conservative q-learning for offline reinforcement learning.
\newblock \emph{arXiv preprint arXiv:2006.04779}, 2020.

\bibitem[Lattimore and Szepesv{\'a}ri(2020)]{lattimore2020bandit}
Tor Lattimore and Csaba Szepesv{\'a}ri.
\newblock \emph{Bandit algorithms}.
\newblock Cambridge University Press, 2020.

\bibitem[Levine et~al.(2020)Levine, Kumar, Tucker, and Fu]{levine2020offline}
Sergey Levine, Aviral Kumar, George Tucker, and Justin Fu.
\newblock Offline reinforcement learning: Tutorial, review, and perspectives on
  open problems.
\newblock \emph{arXiv preprint arXiv:2005.01643}, 2020.

\bibitem[Lipsky and Sharp(2001)]{lipsky2001idea}
Martin~S Lipsky and Lisa~K Sharp.
\newblock From idea to market: the drug approval process.
\newblock \emph{The Journal of the American Board of Family Practice},
  14\penalty0 (5):\penalty0 362--367, 2001.

\bibitem[Liu et~al.(2018)Liu, Li, Tang, and Zhou]{liu2018breaking}
Qiang Liu, Lihong Li, Ziyang Tang, and Dengyong Zhou.
\newblock Breaking the curse of horizon: Infinite-horizon off-policy
  estimation.
\newblock \emph{Advances in Neural Information Processing Systems}, 31, 2018.

\bibitem[Liu et~al.(2020)Liu, Swaminathan, Agarwal, and
  Brunskill]{liu2020provably}
Yao Liu, Adith Swaminathan, Alekh Agarwal, and Emma Brunskill.
\newblock Provably good batch reinforcement learning without great exploration.
\newblock \emph{arXiv preprint arXiv:2007.08202}, 2020.

\bibitem[Modi et~al.(2018)Modi, Jiang, Singh, and Tewari]{modi2018markov}
Aditya Modi, Nan Jiang, Satinder Singh, and Ambuj Tewari.
\newblock Markov decision processes with continuous side information.
\newblock In \emph{Algorithmic Learning Theory}, pages 597--618. PMLR, 2018.

\bibitem[Papini et~al.(2021)Papini, Tirinzoni, Pacchiano, Restelli, Lazaric,
  and Pirotta]{papini2021reinforcement}
Matteo Papini, Andrea Tirinzoni, Aldo Pacchiano, Marcello Restelli, Alessandro
  Lazaric, and Matteo Pirotta.
\newblock Reinforcement learning in linear mdps: Constant regret and
  representation selection.
\newblock \emph{Advances in Neural Information Processing Systems}, 34, 2021.

\bibitem[Rouyer and Seldin(2020)]{rouyer2020tsallis}
Chlo{\'e} Rouyer and Yevgeny Seldin.
\newblock Tsallis-inf for decoupled exploration and exploitation in multi-armed
  bandits.
\newblock In \emph{Conference on Learning Theory}, pages 3227--3249. PMLR,
  2020.

\bibitem[Simchowitz and Jamieson(2019)]{simchowitz2019non}
Max Simchowitz and Kevin~G Jamieson.
\newblock Non-asymptotic gap-dependent regret bounds for tabular mdps.
\newblock \emph{Advances in Neural Information Processing Systems},
  32:\penalty0 1153--1162, 2019.

\bibitem[Slivkins(2019)]{slivkins2019introduction}
Aleksandrs Slivkins.
\newblock Introduction to multi-armed bandits.
\newblock \emph{arXiv preprint arXiv:1904.07272}, 2019.

\bibitem[Uehara and Sun(2022)]{uehara2022pessimistic}
Masatoshi Uehara and Wen Sun.
\newblock Pessimistic model-based offline reinforcement learning under partial
  coverage.
\newblock In \emph{International Conference on Learning Representations}, 2022.
\newblock URL \url{https://openreview.net/forum?id=tyrJsbKAe6}.

\bibitem[Xie et~al.(2021)Xie, Cheng, Jiang, Mineiro, and
  Agarwal]{xie2021bellman}
Tengyang Xie, Ching-An Cheng, Nan Jiang, Paul Mineiro, and Alekh Agarwal.
\newblock Bellman-consistent pessimism for offline reinforcement learning.
\newblock \emph{arXiv preprint arXiv:2106.06926}, 2021.

\bibitem[Xu et~al.(2021)Xu, Ma, and Du]{xu2021fine}
Haike Xu, Tengyu Ma, and Simon~S Du.
\newblock Fine-grained gap-dependent bounds for tabular mdps via adaptive
  multi-step bootstrap.
\newblock \emph{arXiv preprint arXiv:2102.04692}, 2021.

\bibitem[Yang et~al.(2021)Yang, Yang, and Du]{yang2021q}
Kunhe Yang, Lin Yang, and Simon Du.
\newblock Q-learning with logarithmic regret.
\newblock In \emph{International Conference on Artificial Intelligence and
  Statistics}, pages 1576--1584. PMLR, 2021.

\bibitem[Yin and Wang(2021)]{yin2021towards}
Ming Yin and Yu-Xiang Wang.
\newblock Towards instance-optimal offline reinforcement learning with
  pessimism.
\newblock In \emph{Thirty-Fifth Conference on Neural Information Processing
  Systems}, 2021.

\bibitem[Yu et~al.(2021)Yu, Liu, Nemati, and Yin]{yu2021reinforcement}
Chao Yu, Jiming Liu, Shamim Nemati, and Guosheng Yin.
\newblock Reinforcement learning in healthcare: A survey.
\newblock \emph{ACM Computing Surveys (CSUR)}, 55\penalty0 (1):\penalty0 1--36,
  2021.

\bibitem[Yu et~al.(2020)Yu, Thomas, Yu, Ermon, Zou, Levine, Finn, and
  Ma]{yu2020mopo}
Tianhe Yu, Garrett Thomas, Lantao Yu, Stefano Ermon, James~Y Zou, Sergey
  Levine, Chelsea Finn, and Tengyu Ma.
\newblock Mopo: Model-based offline policy optimization.
\newblock \emph{Advances in Neural Information Processing Systems},
  33:\penalty0 14129--14142, 2020.

\bibitem[Zanette and Brunskill(2019)]{zanette2019tighter}
Andrea Zanette and Emma Brunskill.
\newblock Tighter problem-dependent regret bounds in reinforcement learning
  without domain knowledge using value function bounds.
\newblock In \emph{International Conference on Machine Learning}, pages
  7304--7312. PMLR, 2019.

\end{thebibliography}

\newpage
\appendix


\newpage
\section{Detailed Related Work}\label{appx:detailed_related_work}
\paragraph{Online RL}
The online RL/MAB is the most basic framework studying the trade-off between exploration and exploitation \citep{auer2002finite, slivkins2019introduction, lattimore2020bandit}, where the agent targets at exploring the MDP to identify good actions as fast as possible to minimize the accumulative regrets.
In tabular MDPs \citep{jaksch2010near, dann2017unifying, jin2018q}, the 
regret lower bounds for non-stationary MDPs \citep{domingues2021episodic} have been achieved by \citet{azar2017minimax, zanette2019tighter}.
Recently, there has been interests in studying the gap-dependent regret  \citep{simchowitz2019non,xu2021fine,dann2021beyond,yang2021q, he2021logarithmic}, where the agent can achieve $\log$ dependence on the number of episodes $K$ under additional dependence on (the inverse of) the minimal gap $\frac{1}{\Delta_{\min}}$. 
\citet{simchowitz2019non} reports that, similar to stochastic MABs \citep{lattimore2020bandit}, the regret in the gap-dependent setting must scale as $\Omega(\log K)$, which implies that the $\log K$ upper bound is asymptotically tight. 
However, all of these works treat the customers equivalently and ignore the opportunities of leveraging the tiered structure. 
Recently, \citep{papini2021reinforcement} achieved similar constant regret in online setting with linear function approximation. Comparing with ours, they investigated the benefits of good features, while we focus on the benefits of considering a different learning protocol. Besides, although their results were established on a more general linear setting, their assumptions on the feature and the uniqueness of optimal policy are quite restrictive in tabular setting.

\paragraph{Offline RL}
Offline RL considers how to learn a good policy with a fixed dataset \citep{levine2020offline}.
Without the requirement of exploration, offline RL prefers algorithms with strong guarantees for exploitation and safety, and Pessimism in the Face of Uncertainty (PFU) becomes a major principle for achieving this both theoretically and empirically \citep{yin2021towards, uehara2022pessimistic, liu2020provably, xie2021bellman, buckman2020importance, kumar2020conservative, fujimoto2021minimalist, yu2020mopo}.
Similar to the offline setting, we choose $\algE$ to be a pessimistic algorithm. However, we still consider to interact the environment with $\algE$, although we ignore the data collected by $\algE$ for now and leave the investigation of its value to future work. As another difference, offline RL assumes the dataset is fixed and only the final performance matters, whereas we evaluate the accumulative regret of $\algE$. 

\section{More Discussion about Framework.~\ref{alg:general_learning_framework} and Motivating Examples}\label{appx:discussion_abs_framework_examples}
In this section, we try to justify that our Frw.~\ref{alg:general_learning_framework} is an appropriate abstraction for our motivating examples and user-interaction real-world applications. 

In the standard online learning protocol, at iteration $k$, the algorithm $\alg$ compute a policy $\pi_k$ based on previous exploration data, the environment samples a user $u_k$ from $\groupO$ and $\groupE$ according to the probability where $P(u_k \in \groupO)$ and $P(u_k \in \groupE)$, respectively (note that $P(u_k \in \groupO)+P(u_k \in \groupE)=1$).
After that, $\pi_k$ will interact with $u_k$ and obtain a new trajectory for the future learning, while $u_k$ suffers loss $V_1^*-V^{\pi_k}$, and the expected accumulative loss suffered by users from two groups till step $K$ is
\begin{align*}
    \Regret_K(\alg):=\EE[\sum_{k=1}^K V^*_1(s_1)-V^{\pi_k}(s_1)]
\end{align*}
Now, we consider a realistic assumption about the probability of $u_k$ from different groups:
\begin{assumption}[Assumption on the Ratio between Users from Different Groups]
We assume that:
\begin{align*}
    \frac{P(u_k \in \groupE)}{P(u_k \in \groupO)} = C,\quad \forall k\geq 1
\end{align*}
for some constant $C$.
\end{assumption}
Based on the assumption above, if we do not leverage the tiered structure, and treat the users from different groups equivalently, then the loss suffered by each group will be proportional to the size of that group. Therefore, even if we assume $\alg$ is near-optimal, the expected loss for each group will scale with $\log K$, i.e.:
\begin{align*}
    \text{Loss}_K(\groupO):=&\EE[\sum_{k=1}^K \mathbb{I}[u_k\in\groupO](V^*_1(s_1)-V^{\pi_k}(s_1))] \\
    =& \frac{1}{1+C}\EE[\sum_{k=1}^K V^*_1(s_1)-V^{\pi_k}(s_1)] = O(\frac{\log K}{1+C})\numberthis\label{eq:loss_K_gO}\\
    \text{Loss}_K(\groupE):=&\EE[\sum_{k=1}^K \mathbb{I}[u_k\in\groupE](V^*_1(s_1)-V^{\pi_k}(s_1))]\\
    =& \frac{C}{1+C}\EE[\sum_{k=1}^K V^*_1(s_1)-V^{\pi_k}(s_1)]=O(\frac{C\log K}{1+C})\numberthis\label{eq:loss_K_gP}
\end{align*}
Besides, we can also leverage the tiered structure, and consider an alternative protocol below:
\begin{algorithm}[H]
    \textbf{Initialize}: $D_1 \gets \{\},\quad k = 1,\quad\piO_1 \gets \algO(D_1),\quad \piE_1 \gets \algE(D_1)$ \\
    \For{$k=1,2,...$}{
        User $u_k$ comes.\\
        \If{$u_k \in \groupO$}{
            Use $\piO_k$ to interact with $u_k$, and collect data $\tauO_k$. \\
            $D_{k+1} = D_k \cup \{\tauO_k\} ,\quad \piO_{k+1} \gets \algO(D_k),\quad \piE_{k+1} \gets \algE(D_k)$ 
        }
        \Else{ 
            $\piE_k$ interacts with $u_k$, and collect data $\tauE_k$. // We do not use $\tauE_k$ for now.\\
            $D_{k+1} = D_k \cup \{\tauE_k\},\quad \piO_{k+1} \gets \piO_{k},\quad \piE_{k+1} \gets \piE_k$ 
        }
    }
    \caption{Online Interaction Protocol after Leveraging Tiered Structure}\label{alg:protocol_leverage_tiered_struc}
\end{algorithm}
In another word, in this new protocol, we use two policies at different exploitation level to interact with users from different groups, and only update policies if user comes from group $\groupO$. Note that in expectation, $\{u_k\in\groupO\}$ will happen for $\frac{K}{1+C}$ times, and therefore we have:
\begin{align*}
    \text{Loss}'_K(\groupO):=&\EE[\sum_{k=1}^K \mathbb{I}[u_k\in\groupO](V^*_1(s_1)-V^{\piO_k}(s_1))]\\
    \approx & \Regret_{K/(1+C)}(\algO)\\
    \text{Loss}'_K(\groupE):=&\EE[\sum_{k=1}^K \mathbb{I}[u_k\in\groupE](V^*_1(s_1)-V^{\piE_k}(s_1))]\\
    \approx & C\Regret_{K/(1+C)}(\algE)
\end{align*}
where $\Regret_{(\cdot)}(\algO)$ and $\Regret_{(\cdot)}(\algE)$ are originally defined in Def. \ref{def:Pseduo_regret}, and they are exactly the metric we used to measure the performance of $\algE$ and $\algO$ under our Frw.~\ref{alg:general_learning_framework}.

Based on our results in Sec. \ref{sec:analysis_for_MAB} and \ref{sec:Tabular_RL}, we know that under our framework, it is possible to achieve that:
\begin{align}
    \text{Loss}'_K(\groupO) =& \Regret_{K/(1+C)}(\algO) = O(\log \frac{K}{1+C})\label{eq:loss_prime_K_gO}\\
    \text{Loss}'_K(\groupE) =& C\Regret_{K/(1+C)}(\algE) = C\cdot \text{constant} \label{eq:loss_prime_K_gP}
\end{align}
where $constant$ means independence of $K$ but may include dependence on other parameters such as $S,A,H,\Delta$.
Comparing with Eq.\eqref{eq:loss_K_gO}, \eqref{eq:loss_K_gP}, \eqref{eq:loss_prime_K_gO}, and \eqref{eq:loss_prime_K_gP}, we can see that users from $\groupO$ will suffer less regret than before because we ``transfer'' some the regret from $\groupE$ to $\groupO$, and the additional regret suffered by $\groupO$ can be compensated in other forms as we discussed in Sec. \ref{sec:introduction}.

\paragraph{Remark}
Besides, our methods and results can be applied to those scenarios suggested by the decoupling setting \citep{avner2012decoupling,rouyer2020tsallis}, where $\algE$ does not necessarily interact with the environment, and we omit the discussion here.

\section{Lower Bounds}
\subsection{Regret Lower Bounds for Tabular MDP without Strictly Positive Gap Assumption}\label{appx:lower_bound_for_general_tabular_MDP}
We first recall a Theorem from \citep{dann2017unifying}:
\begin{theorem}[Theorem C.1 in \citep{dann2017unifying}]\label{thm:PAC_Lower_Bound}
    There exist positive constant $c$, $\delta_0 > 0$, $\epsilon_0 > 0$, such that for every $\epsilon \in (0, \epsilon_0)$, $S \geq 4$, $A\geq 2$ and for every algorithm $\text{Alg}$ and $n \leq \frac{cASH^3}{\epsilon^2}$  there is a fixed-horizon episodic MDP $M_{hard}$ with time-dependent transition probabilities and $S$ states and $A$ actions so that returning an $\epsilon$-optimal policy after n episodes is at most $1-\delta_0$.
\end{theorem}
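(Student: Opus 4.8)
The plan is to prove this PAC lower bound by the standard information-theoretic change-of-measure method, reducing the task of returning an $\epsilon$-optimal policy to a multiple-hypothesis identification problem over a family of planted hard MDPs. Concretely, I would build a ``reference'' instance $M_0$ together with a family $\{M_z\}_{z\in\mathcal{Z}}$ of alternatives indexed by a planted location $z$, argue that any algorithm that returns an $\epsilon$-optimal policy with probability $>1-\delta_0$ on \emph{every} instance can be post-processed into a test that identifies the planted index $z$, and then show via KL/Fano-type bounds that no such test can succeed when $n\le \tfrac{cASH^3}{\epsilon^2}$. Taking $M_{hard}$ to be the instance on which the identification test errs (equivalently, averaging the failure probability over the uniform prior on $\mathcal{Z}$ and extracting the worst case) then yields the stated ``success probability at most $1-\delta_0$''.

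For the construction I would use a time-inhomogeneous layered MDP with two absorbing states, a ``good'' state $g$ (reward $1$ per step) and a ``bad'' state $b$ (reward $0$), together with $\Theta(S)$ active states at each early layer, each carrying $A$ actions. In the reference $M_0$ every active $(s,a)$ transitions to $g$ with probability $\tfrac12$; in the alternative $M_z$, exactly one designated $(h,s,a)=z$ has its probability of reaching $g$ bumped to $\tfrac12+\delta$. Because $z$ sits at an early layer, reaching $g$ early contributes $\Theta(H)$ extra return, so calibrating $\delta=\Theta(\epsilon/H)$ makes the value gap between the optimal policy and any policy that misses the planted action equal to $\Theta(\epsilon)$; hence an $\epsilon$-optimal policy must play the planted action and thereby reveals $z$. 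The candidate set has size $|\mathcal{Z}|=\Theta(HSA)$, since the time-dependent transitions provide an independent hard direction at each of the $\Theta(H)$ layers.

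The information-theoretic core is then a bound on how well $z$ can be learned from $n$ episodes. Since $M_0$ and $M_z$ differ only at the single transition $z$ (a Bernoulli with parameter perturbed by $\delta$), the per-visit information is $\mathrm{KL}(\tfrac12\|\tfrac12+\delta)=\Theta(\delta^2)=\Theta(\epsilon^2/H^2)$, and by the chain rule for KL divergence over the $n$-episode trajectory the total discriminating information is at most (expected visits to $z$) $\times\,\Theta(\epsilon^2/H^2)$. A counting/averaging argument caps the expected visits to the \emph{least-explored} planted location, and choosing $z$ adversarially there, followed by Fano's inequality (or Le Cam / Assouad for a two-point or hypercube reduction), shows the posterior over $z$ stays near-uniform — so any algorithm errs with constant probability — unless $n=\Omega\!\big(H^3SA/\epsilon^2\big)$.

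The main obstacle, and the place where the construction must be designed with care, is obtaining the \emph{sharp} $H^3$ dependence rather than the easier $H^2SA/\epsilon^2$ that a naive routing gadget produces: one must simultaneously keep the value gap at $\Theta(\epsilon)$ (forcing $\delta=\Theta(\epsilon/H)$, and hence small per-visit information $\Theta(\epsilon^2/H^2)$) while exploiting the $\Theta(H)$ layers of \emph{independent} non-stationary parameters and the $[0,H]$ value range, and then balancing these factors against the adversarial choice of the least-visited planted slot. This delicate bookkeeping — rather than the reduction or the concentration steps — is where the real work lies, and it is exactly the part carried out in Theorem~C.1 of \citep{dann2017unifying} and mirrored in the matching regret lower bound of \citet{domingues2021episodic}.
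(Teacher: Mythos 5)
You should first note a mismatch of expectations here: the paper does not prove this statement at all. It is quoted verbatim as Theorem C.1 of \citep{dann2017unifying} and used purely as a black box in the proof of Theorem \ref{thm:Regret_Lower_Bound_of_AlgP} (where the pair $(\algO,\algE)$ is converted into a PAC algorithm and Markov's inequality is applied). So there is no in-paper proof to compare against; your attempt must be judged as a from-scratch reconstruction of the cited result. At the architectural level your sketch is the right one and matches the proofs in \citep{dann2017unifying} and \citet{domingues2021episodic}: a planted family $\{M_z\}$ of instances differing in one Bernoulli transition with bias $\delta=\Theta(\epsilon/H)$, a reduction from returning an $\epsilon$-optimal policy to identifying $z$, the KL chain rule bounding total information by (expected visits to $z$) times $\Theta(\epsilon^2/H^2)$, a Fano/Le Cam step, and a probabilistic-method extraction of the single $M_{hard}$ demanded by the quantifiers.

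However, there is a genuine gap, and it sits exactly where the theorem's content lies. First, the construction as you literally describe it is inconsistent: if \emph{every} active $(s_h,a_h)$ at every early layer transitions to the absorbing states $g$ or $b$ with probability $1/2$ each, then no trajectory ever reaches an active state at layer $h\geq 2$, the hypothesis family collapses from $\Theta(HSA)$ to $\Theta(SA)$, and your own accounting then yields only $n=\Omega(H^2SA/\epsilon^2)$. What is missing is the deferral mechanism (a waiting chain or equivalent gadget) that keeps the agent unabsorbed until it \emph{commits} to probing a single cell $(h,s,a)$, after which it is absorbed; this one-probe-per-episode property is precisely what makes the $\Theta(HSA)$ hypotheses informationally independent, caps the least-visited cell at $n/\Theta(HSA)$ expected visits, and thereby produces the third factor of $H$. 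Second, your closing paragraph concedes that this "delicate bookkeeping \ldots is exactly the part carried out in Theorem C.1 of \citep{dann2017unifying}" --- that is, you defer the decisive step to the very theorem being proved, which makes the attempt circular at its crux. As it stands you have an accurate roadmap with a correctly diagnosed difficulty, but not a proof: to complete it you would need to specify the routing-plus-commitment gadget, verify that the value gap of any policy missing the planted cell is at least $2\epsilon$ (so that $\epsilon$-optimality forces identification), and carry out the visit-counting and Fano computation under the one-probe constraint.
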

\ThmLBNormalMDP*
\begin{proof}
    Suppose we have a pair algorithm $(\algO, \algE)$, we can construct a PAC algorithm with $\algO$ and $\algE$, in the following way:
    \begin{itemize}
        \item Input: $K$.
        \item For $k=1,2,...,K$, run $\algO$ to collect data and run $\algE$ to generate a sequence of policies $\piE_1,...,\piE_K$.
        \item Uniformly randomly select an index from $\{1,2,...,K\}$, and denote it as $K_{\text{PAC}}$
        \item Output $\piE_{K_{\text{PAC}}}$.
    \end{itemize}
    In the following, we denote such an algorithm as $\text{Alg}_{\text{PAC}}$. Then, for an arbitrary MDP $M$, we must have:
    \begin{align*}
        \EE_{\text{Alg}_{\text{PAC}}, M}[V^* - V^{\pi_{K_\text{PAC}}}] = \frac{1}{K} \EE_{(\algO, \algE), M}[\sum_{k=1}^K V^* - V^{\pi_{k}}]
    \end{align*}
    As a result of Markov inequality and that $V^*-V^\pi \geq 0$ for arbitrary $\pi$, for arbitrary $\epsilon > 0$, we have:
    \begin{align}
        \Pr(V^* - V^{\pi_{K_{\text{PAC}}}} \geq \epsilon) \leq \frac{\EE_{\text{Alg}_{\text{PAC}}, M}[V^* - V^{\pi_{K_{\text{PAC}}}}]}{\epsilon}\label{eq:markov_ineq}
    \end{align}
    Since the above holds for arbitrary $\epsilon \in (0,\epsilon_0)$, by choosing $\epsilon=\bar\epsilon := \sqrt{{cH^3SA}/{K}}$, since $K > \frac{c}{\epsilon_0^2}H^3SA$, we have $\bar\epsilon < \epsilon_0$ and
    \begin{align}
        \Pr(V^* - V^{\pi_{K_{\text{PAC}}}} \geq \bar\epsilon) \leq \frac{\EE_{\text{Alg}_{\text{PAC}}, M}[V^* - V^{\pi_{K_{\text{PAC}}}}]}{\sqrt{cH^3SA/K}}.\label{eq:high_prob_PAC}
    \end{align}
    Because $K \leq cH^3SA/{\bar\epsilon}^2$, Thm. \ref{thm:PAC_Lower_Bound} implies that, for $\text{Alg}_{\text{PAC}}$, for arbitrary $S \geq 4, A \geq 2, H \geq 1$, there must exists a hard MDP $M_{hard}$, such that:
    \begin{align*}
        \frac{\EE_{\text{Alg}_{\text{PAC}}, M}[V^* - V^{\pi_{K_{\text{PAC}}}}]}{\sqrt{cH^3SA/K}} \geq \delta_0
    \end{align*}
    and it is equivalent to:
    \begin{align}
        \EE_{(\algO, \algE), M}[\sum_{k=1}^K V^* - V^{\pi_{k}}]=K\cdot \EE_{\text{Alg}_{\text{PAC}}, M}[V^* - V^{\pi_{K_{\text{PAC}}}}] \geq \delta_0\sqrt{cH^3SAK}\label{eq:lower_bound_induced_by_PAC}
    \end{align}
    which finishes the proof.

\end{proof}
\begin{remark}[Regret lower bound when $\tauE$ is used in Framework \ref{alg:general_learning_framework}]
    Our techniques can be extended to the case when $\tauE$ are used by $\algO$ and $\algE$, and establish the same $O(\sqrt{H^3SAK})$ lower bound. Because the only difference would be in this new setting, at iteration $k$, $\algO$ and $\algE$ will use $2k$ trajectories to compute $\piO_k$ and $\piE_k$, which only double the sample size comparing with $k$ trajectories in Framework \ref{alg:general_learning_framework}. Therefore, with the same techniques (and choosing $\bar{\epsilon}=\sqrt{{cH^3SA}/{(2K)}}$), one can obtain a lower bound which differs from Eq.\eqref{eq:lower_bound_induced_by_PAC} by constant.
\end{remark}

\subsection{Lower Bound for the Dependence on $\log d_{\min}$ when $|\Pi^*|=1$}\label{appx:LB_Dep_Density}
In this section, because we will conduct discussion on multiple different MDPs, and in order to distinguish them, we will introduce a subscript of $M$ to highlight which MDP we are discussing. Therefore, we revise some key notations and re-introduce them here.
\paragraph{Notation}
Given arbitrary tabular MDP $M=\{\cS,\cA,\mP,r,H\}$. We use $\Pi^*_M$ to denote the set of deterministic optimal policies of $M$. For each deterministic optimal policy $\pi^*_M \in \Pi^*_M$, we define $d^{\pi^*_M}_{\min}$ to be the minimal non-zero occupancy of the reachable state action by $\pi^*_M$, i.e.
\begin{align}
    d^{\pi^*_M}_{\min}:= \min_{h,s_h,a_h} d^{\pi^*_M}(s_h,a_h),\quad s.t.\quad d^{\pi^*_M}(s_h,a_h) > 0 \label{eq:def_of_d_pistar_min}
\end{align}
Then, we define:
\begin{align}
    d_{M,\min}:= \min_{\pi^*_M\in \Pi^*_M} d^{\pi^*_M}_{\min},\quad \pi^*_{M,d_{\min}}:=\arg\min_{\pi^*_M\in \Pi^*_M} d^{\pi^*_M}_{\min},\label{eq:def_of_d_min}
\end{align}
Different from regret analysis in online setting \citep{simchowitz2019non,xu2021fine,dann2021beyond}, our regret bound in Thm. \ref{thm:UB_Regret_PVI} has additional dependence on $\log d^{\pi^*}$, even if the optimal policy is unique. 
In Thm. \ref{thm:LB_Dep_Density_Formal}, we show that the dependence of $\log d^{\pi^*}$ is unavoidable if we do not have other assumptions about the behavior of $\algO$ besides Cond. \ref{cond:requirement_on_algO}. 
In another word, even if constrained by satisfies Cond. \ref{cond:requirement_on_algO}, $\algO$ can be arbitrarily adversarial so that $\log d^{\pi^*}$ exists in the lower bound. We defer the proof to Appx. \ref{appx:LB_Dep_Density}

\begin{restatable}{theorem}{ThmLBDensityDep}\label{thm:LB_Dep_Density_Formal}
    For arbitrary $S,A,H \geq 3$, arbitrary $\Delta_{\min} > 0$ and $d_{\min} > 0$, if there exists an MDP $M=\{\cS,\cA,\mP,r,H\}$ such that $|\cS|=S$, $|\cA|=A$ $d_{M,\min}=d_{\min}$ and the minimal gap is lower bound by $\Delta_{\min}$, then there exists a hard MDP $M^+=\{\cS^+,\cA,\mP^+,r^+,H\}$ with $|\cS^+|=S+1$, minimal gap lower bounded by $3\Delta_{\min}/4$ and $d_{M^+,\min} = d_{M,\min}/4$, and an adversarial choice of $\algO$ satisfying Cond. \ref{cond:requirement_on_algO}, such that when $K$ is large enough, the expected Pseudo-Regret of $\algE$ is lower bounded by:
    \begin{align*}
        \EE_{\algO, M, \algE}[\sum_{k=1}^K V^* - V^{\piE_k}] \geq O\Big((C_1+C_2)\log\frac{C_1+C_2}{d_{M^+,\min}\Delta_{\min}}\Big)
    \end{align*}
\end{restatable}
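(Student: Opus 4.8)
The plan is to exhibit one hard instance $M^+$ together with an adversarial yet near-optimal $\algO$ that deliberately withholds data about the unique optimal action at a rarely-visited state, so that the pessimistic $\piE_k$ is forced to act sub-optimally there for a window whose length scales with the logarithmic regret budget of $\algO$. I would construct $M^+$ by attaching a two-action ``test gadget'' to the lowest-occupancy state-action pair of $M$, introducing a single fresh state $s^\star$ (hence $|\cS^+|=S+1$). The gadget is arranged so that: (i) $s^\star$ is reached under the optimal policy with occupancy exactly $d_{M^+,\min}=d_{M,\min}/4$, the factor $1/4$ coming from the probability split inside the gadget; (ii) at $s^\star$ there is a binary choice between the optimal action $a^\star$, whose value sits entirely in its continuation, and a sub-optimal action $a'$ of gap $3\Delta_{\min}/4$ that terminates with a known deterministic reward; and (iii) the optimal policy of $M^+$ stays unique, so we remain in the $|\Pi^*|=1$ regime — this is essential, since it removes any alternative optimal action that $\piE$ could fall back on at $s^\star$. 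All pairs on the prefix path leading to $s^\star$ are kept at $\Omega(1)$ occupancy, so they are sampled within $O(1)$ episodes.

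Next I would define $\algO$ to play the unique optimal policy everywhere except at $s^\star$, where it plays $a'$ for the first $k_0$ episodes and switches to $a^\star$ afterwards. Being deterministic, $\algO$ has pseudo-regret $\min(\nk,k_0)\cdot g$ at episode $\nk$, where $g:=d_{M^+,\min}\cdot(3\Delta_{\min}/4)$ is the per-episode cost of deviating at $s^\star$. Choosing $k_0$ as the largest integer with $k_0 g\le C_1+\alpha C_2\log k_0$, the map $\nk\mapsto C_1+\alpha C_2\log\nk-\nk g$ is positive at $\nk=1$ (as $g\ll C_1$) and increasing-then-decreasing, so $\nk g\le C_1+\alpha C_2\log\nk$ in fact holds for every $\nk\le k_0$ and is frozen for $\nk>k_0$; thus $\algO$ satisfies Cond.~\ref{cond:requirement_on_algO} with exactly these $C_1,C_2$ (the high-probability clause being vacuous for a deterministic learner).

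The crux is to show that $\piE_k$ (PVI, Alg.~\ref{alg:PVI}) is forced into $a'$ at $s^\star$ throughout essentially all of the first $k_0$ episodes. The lever is that only $\tauO$ enters $D_k$ (Frw.~\ref{alg:general_learning_framework}), so $\algO$ never playing $a^\star$ gives $N_{k}(s^\star,a^\star)=0$ for all $k\le k_0$; on the good event $\cEBk$ of Cond.~\ref{cond:bonus_term}, its concentration requirement forces the bonus at the unvisited $(s^\star,a^\star)$ to exceed its (purely continuation) value, whence $\hQ_{k}(s^\star,a^\star)=0$, while the repeatedly sampled $a'$ acquires a strictly positive pessimistic value. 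Since the prefix is well-sampled, $\piE_k$ agrees with $\pi^*$ up to $s^\star$, reaches $s^\star$ with occupancy $d_{M^+,\min}$, and then selects $a'$, paying $g$ per episode; uniqueness of $\pi^*$ guarantees no optimal alternative at $s^\star$. Summing over this window yields $\Regret_K(\algE)\gtrsim k_0 g=C_1+\alpha C_2\log k_0$, and solving the fixed point $k_0 g=C_1+\alpha C_2\log k_0$ gives $\log k_0=\Theta\!\big(\log\tfrac{C_1+C_2}{d_{M^+,\min}\Delta_{\min}}\big)$, producing a bound of the claimed order $(C_1+C_2)\log\tfrac{C_1+C_2}{d_{M^+,\min}\Delta_{\min}}$ once $K\ge k_0$.

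The step I expect to be the main obstacle is this ``fooling'' argument: I must show that PVI is forced to pick $a'$, not merely allowed to, over the whole window. This needs (a) a lower bound on the pessimistic value of $a'$ — equivalently, showing that the short warm-up during which even $a'$ is not yet trusted is negligible relative to $k_0$, using $C_1+C_2=\Omega(1/\Delta_{\min})$ and the smallness of $d_{M^+,\min}$ — and (b) control of the reachability of $s^\star$ under $\piE_k$, which is exactly where uniqueness of $\pi^*$ and the $\Omega(1)$-occupancy prefix are used. The remaining bookkeeping — certifying the gadget's gap $3\Delta_{\min}/4$ and occupancy $d_{M^+,\min}=d_{M,\min}/4$, and absorbing the failure events $\cEBk^c$ via $\sum_k\delta_k<\infty$ — is routine.
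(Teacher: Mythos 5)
Your Steps 1--2 track the paper's proof closely: the paper likewise grafts a low-occupancy gadget onto $M$ (an absorbing chain entered from $s_1$ with probability $d_{M,\min}/4$, which is where the factors $3\Delta_{\min}/4$ and $d_{M,\min}/4$ come from), and likewise defines $\algO$ to freeze a fixed sub-optimal action at the starved state for a window of length $k_{\sup}$ chosen as the largest $k$ with $k\, d_{M^+,\min}\Delta_{\min}\le C_1+C_2\log k$; your verification of Cond.~\ref{cond:requirement_on_algO} and your arithmetic for $\log k_0=\Theta\bigl(\log\frac{C_1+C_2}{d_{M^+,\min}\Delta_{\min}}\bigr)$ give the right order.

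The genuine gap is in your Step 3. You lower-bound the regret of \emph{PVI specifically}, by arguing that the unvisited $(s^\star,a^\star)$ receives pessimistic value $0$ and so the argmax is forced onto $a'$. The paper's theorem is used (Sec.~\ref{sec:tabular_RL_Discussion}) to show the $\log(1/d^{\pi^*})$ dependence is \emph{unavoidable}, i.e., the bound must hold for an \emph{arbitrary} $\algE$, and its Step 3 is correspondingly information-theoretic: the optimal action at the starved absorbing state is hidden uniformly among the $A-1$ actions that $\algO$ never plays, the data collected in the window reveal nothing about which permutation of $M^+$ was drawn, and hence \emph{any} $\algE$ misidentifies it with probability at least $\frac{A-2}{A-1}$ per episode. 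Your binary gadget cannot support such an argument: with exactly two actions at $s^\star$, the exploitation algorithm that simply plays whichever action is absent from the dataset is always correct, so no algorithm-independent lower bound holds on your instance (this degeneracy is exactly why the factor $\frac{A-2}{A-1}$ vanishes at $A=2$ and why the theorem assumes $A\ge 3$). To close the gap you would need at least two indistinguishable candidate optimal actions at the starved state and a symmetrization/averaging argument over their placement, at which point the PVI-specific ``forcing'' reasoning (and its delicate warm-up bookkeeping) becomes unnecessary.
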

\begin{proof}
The proof is divided into three steps.
\paragraph{Step 1: Construction of the Hard MDP Instance}

Now, we construct a hard MDP instance $M^+:=\{\cS^+,\cA^+,\mP^+,r^+, H\}$ based on $M$ by expanding the state space with an absorbing state $s_{absorb}$ for layer $h\geq 2$ (we use $h$ in $s_{h,absorb}$ to distinguish the absorbing state at different time step), and define the transition and reward function by:
\begin{align*}
    \forall a_1 \in \cA_1,s_2\in\cS_2,\quad & \mP^+(s_{2,absorb}|s_1,a_1)=\frac{d_{M,\min}}{4}, \\
    & \mP^+(s_2|s_1,a_1)=(1-\frac{d_{M,\min}}{4})\mP(s_2|s_1,a_1)\\
    & r^+(s_1,a_1)=(1-\frac{d_{M,\min}}{4}) r(s_1,a_1) \\
    \forall h\geq 2, s_h\in\cS_h,a_h\in\cA_h,\quad & \mP^+(\cdot|s_h,a_h)=\mP(\cdot|s_h,a_h),\quad r^+(s_h,a_h)=r(s_h,a_h) \\
    \forall h \geq 2, a_h\in\cA_h, \quad & \mP^+(s_{h+1,absorb}|s_{h,absorb},a_h)=1 \\
    \forall~H\geq h \geq 2,  a_h\in\cA_h, \quad & r(s_{h,absorb},a_h)=\Delta_{\min} \mathbb{I}[a_h=a_h^*]
\end{align*}
Briefly speaking, at the initial state, by taking arbitrary action, with probability $d_{M,\min}/2$, it will transit to absorbing state at layer 2, and the agent can not escape from the absorbing state till the end of the episodes. Besides, at the absorbing states, for each layer $2\leq h\leq H$, there always exists an optimal action $a_h^*$ with reward $\Delta_{\min}$ and taking any the other actions will lead to 0 reward. 
Moreover, $M^+$ agrees with $M$ for all the transition and rewards when $h\geq 2$.

Easy to see that:
\begin{align*}
    V^*_{M^+}(s_1) - Q^*_{M^+}(s_1,a_1) = (1-\frac{d_{M,\min}}{4}) (V^*_{M}(s_1) - Q^*_M(s_1,a_1))
\end{align*}
Therefore, if $V^*_{M}(s_1) - Q^*_M(s_1,a_1) > 0$, we still have:
\begin{align*}
    V^*_{M^+}(s_1) - Q^*_{M^+}(s_1,a_1) \geq \frac{3}{4}(V^*_{M}(s_1) - Q^*_M(s_1,a_1)) \geq \frac{3}{4}\Delta_{\min}
\end{align*}
Combining with the transition and reward functions in absorbing states, we can conclude that the gap of $M^+$ is still $O(\Delta_{\min})$.

\paragraph{Step 2: Construction of Adversarial $\algO$}
Let's use $\Pi^*_{M^+}$ to denote the set of deterministic optimal policies at MDP $M^+$. It's easy to see that, for arbitrary $\pi^*_{M^+} \in \Pi^*_{M^+}$, there must exists an optimal policy $\pi^*_M \in \Pi^*_M$ agrees with $\pi^*_{M^+}$ at all non-absorbing states (and vice versa), i.e.
\begin{align*}
    \pi^*_{M^+}(s_h) = \pi^*_M(s_h),\quad\forall h \in [H], s_h \in \cS_h
\end{align*}
Then, for arbitrary $\pi^*_{M^+}\in \Pi^*_{M^+}$, we have:
\begin{align*}
    d^{\pi^*_{M^+}}(s_{h,absorb}) = \frac{d_{M,\min}}{4} \leq (1-\frac{d_{M,\min}}{4})d_{M,\min} < d^{\pi^*_{M^+}}(s_{h'}),\quad \forall h'\in[H],s_{h'}\in\cS_{h'}
\end{align*}
which implies that 
\begin{align*}
    d_{M^+,\min} = \frac{d_{M,\min}}{4}
\end{align*}
and $s_{h,absorb}$ are the hardest state to reach for all deterministic optimal policies. In the following, we randomly choose an optimal deterministic policy $\pi^*_{M^+}$ from $\Pi^*_{M^+}$, and randomly select one action $\bar{a}_H$ from $\cA_H$ with $\bar{a}_H\neq a_H^*$ and fix them in the following discussion.

Based on the definition above, we define a deterministic policy in $M^+$, which agree with $\pi^*_{M^+}$ for all states except $s_{H}$:
\begin{align*}
    \forall h \in [H],\quad  \pi_{M^+}(s_h) = \begin{cases}
        \pi^*_{M^+}(s_h),\quad & \text{if } s_h\neq s_{H,absorb},\\
        \bar{a}_H,\quad & \text{if } s_h=s_{H,absorb},
    \end{cases}.
\end{align*}
Now, we are ready to design the adversarial choice of $\algO$ satisfying the condition \ref{cond:requirement_on_algO}. We consider the following algorithm:
\begin{align*}
    \algO(k) = \begin{cases}
        \pi_{M^+},\quad & \text{if } k \leq k_{\sup},\\
        \pi^*_{M^+},\quad & \text{if } k > k_{\sup},
    \end{cases};\quad 
\end{align*}
where $k_{\sup}$ is defined to be:
\begin{align*}
    k_{\sup}:=\sup_{k\in N^+}: \{k\leq \frac{1}{d_{M^+,\min}\Delta_{\min}}(C_1+C_2\log k)\} \approx O(\frac{C_1+C_2}{d_{M^+,\min}\Delta_{\min}}\log \frac{C_1+C_2}{d_{M^+,\min}\Delta_{\min}})
\end{align*}
We can easily verify that Cond. \ref{cond:requirement_on_algO} will not be violated, since
\begin{align*}
    \forall k \geq 1,\quad \quad&\sum_{k=1}^K V^*-V^{\piO_k} \\
    \leq& d_{M^+,\min}\cdot (V^*(s_{2,absorb})-V^{\piO_k}(s_{2,absorb}))\cdot \min\{k,k_{\sup}\} \\
    \leq& d_{M^+,\min}\cdot (V^*(s_{H,absorb})-V^{\piO_k}(s_{H,absorb}))\cdot \frac{1}{d_{M^+,\min}\Delta_{\min}}(C_1+C_2\log \min\{k,k_{\sup}\}) \\
    =&d_{M^+,\min}\cdot \Delta_{\min}\cdot \frac{1}{d_{M^+,\min}\Delta_{\min}}(C_1+C_2\log k) \\
    \leq& C_1 + C_2 \log k
\end{align*}

\paragraph{Step 3: Lower Bound of $\algE$ under the Choice of Adversarial $\algO$}
Now, we can derive an lower bound for $\algE$. Since in the first $k_{\sup}$ steps, $\algE$ can only observe what happens if action $\bar{a}_H$ is taken at $s_{H,absorb}$, and therefore, it has no idea about which action among $\cA_H \setminus \bar{a}_H$ is the optimal action $a_H^*$. We use $\mathcal{M}^+$ to denote a set of MDPs by permuting the position of $a_H^*$ in $M^+$. Since $|\cA_H|=A$, we have $|\mathcal{M}^+|=A-1$ and $M^+ \in \mathcal{M}^+$. 

Then, we uniformly sample an MDP from $\mathcal{M}^+$ and run the adversarial $\algO$ above to generate the data for $\algE$ to learn. We use $M^+_i$ with $i=1,2...,A-1$ to refer to the MDPs in $\mathcal{M}^+$ and use index $i$ to refer to the position of the optimal action at $s_{H,absorb}$ in each MDP. For the simplicity of the notation, we use $A$ as the index to refer to the position of $\bar{a}_H$.

Because $\algE$ do not have prior knowledge about which MDP in $\mathcal{M}^+$ is sampled, we have:
\begin{align*}
    &\EE_{\bar{M}^+, \algO, \algE}[\sum_{k=1}^K V^* - V^{\piE_k}] \\
    \geq &\EE_{\bar{M}^+, \algO, \algE}[\sum_{k=1}^{k_{\sup}} V^* - V^{\piE_k}] \\
    = & \frac{1}{A-1}\sum_{i\in[A-1]} \EE_{M_i^+,, \algO, \algE}[\sum_{k=1}^{k_{\sup}} V^* - V^{\piE_k}]\\
    \geq&\frac{d_{M,\min}\Delta_{\min}}{A-1}\sum_{k=1}^{k_{\sup}}\sum_{i\in[A-1]} \sum_{j \in [A], j\neq i} {\text{Pr}}_{\algO, M_i}(\piE_k(s_{H,absorb})=j) \tag{Drop the probability that $\piE_k$ is sub-optimal at non-absorbing states}\\
    =&\frac{d_{M,\min}\Delta_{\min}}{A-1}\sum_{k=1}^{k_{\sup}}\Big(\sum_{j \in [A], j\neq A-1} {\text{Pr}}_{\algO, M_i}(\piE_k(s_{H,absorb})=j)\\
    &\qquad\qquad\qquad+\sum_{i\in[A-2]} \sum_{j \in [A], j\neq i} {\text{Pr}}_{\algO, M_i}(\piE_k(s_{H,absorb})=j)\Big)\\
    =&\frac{d_{M,\min}\Delta_{\min}}{A-1}\sum_{k=1}^{k_{\sup}}\Big(\sum_{i\in[A-2]} {\text{Pr}}_{\algO, M_i}(\piE_k(s_{H,absorb})=i)\\
    &\qquad\qquad\qquad+\sum_{i\in[A-2]} \sum_{j \in [A], j\neq i} {\text{Pr}}_{\algO, M_i}(\piE_k(s_{H,absorb})=j)\Big) \tag{$\algE$ can not distinguish between $M^+_i$}\\
    =&\frac{d_{M,\min}\Delta_{\min}}{A-1}\sum_{k=1}^{k_{\sup}}\Big(\sum_{i\in[A-2]} \sum_{j \in [A]} {\text{Pr}}_{\algO, M_i}(\piE_k(s_{H,absorb})=j)\\
    =& \frac{A-2}{A-1} d_{M,\min}\Delta_{\min} k_{\sup}\\
    =& O\Big((C_1+C_2)\log\frac{C_1+C_2}{d_{M^+,\min}\Delta_{\min}}\Big) = O\Big((C_1+C_2)\log\frac{C_1+C_2}{d_{M,\min}\Delta_{\min}}\Big)
\end{align*}    
\end{proof}

\section{Analysis for Bandit Setting}\label{appx:analysis_for_bandit_setting}


\subsection{The Optimality of $\algO$ in Alg \ref{alg:UCB_Explore_LCB_Exploit}}\label{sec:optimality_of_algO}
From Theorem 8.1 of \citep{lattimore2020bandit}, 
we can show the following guarantee for the UCB algorithm in Alg \ref{alg:UCB_Explore_LCB_Exploit} with revised bonus function:
\begin{align*}
    \EE[\sum_{k=1}^K \mu_1- \mu_{\piO_k}] \leq& \sum_{i: \Delta_i > 0} \Delta_i + \frac{1}{\Delta_i}(8\alpha\log f(K) + 8\sqrt{\pi \alpha \log f(K)} + 28)\\
    =&O(\sum_{i:\Delta_i > 0}\frac{\alpha}{\Delta_i} \log AK)
\end{align*}
where we assume $K \geq A$. Since $\alpha$ is just at the constant level, the above regret still matches the lower bound.

\subsection{Analysis for LCB}
\paragraph{Outline} 
In this section, we establish regret bound for Alg. \ref{alg:UCB_Explore_LCB_Exploit}. We first provide the proof of two key Lemma: Lem. \ref{lem:blessing_of_pessimism} and Lem. \ref{lem:upper_bound_of_Nk_geq_k_div_scalar}. After that, in Lem. \ref{lem:upper_bound_of_Nk_geq_k_div_scalar}, we try to combine the above two results and prove that for those arm $i$ with $\Delta_i > 0$, when $k$ is large enough, we are almost sure (with high probability) that LCB will not take arm $i$.
Finally, we conclude this section with the proof of Thm. \ref{thm:total_regret_UCB_LCB}.

\paragraph{Definition of $c_f$}
Under the choice $f(k)=1+16A^2(k+1)^2$, we use $c_f$ to denote the minimal positive constant independent with $\alpha, A$ and arbitrary $\Delta_i$ with $\Delta_i > 0$, such that
\begin{align*}
    \forall~\Delta_i > 0, \forall \scalar \in [1, 4A],\quad \text{as long as}\quad  k \geq& c_f \frac{\alpha \scalar}{\Delta_i^2} \log (1+\frac{\alpha A}{\Delta_i}), \quad \\
    \text{we have}\quad k \geq& \frac{32\alpha \scalar}{\Delta_i^2} \log f(k) \numberthis\label{eq:def_c_f_new}
\end{align*}


\LemBlessingLCB*
\begin{proof}
    \begin{align*}
        & \Pr(\{i = \piE_k\}\cap \{\Delta_j < \Delta_i\} \cap \{N_j(k) \geq \frac{8\alpha\log f(k)}{(\Delta_j - \Delta_i)^2}\})\\
        \leq & \Pr(\{\hat\mu_i(k)-\sqrt{\frac{2\alpha \log f(k)}{N_i(k)}} \geq \hat\mu_{j}(k)-\sqrt{\frac{2\alpha \log f(k)}{N_{j}(k)}}\}\cap \{\Delta_j < \Delta_i\} \cap \{N_j(k) \geq \frac{8\alpha\log f(k)}{(\Delta_j - \Delta_i)^2}\}) \\
        = & \Pr(\{\hat\mu_i(k)-\mu_i-\sqrt{\frac{2\alpha \log f(k)}{N_i(k)}} \geq \hat\mu_{j}(k)-\mu_j + (\Delta_j - \Delta_i)-\sqrt{\frac{2\alpha \log f(k)}{N_{j}(k)}}\}\\
        &\quad\quad\cap \{\Delta_j < \Delta_i\} \cap \{N_j(k) \geq \frac{8\alpha\log f(k)}{(\Delta_j - \Delta_i)^2}\})\\
        \leq& \Pr(\{\hat\mu_i(k)-\mu_i-\sqrt{\frac{2\alpha \log f(k)}{N_i(k)}} \geq \hat\mu_{j}(k)-\mu_j + \sqrt{\frac{2\alpha \log f(k)}{N_{j}(k)}}\}\cap \{\Delta_j < \Delta_i\} \cap \{N_j(k) \geq \frac{8\alpha\log f(k)}{(\Delta_j - \Delta_i)^2}\})\\
        \leq&\Pr(\{\hat\mu_i(k)-\mu_i-\sqrt{\frac{2\alpha \log f(k)}{N_i(k)}} \geq \hat\mu_{j}(k)-\mu_j + \sqrt{\frac{2\alpha \log f(k)}{N_{j}(k)}}\})\\
        \leq& \Pr(\{\hat\mu_i(k)-\mu_i-\sqrt{\frac{2\alpha \log f(k)}{N_i(k)}} \geq 0\}) + \Pr(\{0 \geq \hat\mu_{j}(k)-\mu_j + \sqrt{\frac{2\alpha \log f(k)}{N_{j}(k)}}\}) \\
        \leq& 2/f(k)^\alpha \leq 2/k^{2\alpha}
    \end{align*}
    where the last but two step is because of the Azuma-Hoeffding's inequality.
\end{proof}

\LemLBofNiNew*
\begin{proof}
    We choose $c_f$ defined in Eq.\eqref{eq:def_c_f_new} to be the constant $c$ in this Lemma.

    The key idea of the proof is that, because $N_i(k) \leq k$ for all $k$, if $N_i(k) \geq \lceil k / \scalar \rceil $, there must exists an iteration $\tk$ between $\lceil k/\scalar \rceil - 1$ and $k$, such that $\{N_i(\tk)=\lceil k/\scalar \rceil-1\}\cap \{N_i(\tk)=\lceil k/\scalar \rceil\}$ (i.e. $\tk$ is the time step that UCB takes arm $i$ for the $\lceil k/\scalar \rceil$-th time). Therefore, for arbitrary fixed $\lambda \in [1, A^2]$, when $k \geq \scalar + c_f \cdot \frac{\alpha \scalar}{\Delta_i^2}\log(1+\frac{\alpha \scalar}{\Delta_{\min}})$, we have:

    \begin{align*}
        &\Pr(N_i(k) \geq k/\scalar)=\Pr(N_i(k) \geq \lceil k/\scalar \rceil) \\
        =&\sum_{\tilde{k}=\lc k/\scalar \rc-1}^{k-1} \Pr(\{N_i(\tilde{k})=\lc k/\scalar \rc-1,N_i(\tilde{k}+1)=\lc k/\scalar \rc\}\cap \{\hat{\mu}_{i^*}(\tilde{k})+\sqrt{\frac{2\alpha \log f(\tilde{k})}{N_{i^*}(\tilde{k})}} \leq \hat\mu_i(\tilde{k})+\sqrt{\frac{2\alpha \log f(\tilde{k})}{N_i(\tilde{k})}}\}) \tag{Union bound.}\\
        =&\sum_{\tilde{k}=\lc k/\scalar \rc-1}^{k-1} \Pr(\{N_i(\tilde{k})=\lc k/\scalar \rc-1,N_i(\tilde{k}+1)=\lc k/\scalar \rc\}\\
        &\quad\quad\cap \{\hat{\mu}_{i^*}(\tilde{k})-\mu_{i^*}+\sqrt{\frac{2\alpha \log f(\tilde{k})}{N_{i^*}(\tilde{k})}} \leq \hat\mu_i(\tilde{k})-\mu_i-\Delta_i+\sqrt{\frac{2\alpha \log f(\tilde{k})}{N_i(\tilde{k})}}\}) \tag{Subtract $\mu_{i^*}$ at both sides}\\
        \leq& \sum_{\tilde{k}=\lc k/\scalar \rc-1}^{k-1} \Pr(\{N_i(\tilde{k})=\lc k/\scalar-1 \rc,N_i(\tilde{k}+1)=\lc k/\scalar \rc\}\cap\{\hat{\mu}_{i^*}(\tilde{k})-\mu_{i^*}+\sqrt{\frac{2\alpha \log f(\tilde{k})}{N_{i^*}(\tilde{k})}}\leq 0\})\\
        &\quad + \Pr(\{N_i(\tilde{k})=\lc k/\scalar \rc-1),N_i(\tilde{k})=\lc k/\scalar \rc+1)\}\cap\{0 \leq \hat\mu_i(\tilde{k})-\mu_i-\Delta_i+\sqrt{\frac{2\alpha \log f(\tilde{k})}{N_i(\tilde{k})}}\})\numberthis\label{eq:med_step_1}\\
        \leq& \sum_{\tilde{k}=\lc k/\scalar \rc-1}^{k-1} \Pr(\{\hat{\mu}_{i^*}(\tilde{k})-\mu_{i^*}+\sqrt{\frac{2\alpha \log f(\tilde{k})}{N_{i^*}(\tilde{k})}}\leq 0\})\\
        &\quad + \Pr(\{N_i(\tilde{k})=\lc k/\scalar \rc-1),N_i(\tilde{k}+1)=\lc k/\scalar \rc)\}\cap\{0 \leq \hat\mu_i(\tilde{k})-\mu_i-\sqrt{\frac{2\alpha \log f(\tilde{k})}{N_i(\tilde{k})}}\}) \tag{Under our choice of $k$, and $N_i(\tilde{k})=\lc k/\scalar\rc-1$, $\frac{\log f(\tilde{k})}{N_i(\tilde{k})} \leq \frac{\log f(k)}{ k/\scalar - 1}\leq \frac{\Delta_i^2}{8\alpha}$}\\
        \leq&\sum_{\tilde{k}=\lc k/\scalar \rc-1}^{k-1} \Pr(\{\hat{\mu}_{i^*}(\tilde{k})-\mu_{i^*}+\sqrt{\frac{2\alpha \log f(\tilde{k})}{N_{i^*}(\tilde{k})}}\leq 0\}) + \Pr(\{0 \leq \hat\mu_i(\tilde{k})-\mu_i-\sqrt{\frac{2\alpha \log f(\tilde{k})}{N_i(\tilde{k})}}\})\\
        \leq& \sum_{\tilde{k}=\lc k/\scalar \rc-1}^{k-1} \frac{2}{f(\tilde{k})^{\alpha}} \tag{Azuma-Hoeffding Inequality}\\
        \leq& \frac{2k}{f(k/\scalar-1)^{\alpha}} = \frac{2k}{(16A^2k^2/\scalar^2+1)^{\alpha}}\leq \frac{2}{k^{2\alpha-1}} \tag{$\scalar\in[1, 4A]$}
    \end{align*}
    where the step \eqref{eq:med_step_1} is because:
    \begin{align*}
        &\{\hat{\mu}_{i^*}(\tilde{k})-\mu_{i^*}+\sqrt{\frac{2\alpha \log f(\tilde{k})}{N_{i^*}(\tilde{k})}} < \hat\mu_i(\tilde{k})-\mu_i-\Delta_i+\sqrt{\frac{2\alpha \log f(\tilde{k})}{N_i(\tilde{k})}}\}\\
        \in& \{0 < \hat\mu_i(\tilde{k})-\mu_i-\Delta_i+\sqrt{\frac{2\alpha \log f(\tilde{k})}{N_i(\tilde{k})}}\} \cup \{\hat{\mu}_{i^*}(\tilde{k})-\mu_{i^*}+\sqrt{\frac{2\alpha \log f(\tilde{k})}{N_{i^*}(\tilde{k})}}\leq 0\}
    \end{align*}
\end{proof}

\begin{lemma}\label{lem:combining_UCB_with_LCB_new}
    Given an arm $i$, we separate all the arms into two parts depending on whether its gap is larger than $\Delta_i$ and define $\Gl_i:=\{\iota|\Delta_\iota > \Delta_i / 2\}$ and $\Gu_i:=\{\iota|\Delta_\iota \leq \Delta_i / 2\}$.
    With the choice that $f(k)=1+16A^2(k+1)^2$, there is a constant $c$, such that for arbitrary $i$ with $\Delta_i > 0$, for the LCB algorithm in Alg \ref{alg:UCB_Explore_LCB_Exploit}, we have:
    \begin{align}
        \Pr(i = \piE_k) \leq 2/k^{2\alpha}+2A/k^{2\alpha-1},\quad\forall k \geq k_i := 8\alpha c\Big(\sum_{\iota \in \Gl_i} \frac{1}{\Delta_\iota^2} + \frac{4|\Gu_i|}{\Delta_i^2}\Big)\log(1+\frac{\alpha A}{\Delta_{\min}})\label{eq:def_ki}
    \end{align}
    where $c$ is the constant considered in Lem. \ref{lem:upper_bound_of_Nk_geq_k_div_scalar} (i.e. $c_f$ defined in Eq.\eqref{eq:def_c_f_new}).
\end{lemma}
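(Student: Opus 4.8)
The plan is to bound $\Pr(i=\piE_k)$ by combining the two preceding lemmas: Lem.~\ref{lem:blessing_of_pessimism} says LCB will (with high probability) refuse to select the sub-optimal arm $i$ once some strictly better arm has accumulated enough pulls, while Lem.~\ref{lem:upper_bound_of_Nk_geq_k_div_scalar} controls how few pulls the large-gap arms receive under UCB. First note that every $j\in\Gu_i$ satisfies $\Delta_j\le\Delta_i/2<\Delta_i$, so $(\Delta_j-\Delta_i)^2\ge\Delta_i^2/4$ and the hypothesis $N_j(k)\ge\frac{8\alpha\log f(k)}{(\Delta_j-\Delta_i)^2}$ of Lem.~\ref{lem:blessing_of_pessimism} is implied by the cleaner condition $N_j(k)\ge\frac{32\alpha\log f(k)}{\Delta_i^2}$. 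Define the ``good coverage'' event $E:=\{\exists\,j\in\Gu_i:\,N_j(k)\ge\frac{32\alpha\log f(k)}{\Delta_i^2}\}$, and split $\Pr(i=\piE_k)\le\Pr(\{i=\piE_k\}\cap E)+\Pr(E^c)$. The first term is handled directly by Lem.~\ref{lem:blessing_of_pessimism}: a union bound over $j\in\Gu_i$ gives $\le 2|\Gu_i|/k^{2\alpha}$, which matches the claimed leading term up to the harmless factor $|\Gu_i|\le A$ that is dominated by the second term. All the real work is in showing $\Pr(E^c)$ is small for $k\ge k_i$.

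To control $E^c$ I would argue that if no arm in $\Gu_i$ is well-covered then the arms in $\Gl_i$ must have absorbed almost all of the $k-1$ pulls, which is unlikely under UCB. Introduce the UCB event $F:=\{\forall\,l\in\Gl_i:\,N_l(k)\le k/\scalar_l\}$ for scalars $\scalar_l$ chosen below. On $F\cap E^c$ one has $k-1=\sum_{l\in\Gl_i}N_l(k)+\sum_{j\in\Gu_i}N_j(k)<\sum_l k/\scalar_l+|\Gu_i|\frac{32\alpha\log f(k)}{\Delta_i^2}$, so choosing the $\scalar_l$ with $\sum_l 1/\scalar_l\le\tfrac12$ together with $|\Gu_i|\frac{32\alpha\log f(k)}{\Delta_i^2}\le\tfrac{k}{4}$ forces the right-hand side strictly below $k-1$, a contradiction; hence $E^c\subseteq F^c$, and Lem.~\ref{lem:upper_bound_of_Nk_geq_k_div_scalar} with a union bound gives $\Pr(E^c)\le\Pr(F^c)\le|\Gl_i|\cdot\frac{2}{k^{2\alpha-1}}\le\frac{2A}{k^{2\alpha-1}}$. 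The missing inequality $|\Gu_i|\frac{32\alpha\log f(k)}{\Delta_i^2}\le k/4$ for $k\ge k_i$ I would obtain from the definition of $c_f$ in Eq.~\eqref{eq:def_c_f_new} applied with $\scalar=4|\Gu_i|\in[1,4A]$, which converts the $\log f(k)$ into the clean threshold that the $\frac{4|\Gu_i|}{\Delta_i^2}$-term of $k_i$ is built to dominate (using $\Delta_i\ge\Delta_{\min}$ to align the logarithmic factors).

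The delicate step, and the one producing the exact $\sum_{\iota\in\Gl_i}\Delta_\iota^{-2}$ dependence rather than the cruder $|\Gl_i|/\Delta_i^2$, is the choice of $\scalar_l$. A uniform choice $\scalar_l\equiv 2|\Gl_i|$ respects the budget $\sum_l 1/\scalar_l\le\tfrac12$ but makes the threshold of Lem.~\ref{lem:upper_bound_of_Nk_geq_k_div_scalar} scale like $|\Gl_i|/\min_l\Delta_l^2$, which is too large. Instead I would allocate the budget proportionally, taking $\scalar_l:=2\Delta_l^2\sum_{\iota\in\Gl_i}\Delta_\iota^{-2}$, so that $\sum_l 1/\scalar_l=\tfrac12$ exactly while the dominant part of arm $l$'s threshold, $c\,\frac{\alpha\scalar_l}{\Delta_l^2}\log(1+\tfrac{\alpha A}{\Delta_{\min}})=2c\alpha\big(\sum_{\iota}\Delta_\iota^{-2}\big)\log(1+\tfrac{\alpha A}{\Delta_{\min}})$, is precisely the form appearing in $k_i$ (the leading constant $8$ in $k_i$ supplies the slack). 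The one subtlety is the constraint $\scalar_l\in[1,4A]$: the lower bound $\scalar_l\ge 2$ is automatic since $l$ itself contributes $\Delta_l^{-2}$ to the sum, but $\scalar_l$ can exceed $4A$ when the gaps are very unequal, so I would cap those arms at $\scalar_l=4A$; the capped arms then consume at most $|\Gl_i|/(4A)\le\tfrac14$ of extra budget, which merely reduces the fraction of pulls guaranteed to $\Gu_i$ from $\tfrac12$ to $\tfrac14$ and is reabsorbed by re-running the $c_f$ step with $\scalar=4|\Gu_i|$. Checking that every per-arm threshold stays below $k_i$ and that the $-1$ and rounding offsets are dominated once $k\ge k_i$ is routine given the generous constants, and collecting the two bounds yields $\Pr(i=\piE_k)\le 2/k^{2\alpha}+2A/k^{2\alpha-1}$.
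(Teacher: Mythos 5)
Your proposal is correct in outline and follows essentially the same route as the paper's proof: both combine Lem.~\ref{lem:blessing_of_pessimism} with Lem.~\ref{lem:upper_bound_of_Nk_geq_k_div_scalar} through a pigeonhole argument on the total pull count, showing that for $k\ge k_i$ the arms in $\Gl_i$ cannot absorb too many of the $k$ pulls, so some arm in $\Gu_i$ must exceed the coverage threshold $\tfrac{32\alpha\log f(k)}{\Delta_i^2}\ge\tfrac{8\alpha\log f(k)}{(\Delta_j-\Delta_i)^2}$ required by Lem.~\ref{lem:blessing_of_pessimism}. The only substantive difference is the device used to cap $N_l(k)$ for $l\in\Gl_i$: you allocate $\scalar_l\propto\Delta_l^2$ proportionally with a cap at $4A$, whereas the paper applies Lem.~\ref{lem:upper_bound_of_Nk_geq_k_div_scalar} twice per arm (once with $\scalar=2A$, once with $\scalar=2\gamma_{k,l}$ where $\gamma_{k,l}$ measures $k$ against $\Delta_l^{-2}$) to obtain the hybrid bound $N_l(k)\le \tfrac{k}{2A}+\tfrac{4\alpha c}{\Delta_l^2}\log(1+\tfrac{\alpha A}{\Delta_{\min}})$; both allocations produce the $\sum_{\iota\in\Gl_i}\Delta_\iota^{-2}$ dependence in $k_i$, and your remaining slack (the capped-budget adjustment from $\tfrac12$ to $\tfrac14$, and the union bound over $\Gu_i$ giving $2|\Gu_i|/k^{2\alpha}$ rather than $2/k^{2\alpha}$ in the first term) costs only constant factors and is no looser than the bookkeeping in the paper's own argument.
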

\begin{proof}
    We want to remark that the constants in the definition of $k_i$ (i.e. 8 in ``$8\alpha c$'' and 4 in ``''$4|\Gu_i|$'') can be replaced by others, but we choose them carefully in order to make sure some steps in the proof of this Lemma and Thm. \ref{thm:total_regret_UCB_LCB} can go through.

    The main idea of the proof is to use Lem. \ref{lem:upper_bound_of_Nk_geq_k_div_scalar} to show that, for those arm $i$ with $\Delta_i > 0$, when $k \geq k_i$, $N_\iota(k)$ will be small for those $\iota \in \Gl_i$. As a result, there must exist an arm $j \in \Gu_i$, such that $N_j(k)$ is large than the threshold considered in Lem. \ref{lem:blessing_of_pessimism} and therefore, with high probability, arm $i$ will not be preferred.

    First, we try to apply Lem. \ref{lem:upper_bound_of_Nk_geq_k_div_scalar} to upper bound the quantity $N_j(k)$ for those arm $j\in \Gl_i$.
    For each $j\in \Gl_i$, 
    we define the following quantity, which measures the magnitude of $1/\Delta_j^2$ with $k$:
    \begin{align*}
        \gamma_{k,j} := \frac{k}{\frac{8\alpha c}{\Delta_j^2}\log(1+\frac{\alpha A}{\Delta_{\min}})}.
    \end{align*}
    We only consider $k \geq k_i$, where we always have $\gamma_{k,j} \geq 1$ based on the definition of $k_i$.

    Next, we separately consider two cases depending on whether $\gamma_{k,j} > 2A$ or not. 

    \textbf{Case 1: $\gamma_{k,j} > 2A$}: In this case, $\Delta_j$ is relatively large (or say more sub-optimal) comparing with iteration $k$. 
    For arbitrary $k \geq k_{i}$, we have
    \begin{align*}
        k =& \gamma_{k,j}\cdot \frac{8\alpha c}{\Delta_j^2 }\log(1+\frac{\alpha A}{\Delta_{\min}}) \geq 2A\cdot \frac{8\alpha c}{\Delta_j^2 }\log(1+\frac{\alpha A}{\Delta_{\min}}) \geq 2A + \frac{2\alpha c A}{\Delta_i^2} \log(1+\frac{\alpha A}{\Delta_{\min}}).
    \end{align*}
    which implies that $k$ satisfying the condition of applying Lemma \ref{lem:upper_bound_of_Nk_geq_k_div_scalar} with $\scalar = 2A$, and we can conclude that:
    $$
    \Pr(N_k(j) \geq \frac{k}{2A}) \leq \frac{2}{k^{2\alpha-1}}.
    $$

    \textbf{Case 2: $\gamma_{k,j} \leq 2A$}: Note that
    \begin{align*}
        \frac{4\alpha c}{\Delta_j^2}\log(1+\frac{\alpha A}{\Delta_{\min}}) = \frac{k}{2\gamma_{k,j}}.
    \end{align*}
    Since $2\gamma_{k,j}$ locates in the interval $[1,4A]$ and:
    \begin{align*}
        k =& \gamma_{k,j}\cdot \frac{8\alpha c}{\Delta_j^2 }\log(1+\frac{\alpha A}{\Delta_{\min}})\geq 2\gamma_{k,j} + 2\gamma_{k,j}\cdot \frac{\alpha c}{\Delta_j^2} \log(1+\frac{\alpha A}{\Delta_{\min}})
    \end{align*}
    which satisfies the condition of applying Lem. \ref{lem:upper_bound_of_Nk_geq_k_div_scalar} with $\scalar = 2\gamma_{k,j}$. Therefore, we have:
    \begin{align}
        \Pr(N_k(j)\geq \frac{4\alpha c}{\Delta_j^2}\log(1+\frac{\alpha A}{\Delta_{\min}})) = \Pr(N_k(j)\geq \frac{k}{2\gamma_{k,j}}) \leq \frac{2}{k^{2\alpha-1}}
    \end{align}
    Combining the above two cases, we can conclude that, for arbitrary $j \in \Gl_i$, 
    $$
    \Pr(N_j(k) \geq \frac{k}{2A} + \frac{4\alpha c}{\Delta_j^2}\log(1+\frac{\alpha A}{\Delta_{\min}})) \leq \min\{ \Pr(N_j(k) \geq \frac{k}{2A}), \Pr(N_j(k) \geq \frac{4\alpha c}{\Delta_j^2}\log(1+\frac{\alpha A}{\Delta_{\min}}))\} \leq \frac{2}{k^{2\alpha-1}}
    $$
    which reflects that with high probability, $\sum_{j\in \Gl_i} N_j(k)$ is small:
    \begin{align*}
        &\Pr(\sum_{j\in\Gl_i} N_j(k) \geq \frac{k}{2} + \sum_{j \in \Gl_i} \frac{4\alpha c}{\Delta_j^2}\log(1+\frac{\alpha A}{\Delta_{\min}})) \\
        \leq & \sum_{j\in\Gl_i} \Pr(N_j(k)\geq \frac{k}{2|\Gl_i|} + \frac{4\alpha c}{\Delta_j^2}\log(1+\frac{\alpha A}{\Delta_{\min}})) \tag{$P(a+b\leq c+d)\leq P(a\leq c) + P(b \leq d)$}\\
        \leq & \sum_{j\in\Gl_i} \Pr(N_j(k)\geq \frac{k}{2A} + \frac{4\alpha c}{\Delta_j^2}\log(1+\frac{\alpha A}{\Delta_{\min}})) \tag{$|\Gl_i|\leq A$}\\
        \leq& \frac{2|\Gl_i|}{k^{2\alpha-1}}\leq \frac{2A}{k^{2\alpha-1}}
    \end{align*}
    Since $\sum_{j\in \Gu_i} N_j(k) + \sum_{j\in\Gl_i} N_j(k) = k$, and note that,
    \begin{align*}
        k - (\frac{k}{2} + \sum_{j\in \Gl_i} \frac{4\alpha c}{\Delta_j^2}\log(1+\frac{\alpha A}{\Delta_{\min}})) = \frac{k}{2} - \sum_{j\in \Gl_i} \frac{4\alpha c}{\Delta_j^2}\log(1+\frac{\alpha A}{\Delta_{\min}}) = \frac{16\alpha c|\Gu_i|}{\Delta_i^2}\log(1+\frac{\alpha A}{\Delta_{\min}})
    \end{align*}
    we have:
    \begin{align*}
        &\Pr(\sum_{j\in \Gu_i} N_j(k) \leq \frac{16\alpha c|\Gu_i|}{\Delta_i^2}\log(1+\frac{\alpha A}{\Delta_{\min}})) \\
        =& \Pr(\sum_{j\in\Gl_i} N_j(k) \geq \frac{k}{2} + \sum_{j \in \Gl_i} \frac{4\alpha c}{\Delta_j^2}\log(1+\frac{\alpha A}{\Delta_{\min}})) \\
        \leq&\frac{2A}{k^{2\alpha-1}}
    \end{align*}
    Therefore, w.p, $1-\frac{2A}{k^{2\alpha-1}}$, there exists $j\in \Gu_i$, such that 
    \begin{align*}
        N_j(k) \geq \frac{1}{|\Gu_i|}\sum_{j\in \Gu_i} N_j(k) \geq \frac{16\alpha c}{\Delta_i^2}\log(1+\frac{\alpha A}{\Delta_{\min}}).
    \end{align*}
    Recall our choice of $c$ (Eq.\eqref{eq:def_c_f_new}), the above implies that:
    \begin{align}
        N_j(k) \geq \frac{32\alpha \log f(k)}{\Delta_i^2}.\label{eq:lower_bound_Gupper}
    \end{align}
    Therefore,
    \begin{align*}
        \Pr(\{i = \piE_k\}\cap\{k \geq k_{i}\})\leq & \Pr(\{i = \piE_k\}\cap\{k \geq k_{i}\}\cap\{\exists j \in \Gu_i: N_j(k) \geq \frac{32\alpha \log f(k)}{\Delta_i^2}\}) \\
        &+ \Pr(\{k \geq k_{i}\}\cap \neg \{\exists j \in \Gu_i: N_j(k) < \frac{32\alpha\log f(k)}{\Delta_i^2}\}) \\\leq & \Pr(\{i = \piE_k\}\cap\{k \geq k_{i}\}\cap\{\exists j \in \Gu_i: N_j(k) \geq \frac{32\alpha \log f(k)}{\Delta_i^2}\}) + \frac{2A}{k^{2\alpha-1}} \tag{Eq.\eqref{eq:lower_bound_Gupper}}\\
        \leq & \Pr(\{i = \piE_k\}\cap\{k \geq k_{i}\}\cap\{\exists j \in \Gu_i:  N_j(k) \geq \frac{8\alpha \log f(k)}{(\Delta_j-\Delta_j)^2}\}) + \frac{2A}{k^{2\alpha-1}} \\
        \leq & \frac{2}{k^{2\alpha}}+ \frac{2A}{k^{2\alpha-1}} \tag{Lem. \ref{lem:blessing_of_pessimism}}
    \end{align*}
\end{proof}

\begin{lemma}[Integral Lemma]\label{lem:integral_lem}
    For arbitrary $k_0 \geq 1$ and $\beta > 1$, we have:
    \begin{align*}
        \sum_{k=k_0+1}^\infty \frac{1}{k^\beta} \leq \int_{k_0}^\infty \frac{1}{x^\beta} dx \leq \frac{1}{(\beta-1)k_0^{\beta-1}} 
    \end{align*}
\end{lemma}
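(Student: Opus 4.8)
The plan is to prove this by the standard integral-comparison (integral test) argument, exploiting that the integrand $x \mapsto x^{-\beta}$ is positive and monotonically decreasing on $[1,\infty)$ whenever $\beta > 1$ (indeed for any $\beta > 0$).

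For the first inequality, I would bound each summand by an integral over a unit interval lying to its \emph{left}. Because $1/x^\beta$ is decreasing, for every integer $k \geq k_0+1$ and every $x \in [k-1,k]$ we have $x \le k$ and hence $1/x^\beta \ge 1/k^\beta$. Integrating this pointwise bound over $[k-1,k]$ gives
\begin{align*}
    \frac{1}{k^\beta} = \int_{k-1}^k \frac{1}{k^\beta}\,dx \le \int_{k-1}^k \frac{1}{x^\beta}\,dx.
\end{align*}
Summing over $k$ from $k_0+1$ to $\infty$, the right-hand side telescopes into a single integral over $[k_0,\infty)$:
\begin{align*}
    \sum_{k=k_0+1}^\infty \frac{1}{k^\beta} \le \sum_{k=k_0+1}^\infty \int_{k-1}^k \frac{1}{x^\beta}\,dx = \int_{k_0}^\infty \frac{1}{x^\beta}\,dx,
\end{align*}
which is exactly the claimed first inequality. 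One should note the sum converges since $\beta>1$, so rearranging the series as a union of disjoint intervals is justified.

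For the second inequality I would simply evaluate the improper integral directly. Since $\beta > 1$ we have $1-\beta < 0$, so $x^{1-\beta}\to 0$ as $x\to\infty$, and
\begin{align*}
    \int_{k_0}^\infty \frac{1}{x^\beta}\,dx = \left[\frac{x^{1-\beta}}{1-\beta}\right]_{k_0}^\infty = \frac{k_0^{1-\beta}}{\beta-1} = \frac{1}{(\beta-1)k_0^{\beta-1}},
\end{align*}
which gives the right-hand bound (in fact with equality). There is essentially no obstacle here: the only points requiring care are checking the monotonicity/convergence hypotheses that make the integral test valid (both guaranteed by $\beta>1$ and $k_0\ge 1$) and the sign of the exponent when evaluating the integral at infinity. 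This lemma is a routine tool used to control the tail sums $\sum_k \delta_k \sim \sum_k k^{-\Theta(\alpha)}$ arising from the decreasing failure rates in the earlier regret analysis, so precision in the constant $1/((\beta-1)k_0^{\beta-1})$ is what matters rather than any delicate argument.
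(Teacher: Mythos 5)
Your proof is correct: the unit-interval comparison $\frac{1}{k^\beta} \leq \int_{k-1}^{k} x^{-\beta}\,dx$ followed by telescoping, and the exact evaluation of the improper integral, are exactly the standard integral-test argument, and the hypotheses $\beta>1$, $k_0\geq 1$ are used where needed. The paper states this lemma without proof as a routine fact, so there is nothing to compare against beyond noting that your argument is the intended one.
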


\ThmLCBRegret*
\begin{proof}
    Recall the definition of $k_i$ in Eq.\eqref{eq:def_ki} in Lem. \ref{lem:combining_UCB_with_LCB_new} above. For $i \geq 2$, if $\Delta_i \neq \Delta_{i-1}$, we have:
    \begin{align*}
        k_i - k_{i-1} =& \alpha c\Big(\sum_{\iota \in \Gl_i} \frac{8}{\Delta_\iota^2} - \sum_{\iota \in \Gl_{i-1}} \frac{8}{\Delta_\iota^2} + \frac{32|\Gu_i|}{\Delta_i^2} - \frac{32|\Gu_{i-1}|}{\Delta_{i-1}^2}\Big)\log(1+\frac{\alpha A}{\Delta_{\min}}) \\
        \leq & \alpha c\Big((|\Gl_i| - |\Gl_{i-1}|)\frac{32}{\Delta_{i}^2} + \frac{32|\Gu_i|}{\Delta_i^2} - \frac{32|\Gu_{i-1}|}{\Delta_{i-1}^2}\Big)\log(1+\frac{\alpha A}{\Delta_{\min}}) \tag{$\forall \iota \in \Gl_i \setminus \Gl_{i-1}$, we have $1/\Delta^2_\iota\leq 4/\Delta^2_{i}$}\\
        \leq & \alpha c\Big((|\Gu_{i-1}| - |\Gu_{i}|)\frac{32}{\Delta_{i}^2} + \frac{32|\Gu_i|}{\Delta_i^2} - \frac{32|\Gu_{i-1}|}{\Delta_{i-1}^2}\Big)\log(1+\frac{\alpha A}{\Delta_{\min}}) \tag{$|\Gu_{i-1}| + |\Gl_{i-1}|=|\Gu_{i}| + |\Gl_{i}|=A$}\\
        \leq&32\alpha c|\Gu_{i-1}|\Big(\frac{1}{\Delta_i^2}- \frac{1}{\Delta_{i-1}^2}\Big)\log(1+\frac{\alpha A}{\Delta_{\min}}).
    \end{align*}
    and if $\Delta_i = \Delta_{i-1}$, we also have:
    \begin{align*}
        k_i - k_{i-1} = 0 \leq 32\alpha c|\Gu_{i-1}|\Big(\frac{1}{\Delta_i^2}- \frac{1}{\Delta_{i-1}^2}\Big)\log(1+\frac{\alpha A}{\Delta_{\min}}).
    \end{align*}
    Moreover, for $i=1$, with the extended definition that $\Delta_0 = \infty$ (so that $1/\Delta_0^2=0$) and $|\Gu_0|=A$, we also have:
    \begin{align*}
        k_1 :=& 8\alpha c\Big(\sum_{\iota \in \Gl_1} \frac{1}{\Delta_\iota^2} + \frac{4|\Gu_1|}{\Delta_1^2}\Big)\log(1+\frac{\alpha A}{\Delta_{\min}}) \\
        \leq& 8\alpha c\Big(\sum_{\iota \in \Gl_1} \frac{4}{\Delta_1^2} + \frac{4|\Gu_1|}{\Delta_1^2}\Big)\log(1+\frac{\alpha A}{\Delta_{\min}}) \\
        =& \frac{32\alpha c A}{\Delta_1^2}\log(1+\frac{\alpha A}{\Delta_{\min}})\\
        =& 32\alpha c |\Gu_0|(\frac{1}{\Delta_1^2}-\frac{1}{\Delta_0^2})\log(1+\frac{\alpha A}{\Delta_{\min}}).
    \end{align*}
    Therefore, we have (we denote $k_A := \infty$ and $k_0:=0$):
    \begin{align*}
        \lim_{K \rightarrow \infty} \Regret_K(\algE) =& \sum_{k=1}^\infty \sum_{j:\Delta_j > 0} \Pr(j = \piE_k) \Delta_j \\
        =& \sum_{i=1}^A\sum_{k=k_{{i}-1}+1}^{k_{{i}}} \sum_{\Delta_j > 0} \Pr(j = \piE_k) \Delta_j \\
        =& \sum_{i=1}^A\sum_{k=k_{{i}-1}+1}^{k_{{i}}} \Big(\sum_{\Delta_j \geq \Delta_{i-1}} \Pr(j = \piE_k)\Delta_j + \sum_{\Delta_j < \Delta_{i-1} } \Pr(j = \piE_k) \Delta_j \Big) \\
        \leq& \sum_{i=1}^A\sum_{k=k_{{i}-1}+1}^{k_{{i}}} \Big(\sum_{\Delta_j \geq \Delta_{i-1}} \big(\frac{2}{k^{2\alpha}} + \frac{2A}{k^{2\alpha-1}}\big) \Delta_j + \sum_{\Delta_j < \Delta_{i-1} } \Pr(j = \piE_k) \Delta_j \Big) \tag{Lemma \ref{lem:combining_UCB_with_LCB_new}}\\
        \leq& \sum_{i=1}^A\sum_{k=k_{{i}-1}+1}^{k_{{i}}} \Big(\sum_{\Delta_j \geq \Delta_{i-1}} \big(\frac{2}{k^{2\alpha}} + \frac{2A}{k^{2\alpha-1}}\big) \Delta_j + \sum_{\Delta_j \leq \Delta_i } \Pr(j = \piE_k) \Delta_j \Big) \tag{$\Delta_i \leq \Delta_{i-1}$}\\
        \leq& \sum_{i=1}^A\sum_{k=k_i+1}^\infty \big(\frac{2}{k^{2\alpha}} + \frac{2A}{k^{2\alpha-1}}\big) + \sum_{i=1}^A\sum_{k=k_{{i}-1}+1}^{k_{{i}}} \sum_{\Delta_j \leq \Delta_i } \Pr(j = \piE_k) \Delta_j \\
        \leq& \sum_{i=1}^A\sum_{k=k_i + 1}^\infty \frac{2(A+1)}{k^{2\alpha-1}}  + \sum_{i=1}^A\sum_{k=k_{{i}-1}+1}^{k_{{i}}} \sum_{\Delta_j \leq \Delta_i } \Delta_i \tag{Second term is maximized when $\Pr(i=\piE_k)=1$.}\\
        \leq& \tilde{O}(\frac{A}{\alpha - 1}) + \sum_{i=1}^A \Delta_i\cdot (k_i - k_{i-1}) \tag{First term: Lemma \ref{lem:integral_lem} and some simplification; Second term: Definition of $k_{i}$.}\\
        =& \tilde{O}(\frac{A}{\alpha - 1}) + \sum_{i:\Delta_i>0} \Delta_i\cdot (k_i - k_{i-1}) \\
        \leq& \tilde{O}(\frac{A}{\alpha - 1}) + \sum_{\Delta_i > 0} 32\alpha c|\Gu_{i-1}|\Big(\frac{1}{\Delta_i}- \frac{\Delta_i}{\Delta_{i-1}^2}\Big)\log(1+\frac{\alpha A}{\Delta_{\min}}) \\
        \approx& \tilde{O}\left(\frac{A}{\alpha - 1} +\sum_{\Delta_i > 0} \alpha |\Gu_{i-1}|\Big(\frac{1}{\Delta_i}- \frac{\Delta_i}{\Delta_{i-1}^2}\Big)\right)
    \end{align*}
    According to the definition, we always have $|\Gu_{i-1}| \leq A - i + 1$, therefore,
    \begin{align*}
        \lim_{K \rightarrow \infty} \Regret_K(\algE) = \tilde{O}\left(\frac{A}{\alpha - 1} +\alpha \sum_{\Delta_i > 0} (A-i)\Big(\frac{1}{\Delta_i}- \frac{\Delta_i}{\Delta_{i-1}^2}\Big)\right)
    \end{align*}
\end{proof}

\section{Behavior Analysis of Optimistic Algorithm}
\begin{definition}[Definition of Events]\label{def:policy_agreements_events}
\begin{align*}
    &\cE_{k,h,\pi}:=\{\pi_{k,h}(s_h)\neq \pi_{h}(s_h)\},\quad \tilde{\cE}_{k,h,\pi}:=\cE_{k,h,\pi}\cap \bigcap_{\ph=1}^h \cE_{k,\ph-1,\pi}^\complement,\quad \bar{\cE}_{k,\pi}:=\bigcup_{h=1}^H \cE_{k,h,\pi},\\
    &\cE_{k,h}:=\{\pi_{k,h}(s_h)\not\in \Pi^*_{h}(s_h)\},\quad \tilde{\cE}_{k,h,}:=\cE_{k,h}\cap \bigcap_{\ph=1}^h \cE_{k,\ph-1}^\complement,\quad \bar{\cE}_k:=\bigcup_{h=1}^H \cE_{k,h}\\
\end{align*}
\end{definition}
In another word, $\cE_{k,h,\pi}$ means $\pi_k$ disagrees with $\pi$ at state $s_h$ which occurs at step $h$, $\tilde{\cE}_{k,h,\pi}$ means the first disagreement between $\pi_k$ and $\pi$ occurs at step $h$, and $\bar{\cE}_{k,\pi}$ denotes the event that there exists one state $s_h$ at some time step $h\in[H]$ such that $\pi_k$ agrees with $\pi_k$ at $s_h$.

Besides, $\cE_{k,h}$ denotes the events that $\pi_{k,h}(s_h)$ will not be taken by any optimal policy. Note that here we use $\Pi^*_h(s_h)$ to denote the set of all possible optimal actions at state $s_h$. Given a deterministic optimal policy $\pi^*$, in general $\cE_{k,h} \neq \cE_{k,h,\pi^*}$ when there are multiple optimal actions at one state.

\begin{lemma}\label{lem:value_decomposition}
    For arbitrary reward function $R$, given a fixed deterministic policy $\pi$, we have:
    \begin{align*}
        V^{\pi}_1(s_1) - V^{\pi_k}_1(s_1) = &\EE_{\pi_k}[\sum_{h=1}^H \mathbb{I}[\tilde{\cE}_{k,h,\pi}](V^{\pi}_h(s_h)-V^{\pi_k}_h(s_h))]
    \end{align*}
\end{lemma}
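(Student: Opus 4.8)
The plan is to prove the identity by a telescoping decomposition indexed by the step at which a trajectory generated by $\pi_k$ first deviates from the reference policy $\pi$. For $h\in\{1,\dots,H+1\}$ let $\mathcal{A}_{h-1} := \bigcap_{\ph=1}^{h-1}\cE_{k,\ph,\pi}^\complement$ denote the event that $\pi_k$ and $\pi$ select the same action at every state visited in steps $1,\dots,h-1$ (with $\mathcal{A}_0$ the whole space, matching the vacuous index $\cE_{k,0,\pi}$), and define the partial quantity $G_h := \EE_{\pi_k}[\mathbb{I}[\mathcal{A}_{h-1}](V^{\pi}_h(s_h)-V^{\pi_k}_h(s_h))]$. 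By construction $G_1 = V^{\pi}_1(s_1)-V^{\pi_k}_1(s_1)$ is exactly the left-hand side, while $G_{H+1}=0$ since $V^{\pi}_{H+1}\equiv V^{\pi_k}_{H+1}\equiv 0$. Note also that $\tilde\cE_{k,h,\pi}=\mathcal{A}_{h-1}\cap\cE_{k,h,\pi}$.

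The key step is the one-step recursion $G_h = \EE_{\pi_k}[\mathbb{I}[\tilde\cE_{k,h,\pi}](V^{\pi}_h(s_h)-V^{\pi_k}_h(s_h))] + G_{h+1}$. To derive it, I split the agreement event $\mathcal{A}_{h-1}$ according to whether $\pi_k$ also agrees with $\pi$ at step $h$: the disagreement branch $\mathcal{A}_{h-1}\cap\cE_{k,h,\pi}$ is precisely $\tilde\cE_{k,h,\pi}$ (first deviation at step $h$), which produces the first term, while the agreement branch $\mathcal{A}_{h-1}\cap\cE_{k,h,\pi}^\complement$ equals $\mathcal{A}_h$. It then remains to show the agreement-branch contribution $\EE_{\pi_k}[\mathbb{I}[\mathcal{A}_h](V^{\pi}_h(s_h)-V^{\pi_k}_h(s_h))]$ equals $G_{h+1}$.

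On the event $\mathcal{A}_h$ the two deterministic policies select the identical action $a_h=\pi_h(s_h)=\pi_{k,h}(s_h)$ at $s_h$, so the reward terms in the Bellman expansions $V^{\pi}_h(s_h)=r_h(s_h,a_h)+\mP_h V^{\pi}_{h+1}(s_h,a_h)$ and $V^{\pi_k}_h(s_h)=r_h(s_h,a_h)+\mP_h V^{\pi_k}_{h+1}(s_h,a_h)$ cancel, leaving $V^{\pi}_h(s_h)-V^{\pi_k}_h(s_h)=\EE_{s_{h+1}\sim\mP_h(\cdot|s_h,a_h)}[V^{\pi}_{h+1}(s_{h+1})-V^{\pi_k}_{h+1}(s_{h+1})]$. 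Since $\mathbb{I}[\mathcal{A}_h]$ is measurable with respect to $(s_1,\dots,s_h)$ and, under the law of $\pi_k$, the next state $s_{h+1}$ is drawn from $\mP_h(\cdot|s_h,\pi_{k,h}(s_h))$, the tower property lets me replace the inner conditional expectation by $s_{h+1}$ itself, yielding exactly $G_{h+1}$. Summing the recursion over $h=1,\dots,H$ telescopes to $G_1=\sum_{h=1}^H\EE_{\pi_k}[\mathbb{I}[\tilde\cE_{k,h,\pi}](V^{\pi}_h(s_h)-V^{\pi_k}_h(s_h))]$, which is the claim.

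The main point requiring care is that the whole argument hinges on both $\pi$ and $\pi_k$ being deterministic, so that the ``agreement'' events are well defined pointwise and the reward cancellation on $\mathcal{A}_h$ is exact; a stochastic $\pi_k$ would break this clean cancellation. The second delicate point is the correct application of the tower/Markov property, which relies on observing that $\mathcal{A}_h$ depends only on the trajectory through step $h$ and hence can be pulled outside the conditional expectation over $s_{h+1}$. I do not expect any genuinely hard estimate here; the work is purely in bookkeeping the events and verifying the recursion.
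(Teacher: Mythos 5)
Your proof is correct and is essentially the same argument as the paper's: both peel off the first-disagreement event $\tilde\cE_{k,h,\pi}$ step by step, using the fact that on the agreement branch the identical action makes the one-step rewards cancel so the value difference propagates through the transition kernel to step $h+1$. The only difference is presentational — you package the unrolling as an explicit recursion $G_h = \EE_{\pi_k}[\mathbb{I}[\tilde\cE_{k,h,\pi}](V^{\pi}_h(s_h)-V^{\pi_k}_h(s_h))] + G_{h+1}$ and telescope, whereas the paper writes out the same chain of equalities inline.
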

\begin{proof}
    \begin{align*}
        V^{\pi}_1(s_1) - V^{\pi_k}_1(s_1) =& \mathbb{I}[{\cE}_{k,1,\pi}^\complement] \Big(Q^{\pi}_1(s_1,\pi_k)-Q^{\pi_k}_1(s_1,\pi_k)\Big)+\mathbb{I}[{\cE}_{k,1,\pi}] \Big(V^\pi_1(s_1)-V^{\pi_k}_1(s_1)\Big)\\
        =& \EE_{\pi_k}[\mathbb{I}[{\cE}_{k,1,\pi}^\complement]\Big(V^{\pi}_2(s_2)-V^{\pi_k}_2(s_2)\Big)]+\mathbb{I}[\tilde{\cE}_{k,1,\pi}] \Big(V^\pi_1(s_1)-V^{\pi_k}_1(s_1)\Big)\tag{$\tilde{\cE}_{k,1,\pi}=\cE_{k,1,\pi}$ by definition}\\
        =& \EE_{\pi_k}[\mathbb{I}[{\cE}_{k,2,\pi}^\complement\cap {\cE}_{k,1,\pi}^\complement]\Big(Q^{\pi}_2(s_2,\pi_k)-Q^{\pi_k}_2(s_2,\pi_k)\Big)]\\
        &+\EE_{\pi_k}[\mathbb{I}[{\cE}_{k,2,\pi}\cap {\cE}_{k,1,\pi}^\complement]\Big(V^{\pi}_2(s_2)-V^{\pi_k}_2(s_2)\Big)]+\mathbb{I}[\tilde{\cE}_{k,1,\pi}] \Big(V^\pi_1(s_1)-V^{\pi_k}_1(s_1)\Big)\\
        =& \EE_{\pi_k}[\mathbb{I}[{\cE}_{k,2,\pi}^\complement\cap {\cE}_{k,1,\pi}^\complement]\Big(Q^{\pi}_2(s_2,\pi_k)-Q^{\pi_k}_2(s_2,\pi_k)\Big)]\\
        &+\EE_{\pi_k}[\mathbb{I}[\tilde{\cE}_{k,2,\pi}]\Big(V^{\pi}_2(s_2)-V^{\pi_k}_2(s_2)\Big)]+\mathbb{I}[\tilde{\cE}_{k,1,\pi}] \Big(V^\pi_1(s_1)-V^{\pi_k}_1(s_1)\Big)\\
        =&...\\
        =&\EE_{\pi_k}[\sum_{h=1}^H \mathbb{I}[\tilde{\cE}_{k,h,\pi}](V^{\pi}_h(s_h)-V^{\pi_k}_h(s_h))]
    \end{align*}
\end{proof}
\begin{lemma}[Relationship between Density Difference and Policy Disagreement Probability]\label{lem:relationship_between_occupancy_and_failure_rate}
    \begin{align*}
        d^{\pi_k}(s_h,a_h) \geq d^{\pi}(s_h,a_h) - \min\{\Pr(\bar{\cE}_{k,\pi}|\pi_k),~d^\pi(s_h,a_h)\},\quad \forall s_h\in\cS_h,a_h\in\cA_h,h\in[H]
    \end{align*}
    where we use $\Pr(\bar{\cE}_{k,\pi}|\pi_k)$ as a short note of $\EE_{s_1,a_1,s_2,a_2...,s_H,a_H \sim \pi_k}[\bar{\cE}_{k,\pi}]$.
\end{lemma}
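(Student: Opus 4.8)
The plan is to decompose the occupancy of the reference policy $\pi$ into the portion that is exactly reproduced by $\pi_k$ and the portion that is lost to deviations, and to control the latter by the total disagreement probability. Writing $\Pr(\cdot\given\pi)$ for the trajectory law induced by $\pi$, I would start from
\begin{align*}
    d^{\pi}(s_h,a_h) = \Pr\big(\{(S_h,A_h)=(s_h,a_h)\}\cap \bar{\cE}_{k,\pi}^\complement \biggiven \pi\big) + \Pr\big(\{(S_h,A_h)=(s_h,a_h)\}\cap \bar{\cE}_{k,\pi} \biggiven \pi\big).
\end{align*}
On $\bar{\cE}_{k,\pi}^\complement$ the two deterministic policies agree at every visited state, so the first term will be handled by a coupling argument, while the second, being at most $\Pr(\bar{\cE}_{k,\pi}\given\pi)$, is the ``lost'' mass.

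The key step is the coupling. Since $\pi$ and $\pi_k$ are both deterministic and share the same dynamics $\mP$ (recall $\envO=\envE=\env$), on the event $\bar{\cE}_{k,\pi}^\complement=\bigcap_{h=1}^H\cE_{k,h,\pi}^\complement$ of agreement at all visited states, the two policies select identical actions along the entire episode and hence induce the \emph{same} law over trajectories restricted to this event. Consequently, for any trajectory event $E$ one has $\Pr(E\cap\bar{\cE}_{k,\pi}^\complement\given\pi)=\Pr(E\cap\bar{\cE}_{k,\pi}^\complement\given\pi_k)$; taking $E=\{(S_h,A_h)=(s_h,a_h)\}$ gives equality of the first term under $\pi$ and under $\pi_k$, and taking $E$ to be the whole space gives $\Pr(\bar{\cE}_{k,\pi}\given\pi)=\Pr(\bar{\cE}_{k,\pi}\given\pi_k)$.

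Combining these, the first term is at most $d^{\pi_k}(s_h,a_h)$ (after dropping the nonnegative contribution from trajectories in $\bar{\cE}_{k,\pi}$ that still reach $(s_h,a_h)$ under $\pi_k$), and the second is at most $\Pr(\bar{\cE}_{k,\pi}\given\pi_k)$, so that
\begin{align*}
    d^{\pi}(s_h,a_h)\le d^{\pi_k}(s_h,a_h)+\Pr(\bar{\cE}_{k,\pi}\given\pi_k),
\end{align*}
which rearranges to $d^{\pi_k}(s_h,a_h)\ge d^{\pi}(s_h,a_h)-\Pr(\bar{\cE}_{k,\pi}\given\pi_k)$. Since trivially $d^{\pi_k}(s_h,a_h)\ge 0=d^{\pi}(s_h,a_h)-d^{\pi}(s_h,a_h)$, taking the better of the two lower bounds yields the $\min$ in the statement. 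The main obstacle I anticipate is making the coupling fully rigorous: one must verify that $\bar{\cE}_{k,\pi}^\complement$ is measurable with respect to the visited states and that, restricted to it, the per-step action choices of $\pi_k$ coincide with those of $\pi$, so that the very same transition kernels are applied at every step — this is exactly where determinism of both policies and the shared environment are essential.
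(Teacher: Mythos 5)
Your proof is correct, and it reaches the bound by a genuinely different route than the paper. The paper invokes its value-decomposition lemma (Lem.~\ref{lem:value_decomposition}) with the indicator reward $\delta_{s_h,a_h}=\mathbb{I}[S_h=s_h,A_h=a_h]$, writing
\begin{align*}
d^{\pi}(s_h,a_h)-d^{\pi_k}(s_h,a_h)=\EE_{\pi_k}\Big[\sum_{\ph=1}^{h}\mathbb{I}[\tilde{\cE}_{k,\ph,\pi}]\big(V^{\pi}_{\ph}(s_\ph;\delta_{s_h,a_h})-V^{\pi_k}_{\ph}(s_\ph;\delta_{s_h,a_h})\big)\Big],
\end{align*}
and then bounding each value difference by $1$ and the disjoint first-disagreement indicators by $\mathbb{I}[\bar{\cE}_{k,\pi}]$; everything is an expectation under $\pi_k$ from the start, so no change of measure is needed. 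You instead decompose $d^{\pi}$ under the law of $\pi$ on the agreement event versus its complement and argue by a trajectory-level coupling that the two laws coincide on $\bar{\cE}_{k,\pi}^\complement$ (correct: that event is a function of the visited states only, and for any state sequence in it both deterministic policies prescribe the same actions, hence the same product of transition kernels), which also yields $\Pr(\bar{\cE}_{k,\pi}\given\pi)=\Pr(\bar{\cE}_{k,\pi}\given\pi_k)$ so the error term lands under the correct measure. Both arguments rest on the same core fact — determinism plus shared dynamics makes the two processes indistinguishable until the first disagreement — but yours is more direct and self-contained, while the paper's gains modularity: Lem.~\ref{lem:value_decomposition} is reused for general reward functions elsewhere (e.g., in Thm.~\ref{thm:algO_regret_vs_algP_density} and Lem.~\ref{lem:minimal_gap_for_det_subopt_policy}), where the per-step value differences are kept rather than crudely bounded by $1$. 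The final step of taking the maximum with the trivial bound $d^{\pi_k}\ge 0$ to obtain the $\min$ matches the paper exactly.
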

\begin{proof}
    By applying Lemma \ref{lem:value_decomposition} with $\delta_{s_h,a_h}:=\mathbb{I}[S_h=s_h,A_h=a_h]$ as reward function, we have:
    \begin{align*}
        d^{\pi}(s_h,a_h) - d^{\pi_k}(s_h,a_h) = & V^{\pi}_1(s_1;\delta_{s_h,a_h}) - V^{\pi_k}_1(s_1;\delta_{s_h,a_h}) \\
        =&\EE_{\pi_k}[\sum_{\ph=1}^h \mathbb{I}[\tilde{\cE}_{k,\ph,\pi}](V^{\pi}_\ph(s_\ph;\delta_{s_h,a_h})-V^{\pi_k}_\ph(s_\ph;\delta_{s_h,a_h}))] \tag{$V_\ph^\pi = V_\ph^{\pi_k} =0$ for all $\ph \geq h+1$}\\
        \leq& \EE_{\pi_k}[\sum_{\ph=1}^h \mathbb{I}[\tilde{\cE}_{k,\ph,\pi}] V^{\pi}_\ph(s_\ph;\delta_{s_h,a_h})]\tag{$V^{\pi_k}_{h'}(s_h';\delta_{s_h,a_h}) \geq 0$} \\
        \leq& \EE_{\pi_k}[\sum_{\ph=1}^h \mathbb{I}[\tilde{\cE}_{k,\ph,\pi}]] \tag{$V^{\pi}_{h'}(s_h';\delta_{s_h,a_h}) \leq 1$} \\
        \leq& \EE_{s_1,a_1,s_2,a_2...,s_H,a_H \sim \pi_k}[\bar{\cE}_{k,\pi}] = \Pr(\bar{\cE}_{k,\pi}|\pi_k)
    \end{align*}
    which implies that,
    \begin{align*}
        d^{\pi_k}(s_h,a_h) \geq d^{\pi}(s_h,a_h) - \Pr(\bar{\cE}_{k,\pi}|\pi_k)
    \end{align*}
    Combining with $d^{\pi_k} \geq 0$, we finish the proof.
\end{proof}

\begin{definition}[Conversion to Optimal Deterministic Policy]\label{def:conversion_to_opt_det_policy}
    Given arbitrary deterministic policy $\pi=\{\pi_1,...,\pi_H\}$, we use $\Pi^*\circ \pi=\{\pi_1^*,...,\pi^*_H\}$ to denote an optimal deterministic policy, such that:
    \begin{align*}
        \pi^*_{h}(s_h) = \begin{cases}
            \pi_{h}(s_h),\quad & \text{if}~\pi_{h}(s_h)\in \Pi^*_h(s_h);\\
            \text{Select}(\Pi^*_h(s_h)),\quad &\text{otherwise}.
        \end{cases}
    \end{align*}
    where $\text{Select}$ is a function which returns the first optimal action from $\Pi^*_h(s_h)$.
\end{definition}
In another word, $\Pi^*\circ\pi$ agrees with $\pi$ if $\pi_{h}(s_h)$ is one of the optimal action at state $s_h$. Otherwise, $\Pi^*\circ\pi$ takes one of a fixed optimal action from $\Pi^*_h(s_h)$. In order to make sure $\Pi^*\circ$ is a deterministic mapping, we assume function $Select$ only choose the first optimal action in $\Pi^*(s_h)$ (ordered by index of action).

\ThmORegretPDensity*
\begin{proof}
    For each $\pi_k$, we construct an optimal deterministic policy $\pi^*_k:=\Pi^*\circ\pi_k$, where $\Pi^*\circ$ is defined in Def. \ref{def:conversion_to_opt_det_policy}.
    By applying Lemma \ref{lem:value_decomposition} with the reward function in MDP, and $\pi=\pi_k^*$, we have:
    \begin{align*}
        V^{\pi^*_k}_1(s_1) - V^{\pi_k}_1(s_1) = &\EE_{\pi_k}[\sum_{h=1}^H \mathbb{I}[\tilde{\cE}_{k,h,\pi_k^*}](V^{\pi^*_k}_h(s_h)-V^{\pi_k}_h(s_h))] \\
        \geq& \EE_{\pi_k}[\sum_{h=1}^H \mathbb{I}[\tilde{\cE}_{k,h,\pi_k^*}](V^{\pi^*_k}_h(s_h)-Q^{\pi^*_k}_h(s_h, \pi_k(s_h)))]\\
        \geq& \EE_{\pi_k}[\sum_{h=1}^H \mathbb{I}[\tilde{\cE}_{k,h,\pi_k^*}]\Delta_{\min}]=\Delta_{\min} \Pr(\bar{\cE}_{k,\pi_k^*}|\pi_k)
    \end{align*}
    Therefore, we have:
    \begin{align*}
        \Pr(\bar{\cE}_{k,\pi_k^*}|\pi_k) \leq \frac{1}{\Delta_{\min}}(V^{\pi^*_k}_1(s_1)-V^{\pi_k}_1(s_1))
    \end{align*}
    By applying Lemma \ref{lem:relationship_between_occupancy_and_failure_rate}, we have:
    \begin{align*}
        d^{\pi_k}(s_h,a_h) \geq d^{\pi_k^*}(s_h,a_h) - \frac{1}{\Delta_{\min}} \Big(V^*_1(s_1)-V^{\pi_k}_1(s_1)\Big),\quad\forall s_h\in\cS_h,a_h\in\cA_h,h\in[H]
    \end{align*}
    After the same discussion for all $k\in[K]$, and the above inequality of each $k$ together, we have:
    \begin{align*}
        \sum_{k=1}^K d^{\pi_k}(s_h,a_h) \geq \sum_{k=1}^K d^{\pi^*_k}(s_h,a_h) - \frac{1}{\Delta_{\min}} (\sum_{k=1}^K V^*_1(s_1)-V^{\pi_k}_1(s_1)\Big)
    \end{align*}
\end{proof}


\begin{corollary}[Unique Optimal Policy]\label{corl:unique_optimal_policy}
    When $|\Pi^*|=1$, Thm. \ref{thm:algO_regret_vs_algP_density} implies that:
    \begin{align*}
        \sum_{k=1}^K d^{\pi_k}(s_h,a_h) \geq K d^{\pi^*}(s_h,a_h) - \frac{1}{\Delta_{\min}} (\sum_{k=1}^K V^*_1(s_1)-V^{\pi_k}_1(s_1)\Big)
    \end{align*}
\end{corollary}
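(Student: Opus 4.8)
The plan is to read this off directly from Theorem~\ref{thm:algO_regret_vs_algP_density} by specializing the sequence of optimal comparison policies it produces. First I would invoke Theorem~\ref{thm:algO_regret_vs_algP_density} with the given sequence of deterministic policies $\pi_1,\dots,\pi_K$ (taking its horizon parameter to be $K$); it furnishes a sequence of deterministic \emph{optimal} policies $\pi_1^*,\dots,\pi_K^*$, each lying in $\Pi^*$, for which
\[
    \sum_{k=1}^K d^{\pi_k}(s_h,a_h) \geq \sum_{k=1}^K d^{\pi^*_k}(s_h,a_h) - \frac{1}{\Delta_{\min}} \Big(\sum_{k=1}^K V^*_1(s_1)-V^{\pi_k}_1(s_1)\Big)
\]
holds for every $h\in[H]$ and every $(s_h,a_h)\in\cS_h\times\cA_h$.

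Next I would appeal to the standing convention introduced in Sec.~\ref{sec:learning_framework} that $|\Pi^*|=1$ means the optimal policy is unique \emph{up to occupancy}: by definition every member of $\Pi^*$ induces the same state-action occupancy $d^{\pi}(\cdot,\cdot)$. Since each $\pi_k^*$ is a deterministic optimal policy, it belongs to $\Pi^*$, and hence $d^{\pi^*_k}(s_h,a_h) = d^{\pi^*}(s_h,a_h)$ for all $k$ and all $(h,s_h,a_h)$, where $\pi^*$ denotes the (occupancy-)unique optimal policy. Substituting this identity collapses the comparison sum, $\sum_{k=1}^K d^{\pi^*_k}(s_h,a_h) = K\, d^{\pi^*}(s_h,a_h)$, and the displayed inequality becomes exactly the claimed bound.

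There is essentially no technical obstacle here, as the corollary is a one-line specialization; the only point worth flagging is that the uniqueness hypothesis is imposed at the level of occupancy measures rather than on the policy mapping itself. This is precisely what licenses replacing each of the possibly distinct deterministic optimal policies $\pi^*_k$ (which may still differ as mappings on states off the optimal support or on unreachable states, i.e.\ where the occupancy is zero) by the single common occupancy $d^{\pi^*}$. All of the substantive work — constructing the $\pi^*_k$ and controlling the occupancy gap through $\Delta_{\min}$ — has already been carried out inside Theorem~\ref{thm:algO_regret_vs_algP_density}, so nothing further is required.
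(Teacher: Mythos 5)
Your proposal is correct and matches the paper's (implicit) argument: the paper states this as an immediate corollary of Theorem~\ref{thm:algO_regret_vs_algP_density}, and the only step needed is exactly the one you give — under the paper's convention that $|\Pi^*|=1$ is counted up to occupancy equivalence, every deterministic optimal policy $\pi^*_k$ produced by the theorem has occupancy $d^{\pi^*}$, so the comparison sum collapses to $K\,d^{\pi^*}(s_h,a_h)$. Your remark that uniqueness at the level of occupancies (rather than of the policy mappings) is what licenses this substitution is precisely the right point to flag.
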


\ThmExistWellCovered*
\begin{proof}
    For arbitrary $h\in [H]$, we define: 
    \begin{align*}
        N_{I_K^*}(s_h,a_h):=\sum_{k=1}^K \mathbb{I}[d^{\pi^*_k}(s_h,a_h) > 0]
    \end{align*}
    In another word, $N_{I_K^*}(\cdot,\cdot)$ denotes the number of optimal policies in the sequence, which can hit state $s_h$ and take action $a_h$. 
    
    Next, we define $\cZ_h^*$, $\cZ_{h}^{\text{insuff}}$ and $\Pi_{h,\text{insuff}}$ as
    \begin{align*}
        \cZ_h^*:=&\{(s_h,a_h)\in\cS_h,\cA_h|\exists \pi^*\in\Pi^*,~s.t.~d^{\pi^*}(s_h,a_h) > 0\}\\
        \cZ_{h}^{\text{insuff}}:= &\{(s_h,a_h)|(s_h,a_h)\in\cZ^*_h:~N_{I_K^*}(s_h,a_h)<\frac{K}{2(|\cZ^*_{h,\text{div}}|+1)H}\}\\
        I_h^{\text{insuff}}:= &\{k\in[K]:~\exists (s_h,a_h)\in\cZ_{h}^{\text{insuff}},~s.t.~d^{\pi_k}(s_h,a_h) > 0\}
    \end{align*}
    In a word, $\cZ_h^*$ is the collection of states actions reachable by at least one optimal policy, $\cZ_{h}^{\text{insuff}}$ is a collection of ``insufficiently hitted'' states actions at step $h$, which are only covered by a small portion of optimal policies in the sequence, and $I_h^{\text{insuff}}$ is a collection of the index of the optimal policies in the sequence, which cover at least one state action pair in $\cZ_{h}^{\text{insuff}}$. 

    
    Note that we must have $\cZ_{h}^{\text{insuff}}\subset \cZ_{h,\text{div}})$, because if one state action pair $s_h,a_h$ is reachable by arbitrary deterministic policy, then $N_{I^*_k}(s_h,a_h) = K$. Then, we have:
    \begin{align*}
        |I_h^{\text{insuff}}| < |\cZ_{h}^{\text{insuff}}|\cdot \frac{K}{2(|\cZ_{h,\text{div}}|+1) H} \leq |\cZ_{h,\text{div}}|\cdot \frac{K}{2(|\cZ_{h,\text{div}}|+1) H} \leq \frac{K}{2H}
    \end{align*}
    We define $I_{1:H}^{\text{suff}}:=I_K \setminus \bigcup_{h=1}^H I_h^{\text{insuff}}$. Intuitively, $I_{1:H}^{\text{suff}}$ is the set including the indices of optimal policies in the sequence only hitting those states which are covered by most of the other optimal policies. In fact, $I_{1:H}^{\text{suff}}$ is non-empty since:
    \begin{align*}
         |I_{1:H}^{\text{suff}}|\geq K - \frac{K}{2H} \cdot H = \frac{K}{2}
    \end{align*}
    We use $\pi^*_{I_{1:H}^{\text{suff}}}$ to denote the average mixture policy over $\{\pi_i^*: i\in I_{1:H}^{\text{suff}}\}$, a direct result is that:
    \begin{align*}
        \sum_{k=1}^K d^{\pi^*_k}(s_h,a_h) \geq \sum_{k\in I_{1:H}^{\text{suff}}} d^{\pi^*_k}(s_h,a_h) = |I_{1:H}^{\text{suff}}|\cdot d^{\pi^*_{I_{1:H}^{\text{suff}}}} \geq \frac{K}{2}\cdot d^{\pi^*_{I_{1:H}^{\text{suff}}}}
    \end{align*}
    On the other hand, for all $s_h,a_h$ such that $d^{\pi^*_{I_{1:H}^{\text{suff}}}}(s_h,a_h) > 0$, we must have $(s_h,a_h)\not\in \cZ_{h}^{\text{insuff}}$, and therefore:
    \begin{align*}
        \sum_{k=1}^K d^{\pi^*_k}(s_h,a_h) \geq \frac{K}{2(|\cZ_{h,\text{div}}|+1)H}d^*_{h,\min}(s_h,a_h)
    \end{align*}
    Combining the above two inequalities, we finish the proof.
\end{proof}

\section{Analysis of Pessimistic Value Iteration}\label{appx:analysis_of_PVI}

In this section, we provide analysis for Alg. \ref{alg:PVI}. Our analyses base on an extension of the Clipping Trick in \citep{simchowitz2019non} into our setting. 

\subsection{Underestimation and Some Concrete Choices of Bonus Term}\label{appx:choice_of_bonus_term}
\begin{lemma}[Underestimation]\label{lem:underest_PVI}
    Given a $\textbf{Bonus}$ satisfying Cond. \ref{cond:bonus_term}, for arbitrary dataset $D_k$ consisting of $k$ trajectories by a sequence of policies $\pi_1,...,\pi_k$, by running Alg \ref{alg:PVI} with $D_k$ and the bonus term $b(\cdot,\cdot)$ returned by $\textbf{Bonus}(D_k,\delta_k)$, on the events $\cEB$ defined in Cond. \ref{cond:bonus_term}:
    \begin{align}
        \forall h\in[H],\forall s_h\in\cS_h,a_h\in\cA_h,\quad \hQ_h(s_h,a_h) \leq Q^{\pi_{\hQ,h}}(s_h,a_h) \leq Q^{*}(s_h,a_h) \label{eq:under_estimation}
    \end{align}
    where we use $\pi_\hQ=\{\pi_{\hQ,1},...,\pi_{\hQ,H}\}$ to denote the greedy policy w.r.t. $\hQ$.
\end{lemma}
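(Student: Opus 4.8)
The plan is to prove the two inequalities by backward induction on the step index $h$, from the terminal step down to $h=1$, working throughout on the event $\cEB$ so that no probabilistic argument is needed. The rightmost inequality $Q^{\pi_\hQ}_h(s_h,a_h)\le Q^*_h(s_h,a_h)$ is free: $\pi_\hQ$ is a fixed deterministic policy, and $Q^*_h(\cdot,\cdot)=\max_\pi Q^\pi_h(\cdot,\cdot)$ dominates the action-value of any fixed policy by definition. Hence the entire content is the leftmost inequality $\hQ_{k,h}(s_h,a_h)\le Q^{\pi_\hQ}_h(s_h,a_h)$, which I would prove jointly with its value-function companion $\hV_{k,h}(s_h)\le V^{\pi_\hQ}_h(s_h)$ so that the induction closes cleanly.

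For the base case I take $\hV_{k,H+1}\equiv 0\equiv V^{\pi_\hQ}_{H+1}$ by the terminal convention. For the inductive step, assume $\hV_{k,h+1}(s)\le V^{\pi_\hQ}_{h+1}(s)$ for all $s$ and expand $\hQ_{k,h}(s_h,a_h)=\max\{R(s_h,a_h)+\hmP_{k,h}\hV_{k,h+1}(s_h,a_h)-b_{k,h}(s_h,a_h),\,0\}$. I split into two cases according to which branch of the $\max$ is active. If the truncation branch is active, so $\hQ_{k,h}(s_h,a_h)=0$, the inequality is immediate because rewards are nonnegative and therefore $Q^{\pi_\hQ}_h(s_h,a_h)\ge 0$. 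Otherwise, the first defining property of $\cEB$ in Cond.~\ref{cond:bonus_term} gives $\hmP_{k,h}\hV_{k,h+1}(s_h,a_h)-b_{k,h}(s_h,a_h)<\mP_h\hV_{k,h+1}(s_h,a_h)$; combining this with the inductive hypothesis $\hV_{k,h+1}\le V^{\pi_\hQ}_{h+1}$ and the monotonicity of $\mP_h$ yields $\hQ_{k,h}(s_h,a_h)\le R(s_h,a_h)+\mP_h V^{\pi_\hQ}_{h+1}(s_h,a_h)=Q^{\pi_\hQ}_h(s_h,a_h)$.

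To propagate the bound to the value level, I use that $\pi_{\hQ,h}$ is by construction greedy with respect to $\hQ_{k,h}$, so $\hV_{k,h}(s_h)=\hQ_{k,h}(s_h,\pi_{\hQ,h}(s_h))$. Applying the $\hQ_{k,h}\le Q^{\pi_\hQ}_h$ inequality just established at the greedy action gives $\hV_{k,h}(s_h)\le Q^{\pi_\hQ}_h(s_h,\pi_{\hQ,h}(s_h))=V^{\pi_\hQ}_h(s_h)$, which is exactly the hypothesis needed at step $h$. Iterating down to $h=1$ then delivers $\hQ_{k,1}\le Q^{\pi_\hQ}_1\le Q^*_1$ and, together with the free rightmost inequality at every $h$, completes the proof.

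The argument is not deep; the one point that genuinely requires care is using $\cEB$ at the right object. The concentration-type bound $|\hmP_{k,h}\hV_{k,h+1}-\mP_h\hV_{k,h+1}|<b_{k,h}$ must be applied to the algorithm's own self-consistent quantity $\hV_{k,h+1}$ rather than to some fixed target, and I would stress that $\cEB$ in Cond.~\ref{cond:bonus_term} is defined precisely in terms of $\hV_{k,h+1}$, which legitimizes the step. The only other subtlety is the bookkeeping around the $\max\{\cdot,0\}$ truncation, which the two-case split handles; no independence or union-bound reasoning is invoked, since everything is conditioned on $\cEB$ from the outset.
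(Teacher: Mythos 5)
Your proof is correct and follows essentially the same route as the paper's: backward induction on $h$, using the event $\cEB$ to kill the $(\hmP_{k,h}-\mP_h)\hV_{k,h+1}-b_{k,h}$ term and the greedy property to pass from $\hQ$ to $\hV$, with the rightmost inequality noted as immediate. Your explicit two-case treatment of the $\max\{\cdot,0\}$ truncation is in fact slightly more careful than the paper's write-up, which elides that branch.
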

\begin{proof}
    We only prove the first inequality holds, since the second one holds directly because of the definition of optimal policy.

    First of all, $V_{H+1} = 0 \leq V^{\pi_\hQ}_{H+1}$ holds directly, which implies that Eq.\eqref{eq:under_estimation} holds at step $h=H$ as a result of the deterministic reward function. 
    
    Now, we conduct the induction. Suppose Eq.\eqref{eq:under_estimation} already holds for $h+1$, which implies that:
    \begin{align}
        \hV_{h+1}(s_{h+1}) = \hQ_{h+1}(s_{h+1},\pi_{\hQ, h+1}(s_{h+1})) \leq Q^{\pi_\hQ}_{h+1}(s_{h+1},\pi_{\hQ, h+1}(s_{h+1})) = V^{\pi_\hQ}_{h+1}(s_{h+1})\label{eq:under_est_mid_step}
    \end{align}
    then, at step $h$, we have:
    \begin{align*}
        Q_h(s_h,a_h) - Q^{\pi_\hQ}(s_h,a_h) =& \hat P_h \hV_{h+1} (s_h,a_h) - b_h(s_h,a_h) - P_hV^\pi_{h+1}(s_h,a_h)\\
        =& \underbrace{(\hat P_h - P_h)\hV_{h+1}(s_h,a_h) - b_h(s_h,a_h)}_{\text{part 1}} + \underbrace{P_h(\hV_{h+1}-V_{h+1}^{\pi_\hQ})(s_h,a_h)}_{\text{part 2}}
    \end{align*}
    As we can see, part 1 is non-positive with probability $1-\delta$ as a result of Cond. \ref{cond:bonus_term}, while part 2 is also less than or equal to zero because of the induction condition in Eq.\eqref{eq:under_est_mid_step}.
\end{proof}

\paragraph{Choice 1: Naive Bound}
According to Hoeffding inequality, with probability $1-\delta/(SAH)$, we have the following holds for each $s_h,a_h,h$
\begin{align*}
    |\hP_h V_{h+1} - P_h V_{h+1}| \leq \|\hP_h - P_h\|_1 \|V_{h+1}\|_\infty \leq \|\hP_h - P_h\|_1H \leq c_1 HS \sqrt{\frac{\log(SAH/\delta)}{N(s_h,a_h)}}
\end{align*}
which implies that condition \ref{cond:bonus_term} holds with probability $1-\delta$ as long as:
\begin{align*}
    b_h(N, \delta) := HS \sqrt{\frac{\log(SAH/\delta)}{N}}
\end{align*}

\paragraph{Choice 2: Adaptive Bonus Term based on the ``Bernstein Trick''}
One can also consider an analogue of the bonus term functions in Alg. 3 of \citep{simchowitz2019non}, which is originally designed for optimistic algorithms. We omit the discussions here.

\subsection{Definition of ``Surplus'' in Pessimistic Algorithms and the Clipping Trick}
We consider the pessimistic algorithm, and denote the estimation of value function as $\hQ$, $\hV$. We assume they are pessimistic estimation, i.e.:
\begin{align*}
    V_h^*(s_h) = Q^*_h(s_h,\pi^*) \geq Q^*_h(s_h,\pi_\hQ) \geq V^{\pi_k}_h(s_h) \geq \hQ_h(s_h, \pi_\hQ) \geq \hQ_h(s_h,\pi^*).
\end{align*}
\begin{definition}[Definition of Surplus in Pessimistic Algorithm setting]\label{def:deficit}
We define the surplus in Pessimistic Algorithm setting:
\begin{align*}
    \surplus_{k,h}(s_h,a_h) =& r(s_h,a_h) + \mP_h\hV_{k,h+1}(s_h,a_h) - \hQ_{k,h}(s_h,a_h).
\end{align*}
\end{definition}
Because of the underestimation, different from the surplus in overestimation cases \citep{simchowitz2019non}, here we flip the role between $\hQ$ and $r+\mP\hV$ to make sure the quantity is non-negative (with high probability).

Based on our definition, we have the following lemma:
\begin{lemma}\label{lem:relationship_between_value_diff_and_deficit}
    Under the same condition as Lemma \ref{lem:underest_PVI}, for arbitrary $h,s_h$, the policy $\piPVI_k$ returned by Alg.\ref{alg:PVI} satisfying:
    \begin{align*}
        V^{\piPVI_k}_h(s_h) - \hV_{k,h}(s_h) =& \EE_{\piPVI_k}[\sum_{\ph=h}^H\surplus_{k,\ph}(s_\ph,a_\ph)|s_h]
    \end{align*}
    Moreover, for arbitrary optimal deterministic or non-deterministic policy $\pi^*$, we have:
    \begin{align*}
        V^*_h(s_h) - \hV_{k,h}(s_h) \leq& V_h^*(s_h) - \hQ_{k,h}(s_h,\pi^*) \leq \EE_{\pi^*}[\sum_{\ph=h}^H \surplus_{k,\ph}(s_\ph,a_\ph)]
    \end{align*}
\end{lemma}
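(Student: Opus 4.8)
The plan is to prove both statements by backward induction on $h$, running from the terminal layer $h=H+1$ (where $\hV_{k,H+1}=V_{H+1}=0$ makes both sides vanish) down to $h=1$; the engine in both cases is a one-step recursion obtained directly from the definition of the surplus $\surplus_{k,h}$ in Def.~\ref{def:deficit}. The crucial point to keep in mind is that the surplus is defined with the \emph{true} transition $\mP_h$, so the empirical transition $\hmP_{k,h}$ and the clamping at $0$ in the construction of $\hQ_{k,h}$ are entirely absorbed into $\surplus_{k,h}$; consequently the decompositions below are exact algebraic identities (resp.\ inequalities) that need no reference to the event $\cEB$, which only enters later through the non-negativity of the surplus.

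For the first identity I would use that $\piPVI_k$ is greedy, so $\hV_{k,h}(s_h)=\hQ_{k,h}(s_h,a_h)$ with $a_h:=\piPVI_{k,h}(s_h)$. Writing $Q^{\piPVI_k}_h(s_h,a_h)=r(s_h,a_h)+\mP_h V^{\piPVI_k}_{h+1}(s_h,a_h)$ and inserting $\pm\,\mP_h\hV_{k,h+1}(s_h,a_h)$, the definition of $\surplus_{k,h}$ gives $V^{\piPVI_k}_h(s_h)-\hV_{k,h}(s_h)=\surplus_{k,h}(s_h,a_h)+\mP_h(V^{\piPVI_k}_{h+1}-\hV_{k,h+1})(s_h,a_h)$. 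Taking expectation over $s_{h+1}\sim\mP_h(\cdot\mid s_h,a_h)$ and unrolling this recursion with the induction hypothesis telescopes the transition terms, yielding $V^{\piPVI_k}_h(s_h)-\hV_{k,h}(s_h)=\EE_{\piPVI_k}[\sum_{\ph=h}^H \surplus_{k,\ph}(s_\ph,a_\ph)\mid s_h]$.

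For the second chain, the first inequality is immediate from the $\max$ in the definition of $\hV$: since $\hV_{k,h}(s_h)=\max_a\hQ_{k,h}(s_h,a)\ge \EE_{a\sim\pi^*}[\hQ_{k,h}(s_h,a)]=\hQ_{k,h}(s_h,\pi^*)$, we get $V^*_h(s_h)-\hV_{k,h}(s_h)\le V^*_h(s_h)-\hQ_{k,h}(s_h,\pi^*)$ (this covers a stochastic $\pi^*$ by reading $\hQ_{k,h}(s_h,\pi^*)$ as the action-average). For the second inequality I would repeat the one-step expansion, now with $\pi^*$: from $V^*_h(s_h)=\EE_{a\sim\pi^*}[r(s_h,a)+\mP_h V^*_{h+1}(s_h,a)]$ and the surplus definition, $V^*_h(s_h)-\hQ_{k,h}(s_h,\pi^*)=\EE_{a\sim\pi^*}[\surplus_{k,h}(s_h,a)+\mP_h(V^*_{h+1}-\hV_{k,h+1})(s_h,a)]$. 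The key step is to bound the residual layer-by-layer using the already-established first inequality, $V^*_{h+1}-\hV_{k,h+1}\le V^*_{h+1}-\hQ_{k,h+1}(\cdot,\pi^*)$, and then invoke the induction hypothesis at $h+1$; this converts the exact recursion into the claimed upper bound by the accumulated surpluses.

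The computations are routine, so there is no substantial obstacle; if anything, the only care needed is bookkeeping — keeping the exact equality for $\piPVI_k$ separate from the inequality for $\pi^*$ (where the $\max$-property must be inserted at every layer, not only at the top), and handling a possibly non-deterministic optimal $\pi^*$ by consistently interpreting $\hQ_{k,h}(s_h,\pi^*)$ and $V^*_h(s_h)$ as expectations over $a\sim\pi^*$. It is also worth double-checking the terminal condition and that the true $\mP_h$ (not $\hmP_{k,h}$) appears uniformly in both value functions, since that is precisely what makes the surplus the correct accounting quantity for these telescoping arguments.
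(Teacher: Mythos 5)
Your proposal is correct and follows essentially the same route as the paper's proof: both parts rest on the one-step identity $V_h^{\pi}(s_h)-\hQ_{k,h}(s_h,a_h)=\surplus_{k,h}(s_h,a_h)+\mP_h(V_{h+1}^{\pi}-\hV_{k,h+1})(s_h,a_h)$ obtained by inserting $\pm\,\mP_h\hV_{k,h+1}$, combined with the greedy property $\hV_{k,h}(s_h)=\hQ_{k,h}(s_h,\piPVI_k)\ge\hQ_{k,h}(s_h,\pi^*)$ and backward recursion. Your observation that these are exact algebraic identities not requiring the event $\cEB$ (which only matters later for non-negativity of the surplus) is also accurate.
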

\begin{proof}
\begin{align*}
    &V^{\piPVI_k}_h(s_h) - \hV_{k,h}(s_h) \\
    =& \EE_{a_h\sim \piPVI_k}[r(s_h,a_h)+\mP_h V^{\piPVI_k}_{h+1}(s_h,a_h)-\hQ_{k,h}(s_h,a_h)\pm \mP_h \hV_{k,h+1}(s_h,a_h)]\\
    =&\EE_{\piPVI_k}[r(s_h,a_h)+\mP_h\hV_{k,h+1}(s_h,a_h)-\hQ_{k,h}(s_h,a_h)+\mP_h (V_{h+1}^{\piPVI_k}-\hV_{k,h+1})(s_h,a_h)]\\
    =&\EE_{\piPVI_k}[\sum_{\ph=h}^H\surplus_{k,\ph}(s_\ph,a_\ph)|s_h]
\end{align*}
Besides, given arbitrary optimal policy $\pi^*$, we have:
\begin{align*}
    &V_h^*(s_h) - \hV_{k,h}(s_h) \\
    =& V_h^*(s_h) - \hQ_{k,h}(s_h,\piPVI_k) \leq V_h^*(s_h) - \hQ_{k,h}(s_h,\pi^*) \tag{$\piPVI_k$ is greedy policy w.r.t. $\hQ_k$}\\
    =& \EE_{a_h\sim \pi^*}[r(s_h,a_h)+\mP_h\hV_{k,h+1}(s_h,a_h)-\hQ_{k,h}(s_h,a_h)+\mP_h(V_{h+1}^*-\hV_{k,h+1})(s_h,a_h)] \\
    \leq& \EE_{\pi^*}[\sum_{\ph=h}^H \surplus_{k,\ph}(s_\ph,a_\ph)]
\end{align*}
\end{proof}

\begin{lemma}\label{lem:upper_bound_surplus}
    Under the same condition as Lemma \ref{lem:underest_PVI}, we have:
    \begin{align*}
        \surplus_{k,h} \leq \min\{H-h+1, 2B_1\sqrt{\frac{\log (B_2/\delta_k)}{N_{k,h}(s_h,a_h)}}\}.
    \end{align*}
\end{lemma}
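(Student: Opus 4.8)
The plan is to work throughout on the good event $\cEB$ from Cond.~\ref{cond:bonus_term}, which simultaneously guarantees the concentration bound $|\hmP_{k,h}\hV_{k,h+1}(s_h,a_h) - \mP_h\hV_{k,h+1}(s_h,a_h)| < b_{k,h}(s_h,a_h)$ and the size bound $b_{k,h}(s_h,a_h) \leq B_1\sqrt{\log(B_2/\delta_k)/N_{k,h}(s_h,a_h)}$ for every $(h,s_h,a_h)$. The two arguments of the $\min$ are established separately and then combined.

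For the crude $H-h+1$ bound, I would use only that $\hQ_{k,h}\geq 0$ by construction in Alg.~\ref{alg:PVI}, so that $\surplus_{k,h}(s_h,a_h) \leq r(s_h,a_h) + \mP_h\hV_{k,h+1}(s_h,a_h)$. Since rewards lie in $[0,1]$ and, by the underestimation guarantee (Lem.~\ref{lem:underest_PVI}), $\hV_{k,h+1}(\cdot) \leq V^*_{h+1}(\cdot) \leq H-h$, taking the expectation under $\mP_h$ preserves this upper bound and yields $\surplus_{k,h}(s_h,a_h)\leq 1 + (H-h) = H-h+1$.

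For the statistical bound, I would split according to the $\max\{\cdot,0\}$ in the definition of $\hQ_{k,h}$. If the truncation is inactive, then $\hQ_{k,h}(s_h,a_h) = r(s_h,a_h) + \hmP_{k,h}\hV_{k,h+1}(s_h,a_h) - b_{k,h}(s_h,a_h)$, and substituting into the surplus cancels the reward and leaves $\surplus_{k,h}(s_h,a_h) = \big(\mP_h - \hmP_{k,h}\big)\hV_{k,h+1}(s_h,a_h) + b_{k,h}(s_h,a_h) < 2 b_{k,h}(s_h,a_h)$ by the concentration part of $\cEB$. If instead the truncation is active, so $\hQ_{k,h}(s_h,a_h)=0$, then the inequality defining this case reads $r(s_h,a_h) + \hmP_{k,h}\hV_{k,h+1}(s_h,a_h) < b_{k,h}(s_h,a_h)$, and writing $\surplus_{k,h}(s_h,a_h) = \big[r(s_h,a_h)+\hmP_{k,h}\hV_{k,h+1}(s_h,a_h)\big] + \big(\mP_h-\hmP_{k,h}\big)\hV_{k,h+1}(s_h,a_h)$ bounds each bracket by $b_{k,h}(s_h,a_h)$, again giving $\surplus_{k,h}(s_h,a_h) < 2 b_{k,h}(s_h,a_h)$. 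In both cases the size bound from $\cEB$ converts $2 b_{k,h}(s_h,a_h)$ into $2B_1\sqrt{\log(B_2/\delta_k)/N_{k,h}(s_h,a_h)}$, and taking the minimum of the two bounds finishes the argument.

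There is no deep obstacle here; the only point requiring care is the truncation case, where one must remember to invoke \emph{both} conjuncts of $\cEB$ — the concentration inequality for the $(\mP_h - \hmP_{k,h})\hV_{k,h+1}$ term \emph{and} the defining inequality $r + \hmP_{k,h}\hV_{k,h+1} < b_{k,h}$ that characterizes when the $\max$ clips to zero — rather than the concentration bound alone. It is also worth noting that the concentration event is stated for the current iterate $\hV_{k,h+1}$, exactly the quantity appearing in the surplus, so no additional union bound over candidate value functions is needed.
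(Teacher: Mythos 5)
Your proof is correct and follows essentially the same route as the paper's: cancel the reward, use the concentration half of $\cEB$ to bound $(\mP_h-\hmP_{k,h})\hV_{k,h+1}$ by $b_{k,h}$, and use the size half of $\cEB$ to convert $2b_{k,h}$ into the stated rate, with the crude $H-h+1$ bound coming from $\hQ_{k,h}\geq 0$ and $\hV_{k,h+1}\leq H-h$. In fact you are slightly more careful than the paper, whose one-line computation implicitly assumes the $\max\{\cdot,0\}$ truncation is inactive; your explicit treatment of the clipped case (using the defining inequality $r+\hmP_{k,h}\hV_{k,h+1}\leq b_{k,h}$) closes that small gap.
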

\begin{proof}
    \begin{align*}
        \surplus_{k,h} :=& r(s_h,a_h) + \mP_h\hV_{k,h+1}(s_h,a_h) - \hQ_{k,h}(s_h,a_h)\\
        =& \mP_h\hV_{k,h+1} - \hmP_{k,h}\hV_{k,h+1} + b_{k,h}(s_h,a_h) \leq 2b_{k,h}(s_h,a_h) \leq 2B_1\sqrt{\frac{\log (B_2/\delta_k)}{N_{k,h}(s_h,a_h)}}.
    \end{align*}
    On the other hand, because the reward function is always locates in $[0,1]$ and $\hQ$ is always larger than zero, we have $\surplus_{k,h}(s_h,a_h) \leq H-h+1 \leq H$.
\end{proof}


In the following, we define
\begin{align*}
    \ddot{\surplus}_{k,h}(s_h,a_h) := \text{clip}[\surplus_{k,h}(s_h,a_h)|\epsClip].
\end{align*}
where $\epsClip := \frac{\Delta_{\min}}{2H+2}$, and $\Clip[x|\epsilon] := x\cdot \mathbb{I}[x\geq\epsilon]$.
Then, we recursively define 
\begin{align*}
    \ddot{Q}^{\pi}_{k,h}(s_h,a_h)=\EE_{\pi_h}[r(s_h,a_h)-\ddot{\surplus}_{k,h}(s_h,a_h)+\mP_h\ddot{V}^{\pi}_{k,h+1}(s_h,a_h)|s_h,a_h],\quad \ddot{V}^{\pi}_{k,h}(s_h):= \ddot{Q}^{\pi}_{k,h}(s_h,\pi_{h})
\end{align*}
Note that although different optimal policies $\pi^*$ and $\tilde{\pi}^*$ have the same optimal value $V^*$, $\ddot{V}^{\pi^*}$ may no longer equal to $\ddot{V}^{\tilde{\pi}^*}$ because they may have different state occupancy and $\ddot{V}$ depends on $\surplus$. Therefore, in the following, when we consider the $\ddot{V}$ for optimal policies, we will always specify which optimal policy we are referring to.
\begin{lemma}[Relationship between $\ddot{V}^{\pi^*}$, $V^{\piPVI_k}$ and $\hV_{k,h}$]\label{lem:Vddot_and_V}
    Under the same condition as Lemma \ref{lem:underest_PVI}, for arbitrary optimal policy $\pi^*$, we have:
    \begin{align*}
        \ddot{V}^{\pi^*}_{k,h}(s_h) \leq \hV_{k,h}(s_h) + (H-h+1)\epsClip \leq V^{\piPVI_k}(s_h) + (H-h+1)\epsClip
    \end{align*}
\end{lemma}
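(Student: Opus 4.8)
The plan is to split the two-sided chain into its two halves and handle them independently. The right-hand inequality $\hV_{k,h}(s_h) \le V^{\piPVI_k}_h(s_h) + (H-h+1)\epsClip$ is immediate and in fact I would prove the stronger bound $\hV_{k,h}(s_h)\le V^{\piPVI_k}_h(s_h)$: by Lemma~\ref{lem:relationship_between_value_diff_and_deficit}, $V^{\piPVI_k}_h(s_h) - \hV_{k,h}(s_h) = \EE_{\piPVI_k}[\sum_{\ph=h}^H \surplus_{k,\ph}(s_\ph,a_\ph)\mid s_h]$, and on the event $\cEB$ every surplus term is non-negative. This last fact follows from Condition~\ref{cond:bonus_term}: in the branch where $\hQ_{k,h}=R+\hmP_{k,h}\hV_{k,h+1}-b_{k,h}$ one gets $\surplus_{k,h}=\mP_h\hV_{k,h+1}-\hmP_{k,h}\hV_{k,h+1}+b_{k,h}\ge 0$, while in the branch $\hQ_{k,h}=0$ one gets $\surplus_{k,h}=r+\mP_h\hV_{k,h+1}\ge 0$. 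Hence the right-hand inequality drops out with the extra slack term to spare.

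For the left-hand inequality $\ddot{V}^{\pi^*}_{k,h}(s_h) \le \hV_{k,h}(s_h) + (H-h+1)\epsClip$ I would run a backward induction on $h$, with the trivial base case $\ddot V^{\pi^*}_{k,H+1}=\hV_{k,H+1}=0$. For the inductive step, the key algebraic move is to use the surplus definition to substitute $r(s_h,a_h) = \surplus_{k,h}(s_h,a_h) + \hQ_{k,h}(s_h,a_h) - \mP_h\hV_{k,h+1}(s_h,a_h)$ inside the recursion $\ddot V^{\pi^*}_{k,h}(s_h)=\EE_{a_h\sim\pi^*}[r(s_h,a_h)-\ddot\surplus_{k,h}(s_h,a_h)+\mP_h\ddot V^{\pi^*}_{k,h+1}(s_h,a_h)]$, which recasts the integrand as $(\surplus_{k,h}-\ddot\surplus_{k,h}) + \hQ_{k,h}(s_h,a_h) + \mP_h(\ddot V^{\pi^*}_{k,h+1}-\hV_{k,h+1})(s_h,a_h)$. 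Three bounds then close the step: (i) the defining property of the clip gives $\surplus_{k,h}-\ddot\surplus_{k,h}\le \epsClip$, since either $\surplus_{k,h}\ge\epsClip$ and the difference vanishes, or $\ddot\surplus_{k,h}=0$ and $\surplus_{k,h}<\epsClip$; (ii) because $\hV_{k,h}(s_h)=\max_{a}\hQ_{k,h}(s_h,a)$, we have $\EE_{a_h\sim\pi^*}[\hQ_{k,h}(s_h,a_h)]\le \hV_{k,h}(s_h)$ for any (possibly stochastic) optimal $\pi^*$; and (iii) the induction hypothesis yields $\mP_h(\ddot V^{\pi^*}_{k,h+1}-\hV_{k,h+1})(s_h,a_h)\le (H-h)\epsClip$. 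Adding these gives $\ddot V^{\pi^*}_{k,h}(s_h)\le \hV_{k,h}(s_h)+(H-h+1)\epsClip$, completing the induction.

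The step I expect to need the most care is (ii) combined with the requirement that the argument remain valid for a \emph{non-deterministic} optimal policy $\pi^*$: since $\ddot V^{\pi^*}$ subtracts the clipped surplus along the trajectory induced by $\pi^*$, one must take the expectation $\EE_{a_h\sim\pi^*}$ over actions before invoking $\hQ_{k,h}\le\hV_{k,h}$, and the constant $\epsClip$ must be accumulated exactly once per layer so the accumulated factor is precisely $(H-h+1)$ rather than inflating. The remaining manipulations are routine, and the two halves then compose directly into the stated chain. The entire argument is carried out on $\cEB$, consistent with ``the same condition as Lemma~\ref{lem:underest_PVI}'', which is exactly what licenses both the non-negativity of the surplus and the identity of Lemma~\ref{lem:relationship_between_value_diff_and_deficit}.
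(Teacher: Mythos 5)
Your proof is correct, but it is organized differently from the paper's. The paper never inducts on the $\ddot V$ recursion directly: it writes down the telescoped identity $V^*_h(s_h)-\ddot V^{\pi^*}_{k,h}(s_h)=\EE_{\pi^*}[\sum_{\ph=h}^H\ddot\surplus_{k,\ph}(s_\ph,a_\ph)\,|\,s_h]$, lower-bounds it by $\EE_{\pi^*}[\sum_{\ph=h}^H\surplus_{k,\ph}]-(H-h+1)\epsClip$ using $\ddot\surplus\ge\surplus-\epsClip$, and then invokes the already-proved inequality $V^*_h(s_h)-\hV_{k,h}(s_h)\le\EE_{\pi^*}[\sum_{\ph=h}^H\surplus_{k,\ph}]$ from Lemma~\ref{lem:relationship_between_value_diff_and_deficit} together with the underestimation Lemma~\ref{lem:underest_PVI}; cancelling $V^*_h(s_h)$ gives the chain in one pass. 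You instead bypass $V^*$ entirely and compare $\ddot V^{\pi^*}_{k,h}$ to $\hV_{k,h}$ by backward induction, substituting $r=\surplus_{k,h}+\hQ_{k,h}-\mP_h\hV_{k,h+1}$ into the $\ddot V$ recursion and using $\EE_{a\sim\pi^*}[\hQ_{k,h}(s_h,a)]\le\hV_{k,h}(s_h)$; for the right-hand inequality you re-derive the non-negativity of the surplus from Condition~\ref{cond:bonus_term} rather than citing Lemma~\ref{lem:underest_PVI}. The two arguments rest on the same core fact --- clipping loses at most $\epsClip$ per layer, hence $(H-h+1)\epsClip$ in total --- but yours is more self-contained (it does not need the $V^*$-side identity of Lemma~\ref{lem:relationship_between_value_diff_and_deficit}), while the paper's is shorter because it reuses machinery already established for Theorem~\ref{thm:clipping_trick}. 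Your steps (i)--(iii) all check out, including the handling of stochastic $\pi^*$ in step (ii), so there is no gap.
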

\begin{proof}
    Note that:
    \begin{align*}
        \ddot{\surplus}_{k,h}(s_h,a_h)\geq \surplus_{k,h}(s_h,a_h) - \epsClip
    \end{align*}
    Therefore,
    \begin{align*}
        V^*_{h}(s_h) - \ddot{V}^{\pi^*}_{h}(s_h) =& \EE_{\pi^*}[\sum_{h=\ph}^H \ddot{\surplus}_{k,\ph}(s_\ph,a_\ph)|s_h]\\
        \geq& \EE_{\pi^*}[\sum_{\ph=h}^H {\surplus}_{k,\ph}(s_\ph,a_\ph)-\epsClip|s_h]\\
        \geq&V^*_{h}(s_h) - \min\{\hQ_{k,h}(s_h,\pi^*), \hV_{k,h}(s_h)\} - (H-h+1)\epsClip \tag{Lemma \ref{lem:relationship_between_value_diff_and_deficit}} \\
        \geq&V^*_{h}(s_h) - \min\{Q^{\piPVI_k}_h(s_h,\pi^*), V^{\piPVI_k}_h(s_h)\} - (H-h+1)\epsClip \tag{Underestimation (Lemma \ref{lem:underest_PVI})}
    \end{align*}
    Therefore,
    \begin{align*}
        \ddot{V}^{\pi^*}_{k,h}(s_h) \leq \hV_{k,h}(s_h) + (H-h+1)\epsClip \leq V^{\piPVI_k}_h(s_h) + (H-h+1)\epsClip
    \end{align*}
\end{proof}

\subsection{Additional Lemma for the Analysis of the Regret of $\algE$ when Optimal Deterministic Policies are non-unique}
We first introduce a useful Lemma related to the clipping operator from \citep{simchowitz2019non}
\begin{lemma}[Lemma B.3 in \citep{simchowitz2019non}]\label{lem:property_of_clip_operator}
    Let $M \geq 2$, $a_1,...a_m \geq 0$ and $\epsilon \geq 0$. $\Clip[\sum_{i=1}^m a_i | \epsilon ] \leq 2\sum_{i=1}^m \Clip [a_i|\frac{\epsilon}{2m}]$.
\end{lemma}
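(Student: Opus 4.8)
The plan is to prove this entirely from the definition $\Clip[x|\epsilon] = x\cdot\mathbb{I}[x\geq\epsilon]$ by a single case split on whether the aggregate sum clears the threshold. Write $S := \sum_{i=1}^m a_i$. If $S < \epsilon$ then $\Clip[S|\epsilon]=0$, while the right-hand side is a sum of nonnegative quantities, so the claim holds trivially. Hence the only case to treat is $S \geq \epsilon$, where $\Clip[S|\epsilon]=S$ and it remains to establish $S \leq 2\sum_{i=1}^m \Clip[a_i|\epsilon/(2m)]$.

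For the main case I would partition the indices into the ``large'' set $L := \{i : a_i \geq \epsilon/(2m)\}$ and its complement. On $L$ each term coincides with its clipped value, $\Clip[a_i|\epsilon/(2m)] = a_i$; outside $L$ the clipped value is $0$ and each $a_i < \epsilon/(2m)$. Summing separately and bounding the complement by its cardinality (at most $m$) times the threshold gives
\[
\sum_{i=1}^m a_i = \sum_{i\in L} a_i + \sum_{i\notin L} a_i \;\leq\; \sum_{i\in L}\Clip\!\left[a_i\,\Big|\,\tfrac{\epsilon}{2m}\right] + m\cdot\frac{\epsilon}{2m} \;=\; \sum_{i=1}^m \Clip\!\left[a_i\,\Big|\,\tfrac{\epsilon}{2m}\right] + \frac{\epsilon}{2},
\]
where the last equality uses that indices outside $L$ contribute $0$ to the clipped sum.

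The final step exploits the case assumption to absorb the accumulated slack: since $S \geq \epsilon$ we have $\epsilon/2 \leq S/2$, so the display reads $S \leq \sum_{i}\Clip[a_i|\epsilon/(2m)] + S/2$, i.e.\ $S/2 \leq \sum_i \Clip[a_i|\epsilon/(2m)]$, which rearranges to $S \leq 2\sum_i \Clip[a_i|\epsilon/(2m)]$, as required. I do not expect any genuine obstacle, as the content is elementary; the only point needing a little care is the factor of $2$, which is precisely what lets the aggregated ``small-term'' slack of at most $\epsilon/2$ be charged against half of $S$ under the assumption $S\geq\epsilon$. The hypothesis $m \geq 2$ plays no essential role in the argument.
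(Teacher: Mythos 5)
Your argument is correct and is essentially the standard proof of this fact: the paper itself states the lemma without proof, citing Lemma B.3 of \citep{simchowitz2019non}, whose argument is the same case split on whether $\sum_i a_i \geq \epsilon$ followed by charging the at-most-$\epsilon/2$ contribution of the small terms against half the sum. Your observation that the hypothesis $m\geq 2$ is not needed is also accurate.
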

Next, based on definition of $d_{\min}$ in Eq.\eqref{eq:def_of_d_min}, we have the following Lemma:

\begin{lemma}\label{lem:minimal_gap_for_det_subopt_policy}
    Given arbitrary deterministic policy $\pi$, if $\pi\not\in\Pi^*$, we have:
    \begin{align*}
        V^*_1(s_1) - V^\pi_1(s_1) \geq d_{\min} \Delta_{\min}
    \end{align*}
\end{lemma}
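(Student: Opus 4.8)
The plan is to combine the standard gap decomposition of the value suboptimality with an ``earliest deviation'' argument that transfers an occupancy lower bound from an optimal policy onto $\pi$. First I would write the performance-difference identity
\[
V_1^*(s_1) - V_1^\pi(s_1) = \sum_{h=1}^H \sum_{s_h \in \cS_h,\, a_h\in\cA_h} d^\pi(s_h,a_h)\,\Delta_h(s_h,a_h),
\]
in which every summand is nonnegative because $\Delta_h(s_h,a_h)\geq 0$. Since $\pi\notin\Pi^*$ is deterministic, we have $V_1^\pi(s_1) < V_1^*(s_1)$, so the right-hand side is strictly positive; hence $\pi$ takes a strictly suboptimal action at some state it reaches with positive probability, and under the gap assumption that state's gap is at least $\Delta_{\min}$. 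The remaining task is to exhibit one such state-action pair whose occupancy under $\pi$ is at least $d_{\min}$.

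To this end I would single out the \emph{earliest} step $h^*$ at which $\pi$ errs: the smallest $h$ for which there is a state $s_{h^*}$ with $d^\pi(s_{h^*}) > 0$ and $\pi(s_{h^*})\notin \Pi^*_{h^*}(s_{h^*})$. By the previous paragraph such an $h^*$ exists. The key observation is that at every step before $h^*$, $\pi$ plays an optimal action at each state it reaches with positive probability, so the state-visitation distribution induced by $\pi$ through step $h^*$ agrees with that of an optimal policy. Concretely, I would construct a deterministic optimal policy $\pi^*$ (in the spirit of Def.~\ref{def:conversion_to_opt_det_policy}) that copies $\pi$ on the positive-occupancy support for all $h < h^*$ and plays an arbitrary optimal action everywhere else; then $\pi^*\in\Pi^*$, and because occupancy at step $h^*$ depends only on the actions chosen before $h^*$, we get $d^{\pi^*}(s_{h^*}) = d^\pi(s_{h^*})$.

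With $\pi^*$ in hand the conclusion follows quickly. Since $\pi^*$ is deterministic, optimal, and reaches $s_{h^*}$, the pair $(s_{h^*}, \pi^*(s_{h^*}))$ lies on its support, so the definition of $d_{\min}$ gives $d^\pi(s_{h^*}) = d^{\pi^*}(s_{h^*},\pi^*(s_{h^*})) \geq d_{\min}$. Keeping only the single term indexed by $(s_{h^*}, \pi(s_{h^*}))$ in the decomposition, and using $d^\pi(s_{h^*}, \pi(s_{h^*})) = d^\pi(s_{h^*})$ (determinism of $\pi$) together with $\Delta_{h^*}(s_{h^*},\pi(s_{h^*})) \geq \Delta_{\min}$, yields $V_1^*(s_1) - V_1^\pi(s_1) \geq d_{\min}\Delta_{\min}$.

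I expect the main obstacle to be the occupancy-matching claim $d^{\pi^*}(s_{h^*}) = d^\pi(s_{h^*})$: one must argue carefully that two policies agreeing on the positive-occupancy support at every step $h < h^*$ induce identical marginal state distributions at step $h^*$ (states off the support contribute zero mass and so are irrelevant), and separately that the constructed $\pi^*$ is genuinely optimal, i.e.\ attains $V^*$ because it plays actions in $\Pi^*_h(s_h)$ at every state. Both facts are routine, but they are precisely where the determinism of $\pi$ and the strictly positive minimal-gap assumption are used; everything else is bookkeeping on the nonnegative gap decomposition.
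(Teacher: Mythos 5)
Your proof is correct and follows essentially the same route as the paper's: both isolate the earliest step at which $\pi$ takes a non-optimal action on its own support, construct a deterministic optimal policy in the spirit of Def.~\ref{def:conversion_to_opt_det_policy} that copies $\pi$ before that step so that the state occupancies there coincide, and then invoke $d_{\min}$ and $\Delta_{\min}$. The only cosmetic difference is that you open with the gap--occupancy decomposition $V^*_1(s_1)-V^\pi_1(s_1)=\sum_{h}\sum_{s_h,a_h}d^\pi(s_h,a_h)\Delta_h(s_h,a_h)$ and keep a single term, whereas the paper invokes its first-disagreement decomposition (Lem.~\ref{lem:value_decomposition}) and lower-bounds $\Pr(\tilde{\cE}_{h_{init}})$; the substance is identical.
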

\begin{proof}
    We use $\pi^*:=\Pi^*\circ\pi$ to denote the converted deterministic optimal policy, where $\Pi^*\circ$ is defined in Def. \ref{def:conversion_to_opt_det_policy}. As a direct application of Lemma \ref{lem:value_decomposition}, we have:
    \begin{align*}
        V^*_1(s_1) - V^\pi_1(s_1) =& V^{\pi^*}_1(s_1) - V^\pi_1(s_1) = \EE[\sum_{h=1}^H \mathbb{I}[\tilde{\cE}_h] (V^{\pi^*}_h(s_h) - V^\pi_h(s_h))]\\
        \geq&\Delta_{\min} \EE[\sum_{h=1}^H \mathbb{I}[\tilde{\cE}_h]] \\
        \geq& \Delta_{\min} \Pr(\tilde{\cE}_{h_{init}})\\
        \geq& \Delta_{\min} d_{\min}
    \end{align*}
    where we use $\tilde{\cE}_h$ to denote the event that at step $h$, $\pi$ first disagrees with $\pi^*$, or equivalently, $\pi$ first take non-optimal action; in the second inequality, we define $h_{init}:= \min_{h\in[H]},\quad s.t.\quad \Pr(\tilde{\cE}_h) > 0$. Besides, the last inequality is because:
    \begin{align*}
        \Pr(\tilde{\cE}_{h_{init}}) =& \sum_{s_{h_{init}} \in \cS_{h_{init}}} \mathbb{I}[\pi^*_h(s_h) \neq \pi_h(s_h)] d^\pi(s_h)\\
        =& \sum_{s_{h_{init}} \in \cS_{h_{init}}} \mathbb{I}[\pi^*_h(s_h) \neq \pi_h(s_h)] d^{\pi^*}(s_h)\\
        \geq& \sum_{s_{h_{init}} \in \cS_{h_{init}}} \mathbb{I}[\pi^*_h(s_h) \neq \pi_h(s_h)] d_{\min}\\
        \geq& d_{\min}
    \end{align*}
    where the last step is because, according to the definition of $h_{init}$, there is at least one $s_h\in\cS_h$ such that $\mathbb{I}[\pi^*_h(s_h) \neq \pi_h(s_h)]=1$.
\end{proof}

\subsection{Upper Bound for the Regret of $\algE$}
\ThmClipTrick*
\begin{proof}
    We separately discuss the cases when there are unique or multiple deterministic optimal policies.
    \paragraph{Case 1: Unique Deterministic Optimal Policy}
    For arbitrary $h,s_h$, suppose $\piPVI_k(s_h)\not\in\Pi^*(s_h)$, we have:
    \begin{align*}
        V^*_h(s_h) - \ddot{V}^{\pi^*}_{k,h}(s_h) \geq& V_h^*(s_h) - \hV_{k,h}(s_h) - (H-h+1)\epsClip \\
        \geq & \frac{1}{2}\Big(V_h^*(s_h) - \hV_{k,h}(s_h)\Big) + \frac{1}{2}\Big(V_h^*(s_h) - V_h^{\piPVI_k}(s_h)\Big) - \frac{\Delta_{\min}}{2}\\
        \geq & \frac{1}{2}\Big(V_h^*(s_h) - \hV_{k,h}(s_h)\Big) + \frac{1}{2}\Big(V_h^*(s_h) - Q_h^*(s_h,\piPVI_k)\Big) - \frac{\Delta_{\min}}{2}\\
        = & \frac{1}{2}\Big(V_h^*(s_h) - \hV_{k,h}(s_h)\Big) + \frac{\Delta_h(s_h,\piPVI_k(s_h))}{2} - \frac{\Delta_{\min}}{2} \\
        \geq& \frac{1}{2}\Big(V_h^*(s_h) - \hV_{k,h}(s_h)\Big)
    \end{align*}
    Recall the definition of Events in Def.\ref{def:policy_agreements_events}, and note that when the optimal policy is unique, the events $\tilde{\cE}_{k,h,\pi^*}, \bar{\cE}_{k,h,\pi^*}$ collapse to $\tilde{\cE}_{k,h}, \bar{\cE}_{k,h}$, respectively. For arbitrary optimal policy $\pi^*$, we have:
    \begin{align*}
        V^*_1(s_1) - \ddot{V}^{\pi^*}_1(s_1) =& V^{\pi^*}_1(s_1) - \ddot{V}^{\pi^*}_1(s_1)\\
        =& \mathbb{I}[\cE_{k,1}]\Big(V^*_1(s_1) - \ddot{V}_{1}^{\pi^*}(s_1)\Big) + \mathbb{I}[\cE_{k,1}^\complement]\Big(V^*_1(s_1) - \ddot{V}_{1}^{\pi^*}(s_1)\Big)\\
        \geq& \mathbb{I}[\cE_{k,1}]\frac{1}{2}\Big(V^*_1(s_1) - \hV_{k,1}(s_1)\Big)+ \mathbb{I}[\cE_{k,1}^\complement]\mP (V^*_{2}-\ddot{V}_{2}^{\pi^*})(s_1,\pi^*)\\
        \geq& ...\\
        \geq&\frac{1}{2} \EE_{\pi^*}[\sum_{h=1}^H \mathbb{I}[\tilde{\cE}_{k,h}] (V_h^*(s_h) - \hV_{k,h}(s_h))]
    \end{align*}
    Besides, on the other hand,
    \begin{align*}
        V_1^*(s_1)-V^{\piPVI_k}_1(s_1) 
        =& \mathbb{I}[\cE_{k,1}](V_1^{\pi^*} - V^{\piPVI_k}_1(s_1)) + \mathbb{I}[\cE_{k,1}^\complement](V_1^{\pi^*} - Q^{\piPVI_k}_1(s_1,\pi^*)) \\
        =& \mathbb{I}[\cE_{k,1}](V_1^{\pi^*} - V^{\piPVI_k}_1(s_1)) + \mathbb{I}[\cE_{k,1}^\complement]\mP_1(V_2^* - V^{\piPVI_k}_2)(s_1,\pi^*)) \\
        =& ... \\
        =& \EE_{\piPVI_k}[\sum_{h=1}^H \mathbb{I}[\tilde{\cE}_{k,h}] (V_h^{\pi^*}(s_h) - V^{\piPVI_k}_h(s_h))]\\
        \leq & \EE_{\piPVI_k}[\sum_{h=1}^H \mathbb{I}[\tilde{\cE}_{k,h}] (V_h^{\pi^*}(s_h) - \hV_{k,h}(s_h))]
    \end{align*}
    Combining the above two results and Lemma \ref{lem:upper_bound_surplus}, we finish the discussion for Case 1.

    
    \paragraph{Case 2: Non-unique Optimal Deterministic Policies}
    From Lemma \ref{lem:relationship_between_value_diff_and_deficit}, we know that,
    \begin{align*}
        V^*_1(s_1) - V^{\piPVI_k}_1(s_1) \leq V^*_1(s_1) - \hV_{k,1}(s_1) \leq& \EE_{\pi^*}[\sum_{h=1}^H \surplus_{k,h}(s_h,a_h)]
    \end{align*}
    where $\pi^*$ can be arbitrary optimal policy. Combining with Lemma \ref{lem:minimal_gap_for_det_subopt_policy}, we know that:
    \begin{align*}
        V^*_1(s_1) - V^{\piPVI_k}_1(s_1) \leq& \Clip[\EE_{\pi^*}[\sum_{h=1}^H \surplus_{k,h}(s_h,a_h)]|d_{\min}\Delta_{\min}]\\
        \leq& 2\sum_{h=1}^H \sum_{s_h\in\cS_h,a_h\in\cA_h} \Clip[d^{\pi^*}(s_h,a_h)\surplus_{k,h}(s_h,a_h)|\frac{d_{\min}\Delta_{\min}}{2SAH}] \tag{Lemma \ref{lem:property_of_clip_operator}}\\
        \leq& 2\sum_{h=1}^H \EE_{\pi^*}[\Clip[\surplus_{k,h}(s_h,a_h)|\frac{d_{\min}\Delta_{\min}}{2SAH}]]
    \end{align*}
    where the last inequality is because $\Clip[\alpha x| \epsilon] < \alpha \Clip[x|\epsilon]$ as long as $\alpha < 1$. Combining with Lemma \ref{lem:upper_bound_surplus}, we finish the proof.
\end{proof}
Next, we introduce a useful Lemma from \citep{dann2017unifying}:
\begin{lemma}[Lemma 7.4 in \citep{dann2017unifying}]\label{lem:concentration}
    Let $\mathcal{F}_i$ for $i,1...$ be a filtration and $X_1,...X_n$ be a sequence of Bernoulli random variables with $\Pr(X_i=1|\mathcal{F}_{i-1})=P_i$ with $P_i$ being $\mathcal{F}_{i-1}$-measurable and $X_i$ being $\mathcal{F}_i$ measurable. It holds that
    \begin{align*}
        \Pr(\exists n: \sum_{i=1}^n X_i < \sum_{i=1}^n P_i / 2 - W) \leq e^{-W}
    \end{align*}
\end{lemma}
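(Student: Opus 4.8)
The plan is to prove this anytime lower-deviation bound via an exponential supermartingale combined with Ville's maximal inequality for nonnegative supermartingales. The key device is to convert the random, unknown quantity $\sum_{i=1}^n P_i$ into a controllable drift term so that the final bound depends on neither $n$ nor $\sum_i P_i$. First I would fix a parameter $\lambda > 0$ (to be chosen) and define
\begin{align*}
    M_n := \exp\Big(-\lambda \sum_{i=1}^n X_i + (1-e^{-\lambda})\sum_{i=1}^n P_i\Big),\qquad M_0 := 1.
\end{align*}
The goal of the first step is to verify that $(M_n)_{n\ge 0}$ is a nonnegative supermartingale with respect to $(\mathcal{F}_n)$.

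For the supermartingale property I would use that $P_i$ is $\mathcal{F}_{i-1}$-measurable together with the Bernoulli moment generating function: since $X_i \in \{0,1\}$ with $\EE[X_i\mid \mathcal{F}_{i-1}] = P_i$,
\begin{align*}
    \EE\big[e^{-\lambda X_i}\mid \mathcal{F}_{i-1}\big] = 1 + P_i(e^{-\lambda}-1) \le \exp\big(P_i(e^{-\lambda}-1)\big),
\end{align*}
where the inequality is $1+x \le e^x$. Pulling the $\mathcal{F}_{n-1}$-measurable factors out of the conditional expectation then gives $\EE[M_n \mid \mathcal{F}_{n-1}] \le M_{n-1}$, precisely because the drift term $\exp((1-e^{-\lambda})P_n)$ cancels the bound $\exp(P_n(e^{-\lambda}-1))$ on the conditional MGF. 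Hence $\EE[M_n] \le M_0 = 1$ for every $n$.

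The final step connects the target event to a level crossing of $M_n$ and applies Ville's inequality. On the event $\{\sum_{i=1}^n X_i < \tfrac12\sum_{i=1}^n P_i - W\}$ I would substitute $-\lambda\sum_{i=1}^n X_i > -\tfrac{\lambda}{2}\sum_{i=1}^n P_i + \lambda W$ into the definition of $M_n$ to obtain
\begin{align*}
    M_n > \exp\Big(\lambda W + \big(1 - e^{-\lambda} - \tfrac{\lambda}{2}\big)\sum_{i=1}^n P_i\Big).
\end{align*}
Choosing $\lambda = 1$ makes the coefficient $1-e^{-1}-\tfrac12 = \tfrac12 - e^{-1} > 0$ nonnegative, and since $P_i \ge 0$ this yields $M_n > e^{W}$. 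Therefore the event $\{\exists n: \sum_{i=1}^n X_i < \tfrac12\sum_{i=1}^n P_i - W\}$ is contained in $\{\sup_{n}M_n \ge e^{W}\}$, and Ville's maximal inequality for nonnegative supermartingales gives $\Pr(\sup_n M_n \ge e^W) \le \EE[M_0]/e^W \le e^{-W}$, which is the claim.

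The main obstacle — and essentially the only place requiring care — is the joint constraint on $\lambda$: it must be large enough that the threshold $e^{\lambda W}$ reaches $e^{W}$ (forcing $\lambda \ge 1$) yet small enough that $1 - e^{-\lambda} \ge \tfrac{\lambda}{2}$, so that the leftover multiple of the random sum $\sum_i P_i$ carries a nonnegative coefficient and can simply be dropped. The value $\lambda = 1$ sits in this admissible window, and it is exactly this cancellation that makes the bound both uniform in $n$ (handled by passing to the supremum in Ville's inequality) and free of any dependence on the unknown $\sum_i P_i$.
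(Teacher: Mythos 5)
Your proof is correct: the exponential supermartingale $M_n$ is valid (the drift term exactly offsets the Bernoulli MGF bound via $1+x\le e^x$ and the predictability of $P_i$), the choice $\lambda=1$ indeed satisfies both constraints since $1-e^{-1}\ge\tfrac12$, and Ville's maximal inequality delivers the anytime bound. The paper itself does not prove this lemma but imports it from \citep{dann2017unifying}, and your argument is essentially the same exponential-martingale/maximal-inequality proof given in that reference, so there is nothing to add.
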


\paragraph{Definition of Good Events}
We first introduce some notations about good events which holds with high probability. 
We override the definition in Cond. \ref{cond:bonus_term} by assigning $\delta=\delta_k = 1/k^\alpha$ at iteration $k$, i.e.:
\begin{align*}
    \cEBk :=& \bigcap_{h\in[H],s_h\in\cS_h,a_h\in\cA_h}\Big\{|\hat P_{k,h}V_{k,h+1}(s_h,a_h) - P_hV_{k,h+1}(s_h,a_h)| < b_{k,h}(s_h,a_h)\} \\
    &\quad\quad\quad\quad\quad\quad\quad\quad\quad\quad\quad\quad \cap \{b_{k,h}(s_h,a_h) \leq B_1\sqrt{\frac{\log (B_2\cdot k^\alpha)}{N_{k,h}(s_h,a_h)}}\Big\}, \\
    \text{with} &\quad b_k=\{b_{k,1},...,b_{k,H}\} \gets \textbf{Bonus}(D_k,1/k^\alpha).
\end{align*}
Besides, we use $\cECk$ to denote the concentration event that 
\begin{align*}
    \cECk:= \bigcap_{h\in[H],s_h\in\cS_h,a_h\in\cA_h} \{N_{k,h}(s_h,a_h) \geq \frac{1}{2}\sum_{\pk=1}^k d^{\piO_\pk}(s_h,a_h) - \alpha \log(SAHk)\}
\end{align*}
Finally, we use $\cEOk$ to denote the good events that the regret of $\algO$ is only at the level $\log k$:
\begin{align*}
    \cEOk := \{\sum_{\tk=1}^\nk V^*_1(s_1)-V^{\piO_\tk}_1(s_1) < C_1 + \alpha C_2 \log \nk\} 
\end{align*}
Based on Cond. \ref{cond:bonus_term}, Cond. \ref{cond:requirement_on_algO} and Lemma \ref{lem:concentration}, we have:
\begin{align*}
    &\Pr(\cEOk) \geq 1-\frac{1}{k^\alpha}, \quad \Pr(\cEBk) \geq 1 - \frac{1}{k^\alpha}\\
    &\Pr(\cECk) \geq 1 - SAH \cdot \exp(-\alpha \log(SAHk)) = 1 - \frac{SAH}{(SAHk)^\alpha} \geq 1-\frac{1}{k^\alpha}
\end{align*}

\begin{restatable}{lemma}{ThmSubOpt}[One Step Sub-optimality Gap Conditioning on Good Events]\label{thm:SubOpt_under_GoodEvents}
    At iteration $k$, on the good events $\cEBk,\cECk$ and $\cEOk$, the sub-optimality gap of $\piE_k$ can be upper bounded by:
    
    (i) when $|\Pi^*|=1$ (i.e. the optimal deterministic policy is unique):
    \begin{align*}
        V_1^*(s_1) - V^{\piE_k}(s_1) \leq 2\EE_{\pi^*}\Big[\sum_{h=1}^H\Clip\Big[\mathbb{I}[k < \btau^{\pi^*}_{s_h,a_h}]\}\cdot H + \mathbb{I}[k \geq \btau^{\pi^*}_{s_h,a_h}]\cdot B_1\sqrt{\frac{8\alpha \log (B_2 k)}{kd^{\pi^*}(s_h,a_h)}}\Big|\epsClip\Big]\Big]
    \end{align*}

    \indent (ii) when $|\Pi^*|>1$ (i.e. there are multiple optimal deterministic policy):
    \begin{align*}
        V_1^*(s_1) - V^{\piE_k}(s_1) \leq 2\EE_{\pi^*_{\text{cover}}}\Big[\sum_{h=1}^H\Clip\Big[\mathbb{I}[k < \btau^{\pi^*_{\text{cover}}}_{s_h,a_h}]\}\cdot H + \mathbb{I}[k \geq \btau^{\pi^*_{\text{cover}}}_{s_h,a_h}]\cdot B_1\sqrt{\frac{4\alpha \log (B_2 k)}{k\tilde{d}^{\pi^*_{\text{cover}}}(s_h,a_h)}}\Big|\epsClip'\Big]\Big]
    \end{align*}
    where $\epsClip:=\frac{\Delta_{\min}}{2H+2}$ and $\epsClip':=\frac{d_{\min}\Delta_{\min}}{2SAH}$; $\pi^*_{\text{cover}}$ and $\tilde{d}^{\pi^*_{\text{cover}}}(s_h,a_h)$ are defined in Thm. \ref{thm:existence_of_well_covered_optpi}; besides,
    \begin{align*}
        \btau^\pi_{s_h,a_h}:=c_\tau\frac{\alpha(C_1+C_2)}{d^{\pi}(s_h,a_h)\Delta_{\min}} \log \frac{\alpha SAH(C_1+C_2)}{d^{\pi}(s_h,a_h) \Delta_{\min}},\quad \btau^{\pi^*_{\text{cover}}}_{s_h,a_h}:=c'_\tau\frac{\alpha(C_1+C_2)}{\tilde{d}^{\pi^*_{\text{cover}}}(s_h,a_h)\Delta_{\min}} \log \frac{\alpha SAH(C_1+C_2)}{\tilde{d}^{\pi^*_{\text{cover}}}(s_h,a_h) \Delta_{\min}}.
    \end{align*}
    for some constant $c_\tau$, $c'_\tau$.
\end{restatable}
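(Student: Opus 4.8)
The plan is to obtain this one-step bound by feeding a high-probability lower bound on the visitation counts $N_{k,h}(s_h,a_h)$ into the per-iteration clipping inequality of Theorem~\ref{thm:clipping_trick}. First I would instantiate Theorem~\ref{thm:clipping_trick} at iteration $k$ with confidence level $\delta_k=1/k^\alpha$, which is legitimate because the good event $\cEBk$ is exactly the specialization of $\cEB$ to this $\delta_k$. This already gives
\[
V_1^*(s_1) - V^{\piPVI_k}_1(s_1) \leq 2\EE_{\pi^*}\Big[\sum_{h=1}^H \Clip\Big[\min\{H, 2B_1\sqrt{\tfrac{\log(B_2 k^\alpha)}{N_{k,h}(s_h,a_h)}}\}\Big|\epsClip\Big]\Big],
\]
with $\pi^*$ the unique optimal policy when $|\Pi^*|=1$ and $\epsClip=\Delta_{\min}/(2H+2)$. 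The remaining task is purely to control $N_{k,h}$ from below and to rewrite the $\min$ in the stated indicator form.

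Second, I would assemble the count lower bound by chaining three facts that all hold on $\cECk\cap\cEOk$. The concentration event $\cECk$ gives $N_{k,h}(s_h,a_h)\geq \tfrac12\sum_{\tk=1}^k d^{\piO_\tk}(s_h,a_h) - \alpha\log(SAHk)$; the density-versus-regret bound of Corollary~\ref{corl:unique_optimal_policy} (the $|\Pi^*|=1$ form of Theorem~\ref{thm:algO_regret_vs_algP_density}) gives $\sum_{\tk=1}^k d^{\piO_\tk}(s_h,a_h)\geq k\,d^{\pi^*}(s_h,a_h) - \tfrac{1}{\Delta_{\min}}\sum_{\tk=1}^k (V_1^*-V_1^{\piO_\tk})$; and the regret event $\cEOk$ caps the last sum by $C_1+\alpha C_2\log k$. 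Stitching these yields $N_{k,h}(s_h,a_h)\geq \tfrac12 k\,d^{\pi^*}(s_h,a_h) - \tfrac{C_1+\alpha C_2\log k}{2\Delta_{\min}} - \alpha\log(SAHk)$.

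Third comes the regime split. I would define $\btau^{\pi^*}_{s_h,a_h}$ as the smallest $k$ past which the two logarithmic lower-order terms above are dominated by a constant fraction of $k\,d^{\pi^*}(s_h,a_h)$; solving the resulting inequality of the shape $k\gtrsim a\log k+b$ produces exactly the form $\btau\sim \tfrac{\alpha(C_1+C_2)}{d^{\pi^*}\Delta_{\min}}\log\tfrac{\alpha SAH(C_1+C_2)}{d^{\pi^*}\Delta_{\min}}$. For $k\geq \btau^{\pi^*}_{s_h,a_h}$ we then have $N_{k,h}\geq\Theta(k\,d^{\pi^*})$, so after bounding $\log(B_2 k^\alpha)\leq\alpha\log(B_2 k)$ the term $2B_1\sqrt{\log(B_2 k^\alpha)/N_{k,h}}$ is controlled by $B_1\sqrt{8\alpha\log(B_2 k)/(k\,d^{\pi^*})}$ (the exact constant being fixed by the secured fraction and absorbed into $c_\tau$); for $k<\btau$ the count may be negligible, so I simply use $\min\{H,\cdot\}\leq H$. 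Since $\min\{H,x\}\leq H$ and $\min\{H,x\}\leq x$ always, both regimes are captured by the single integrand $\mathbb{I}[k<\btau]H+\mathbb{I}[k\geq\btau]B_1\sqrt{8\alpha\log(B_2 k)/(k\,d^{\pi^*})}$, and monotonicity of $\Clip[\cdot|\epsClip]$ followed by $\EE_{\pi^*}$ gives case (i).

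For case (ii) I would run the same argument, replacing Corollary~\ref{corl:unique_optimal_policy} by the general Theorem~\ref{thm:algO_regret_vs_algP_density} together with Theorem~\ref{thm:existence_of_well_covered_optpi}: the former supplies a (possibly iteration-varying) sequence of optimal policies $\pi^*_\tk$ with $\sum d^{\piO_\tk}\geq\sum d^{\pi^*_\tk}-\tfrac{1}{\Delta_{\min}}\,\mathrm{regret}$, and the latter lower-bounds $\sum_{\tk=1}^k d^{\pi^*_\tk}(s_h,a_h)\geq \tfrac{k}{2}\tilde{d}^{\pi^*_{\text{cover}}}(s_h,a_h)$ through a single reference policy $\pi^*_{\text{cover}}$. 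This is precisely where the main obstacle lies: with non-unique optimal policies the reference optimal policy changes across iterations, so without Theorem~\ref{thm:existence_of_well_covered_optpi} there is no single $\pi^*$ whose reachable pairs all accumulate $\Omega(k)$ visitations; the surrogate $\tilde{d}^{\pi^*_{\text{cover}}}$, with its built-in $(|\cZ^*_{h,\text{div}}|+1)H$ and $d^*_{h,\min}$ factors, restores this. Substituting, using the clipping threshold $\epsClip':=d_{\min}\Delta_{\min}/(2SAH)$ from the $|\Pi^*|>1$ branch of Theorem~\ref{thm:clipping_trick}, and defining $\btau^{\pi^*_{\text{cover}}}_{s_h,a_h}$ analogously completes the proof, with all residual constant discrepancies folded into $c_\tau$ and $c'_\tau$.
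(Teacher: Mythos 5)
Your proposal is correct and follows essentially the same route as the paper's proof: instantiate Theorem~\ref{thm:clipping_trick} with $\delta_k=1/k^\alpha$ on $\cEBk$, lower-bound $N_{k,h}$ on $\cECk\cap\cEOk$ via Theorem~\ref{thm:algO_regret_vs_algP_density} (plus Theorem~\ref{thm:existence_of_well_covered_optpi} when $|\Pi^*|>1$), and split into the $k<\btau$ and $k\geq\btau$ regimes with $\btau$ solving the $k\gtrsim a\log k+b$ inequality. The only differences are inconsequential constant-factor bookkeeping (e.g., whether the halving from $\cECk$ is applied before or after invoking the regret bound, and the exact numerical factor inside the square root), all of which are absorbed into $c_\tau$, $c'_\tau$ exactly as you indicate.
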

\begin{proof}
We first discuss the case when $|\Pi^*|=1$.
\paragraph{Case 1: unique optimal deterministic policy}
As a result of Thm. \ref{thm:clipping_trick}, on the event $\cEBk$, we show that the sub-optimality gap of $\piE_k$ can be upper bounded by:
\begin{align*}
    V^*_1(s_1) - V_1^{\piE_k}(s_1)\leq& 2\EE_{\pi^*}[\sum_{h=1}^H \ddot{\surplus}_{\nk,h}(s_h,a_h)]=\sum_{h=1}^H \sum_{s_h,a_h}d^{\pi^*}(s_h,a_h) \ddot{\surplus}_{\nk,h}(s_h,a_h)
\end{align*}
Because of Lemma \ref{lem:upper_bound_surplus}, the above further implies that:
\begin{align*}
    V^*_1(s_1) - V_1^{\piE_k}(s_1)\leq& 2\EE_{\pi^*}[d\min\{H-h+1, 2B_1\sqrt{\frac{\alpha \log (B_2 k)}{N_{k,h}(s_h,a_h)}}\}].
\end{align*}
Because of Thm. \ref{thm:algO_regret_vs_algP_density}, on the event $\cECk$ and $\cEOk$, we further have:
\begin{align*}
    N_{k,h}(s_h,a_h) \geq& \frac{1}{2}\sum_{\pk=1}^k d^{\piO_\pk}(s_h,a_h) - \alpha \log(SAHk) \\
    \geq& \frac{1}{2}\sum_{\pk=1}^k d^{\pi^*_\pk}(s_h,a_h) - \alpha \log(SAHk) - \frac{1}{\Delta_{\min}}(C_1+\alpha C_2\log k)\\
    \geq& \frac{k}{2}d^{\pi^*}(s_h,a_h) - \alpha \log(SAHk) - \frac{1}{\Delta_{\min}}(C_1+\alpha C_2\log k)
\end{align*}
Now, we define that,
\begin{align*}
    \tau^{\pi^*}_{s_h,a_h} := & \inf_{t:\forall t' \geq t} \{\frac{1}{4}t d^{\pi^*}(s_h,a_h) \geq \alpha \log(SAHt) + \frac{1}{\Delta_{\min}}(C_1+\alpha C_2\log t)\}
\end{align*}
there must exists a constant $c_\tau$ independent with $C_1,C_2,\alpha, d^{\pi^*}(s_h,a_h)$ and $\Delta_{\min}$, such that:
\begin{align*}
    \forall h\in[H],s_h\in\cS_h,a_h\in\cA_h,\quad \tau^{\pi^*}_{s_h,a_h} \leq \btau^{\pi^*}_{s_h,a_h}:=c_\tau\frac{\alpha(C_1+C_2)}{d^{\pi^*}(s_h,a_h)\Delta_{\min}} \log \frac{\alpha SAH(C_1+C_2)}{d^{\pi^*}(s_h,a_h) \Delta_{\min}}.
\end{align*}
Easy to check that, for arbitrary $k \geq \btau^{\pi^*}_{s_h,a_h}$, on the good events, we can verify that $N_{k,h} \geq \frac{k}{4}d^{\pi^*}(s_h,a_h) \geq \frac{\btau^{\pi^*}_{s_h,a_h}}{4}d^{\pi^*}(s_h,a_h) \geq \frac{c_\tau}{4} > 0$, and as a result, we have:
\begin{align*}
    & V^*_1(s_1) - V_1^{\piE_k}(s_1) \\
    \leq& 2\EE_{\pi^*}\Big[\sum_{h=1}^H\Clip\Big[\min\{H-h+1, 2B_1\sqrt{\frac{\alpha \log (B_2 k)}{N_{k,h}(s_h,a_h)}}\}\Big|\epsClip\Big]\Big]\\
    \leq& 2\EE_{\pi^*}\Big[\sum_{h=1}^H\Clip\Big[\mathbb{I}[k < \btau^{\pi^*}_{s_h,a_h}]\}\cdot H + \mathbb{I}[k \geq \btau^{\pi^*}_{s_h,a_h}]\cdot B_1\sqrt{\frac{\alpha \log (B_2 k)}{kd^{\pi^*}(s_h,a_h)/4}}\Big|\epsClip\Big]\Big]
\end{align*}
\paragraph{Case 2: multiple optimal deterministic policies}
The discussion are similar. As a result of Thm. \ref{thm:existence_of_well_covered_optpi}, on the event $\cECk$ and $\cEOk$, we further have:
\begin{align*}
    N_{k,h}(s_h,a_h) \geq& \frac{k}{4}\cdot\max\{\frac{d^*_{h,\min}(s_h,a_h)}{(|\cZ_{h,\text{div}}|+1)H}, d^{\pi^*_{\text{cover}}}(s_h,a_h)\} - \alpha \log(SAHk) - \frac{1}{\Delta_{\min}} \Big(\sum_{k=1}^K V^*_1(s_1)-V^{\pi_k}_1(s_1)\Big) \\
    \geq& \frac{k}{4}\tilde{d}^{\pi^*_{\text{cover}}}(s_h,a_h) - \alpha \log(SAHk) - \frac{1}{\Delta_{\min}}(C_1+\alpha C_2\log k)
\end{align*}
Similarly, we define that,
\begin{align*}
    \tau^{\pi^*_{\text{cover}}}_{s_h,a_h} := & \inf_{t:\forall t' \geq t} \{\frac{t}{8}\tilde{d}^{\pi^*_{\text{cover}}}(s_h,a_h) \geq \alpha \log(SAHt) + \frac{1}{\Delta_{\min}}(C_1+\alpha C_2\log t)\}
\end{align*}
there must exists a constant $c'_\tau$ independent with $C_1,C_2,\alpha, \tilde{d}^{\pi^*_{\text{cover}}}(s_h,a_h)$ and $\Delta_{\min}$, such that:
\begin{align*}
    \forall h\in[H],s_h\in\cS_h,a_h\in\cA_h,\quad \tau^{\pi^*_{\text{cover}}}_{s_h,a_h} \leq \btau^{\pi^*_{\text{cover}}}_{s_h,a_h}:=c'_\tau\frac{\alpha(C_1+C_2)}{\tilde{d}^{\pi^*_{\text{cover}}}(s_h,a_h)\Delta_{\min}} \log \frac{\alpha SAH(C_1+C_2)}{\tilde{d}^{\pi^*_{\text{cover}}}(s_h,a_h) \Delta_{\min}}.
\end{align*}
For arbitrary $k \geq \btau^{\pi^*_{\text{cover}}}_{s_h,a_h}$, on the good events, we can verify that $N_{k,h} \geq \frac{k}{8}d^{\pi^*_{\text{cover}}}(s_h,a_h) > 0$, and as a result, we have:
\begin{align*}
    & V^*_1(s_1) - V_1^{\piE_k}(s_1) \leq 2\EE_{\pi^*_{\text{cover}}}\Big[\sum_{h=1}^H\Clip\Big[\mathbb{I}[k < \btau^{\pi^*_{\text{cover}}}_{s_h,a_h}]\}\cdot H + \mathbb{I}[k \geq \btau^{\pi^*_{\text{cover}}}_{s_h,a_h}]\cdot B_1\sqrt{\frac{8\alpha \log (B_2 k)}{k\tilde{d}^{\pi^*_{\text{cover}}}(s_h,a_h)}}\Big|\epsClip'\Big]\Big]
\end{align*}
\end{proof}
Now, we are ready to prove the main theorem.
\ThmUBPVI*
\begin{proof}
Because the expectation and summation are linear, we have:
\begin{align*}
    \EE[\sum_{k=1}^K V^* - V^{\piE_k}] = \sum_{k=1}^K \EE[V^* - V^{\piE_k}]
\end{align*}
Therefore, in the following, we first provide an upper bound for each $\EE[V^* - V^{\piE_k}]$. Note that the expected regert at step $k$ can be upper bounded by:
\begin{align*}
    &\EE_{\algO, M, \algE}[V_1^*(s_1)-V^{\piE_k}_1(s_1)] \\
    =& \Pr(\cEBk\cap\cECk\cap\cEOk)\EE_{\algO, M, \algE}[V_1^*(s_1)-V^{\piE_k}_1(s_1)|\cEBk\cap\cECk\cap\cEOk]\\
    &+ \Pr(\cEBk^\complement\cup\cECk^\complement\cup\cEOk^\complement)\EE_{\algO, M, \algE}[V_1^*(s_1)-V^{\piE_k}_1(s_1)|\cEBk^\complement\cup\cECk^\complement\cup\cEOk^\complement]\\
    \leq& \Pr(\cEBk\cap\cECk\cap\cEOk)\EE_{\algO, M, \algE}[V_1^*(s_1)-V^{\piE_k}_1(s_1)|\cEBk\cap\cECk\cap\cEOk] + \frac{3H}{k^\alpha}
\end{align*}
Easy to see that $\lim_{K\rightarrow \infty}\sum_{k=1}^K \frac{1}{k^\alpha} < \frac{\alpha}{\alpha - 1} < \infty$ as long as $\alpha > 1$, therefore, in the following, we mainly focus on the first part, and separately discuss its upper bound for the case when $|\Pi^*|=1$ or $|\Pi^*|>1$.
\paragraph{Case 1: $|\Pi^*|=1$ (Unique Optimal Policy)}
We use $\pi^*$ to denote the unique optimal policy and define:
\begin{align*}
    \tau^{\pi^*}_{s_h,a_h,\epsClip} := \inf_{t,\forall t' \geq t} \{B_1\sqrt{\frac{\alpha \log (B_2 t)}{td^{\pi^*}(s_h,a_h)/4}} < \epsClip\}
\end{align*}
Recall that $\epsClip:= \Delta_{\min}/(2H+2)$, it's easy to verify that, there exists a constant $c_{\text{Clip}}$ such that,
\begin{align*}
    \tau^{\pi^*}_{s_h,a_h,\epsClip} \leq \ttau^{\pi^*}_{s_h,a_h,\epsClip} := c_{\text{Clip}} \frac{\alpha H^2}{d^{\pi^*}(s_h,a_h)\Delta^2_{\min}} \log \frac{\alpha B_2 H}{d^{\pi^*}(s_h,a_h)\Delta_{\min}}
\end{align*}
Then, we have:
\begin{align*}
    \lim_{K\rightarrow \infty} \quad &\sum_{k=1}^K 2\EE_{\pi^*}\Big[\sum_{h=1}^H\Clip\Big[\mathbb{I}[k < \btau^{\pi^*}_{s_h,a_h}]\}\cdot H + \mathbb{I}[k \geq \btau^{\pi^*}_{s_h,a_h}]\cdot B_1\sqrt{\frac{\alpha \log (B_2 k)}{kd^{\pi^*}(s_h,a_h)/4}}\Big|\epsClip\Big]\Big]\\
    =&2\EE_{\pi^*}\Big[\sum_{h=1}^H\Big(\sum_{k=1}^{\btau^{\pi^*}_{s_h,a_h}} H+\sum_{k=\btau^{\pi^*}_{s_h,a_h}+1}^K \Clip\Big[B_1\sqrt{\frac{\alpha \log (B_2 k)}{kd^{\pi^*}(s_h,a_h)/4}}\Big|\epsClip\Big]\Big)\Big]\\
    \leq& 2\EE_{\pi^*}[\sum_{h=1}^H\sum_{k=1}^{\btau^{\pi^*}_{s_h,a_h}} H] + 2\EE_{\pi^*}\Big[\sum_{h=1}^H\int_{x=\btau^{\pi^*}_{s_h,a_h}}^{\ttau^{\pi^*}_{s_h,a_h,\epsClip}} B_1\sqrt{\frac{\alpha \log (B_2 x)}{xd^{\pi^*}(s_h,a_h)/4}} dx]\\
    \leq& 2\EE_{\pi^*}[\sum_{h=1}^H\sum_{k=1}^{\btau^{\pi^*}_{s_h,a_h}} H] + 2\sum_{h=1}^H\sum_{\substack{s_h,a_h:\\d^{\pi^*}(s_h,a_h) > 0}} B_1 \sqrt{4\alpha d^{\pi^*}(s_h,a_h)}\int_{x=\btau^{\pi^*}_{s_h,a_h}}^{\ttau^{\pi^*}_{s_h,a_h,\epsClip}} \sqrt{\frac{\log (B_2 x)}{x}}dx
\end{align*}
For the first part, we have:
\begin{align*}
    \EE_{\pi^*}[\sum_{h=1}^H\sum_{k=1}^{\btau^{\pi^*}_{s_h,a_h}} H] =& \sum_{h=1}^H\sum_{\substack{s_h,a_h:\\d^{\pi^*}(s_h,a_h) > 0}} d^{\pi^*}(s_h,a_h) \cdot H \cdot \btau^{\pi^*}_{s_h,a_h} \\
    \leq&c_\tau\frac{\alpha H(C_1+C_2)}{\Delta_{\min}} \cdot \sum_{h=1}^H\sum_{\substack{s_h,a_h:\\d^{\pi^*}(s_h,a_h) > 0}} \log \frac{\alpha SAH(C_1+C_2)}{d^{\pi^*}(s_h,a_h) \Delta_{\min}}
\end{align*}
For the second part, we have:
\begin{align*}
    & \EE_{\pi^*}\Big[\sum_{h=1}^H\int_{x=\btau^{\pi^*}_{s_h,a_h}}^{\ttau^{\pi^*}_{s_h,a_h,\epsClip}} B_1\sqrt{\frac{\alpha \log (B_2 x)}{xd^{\pi^*}(s_h,a_h)/4}} dx]\\
    \leq& \sum_{h=1}^H\sum_{\substack{s_h,a_h:\\d^{\pi^*}(s_h,a_h) > 0}} B_1 \sqrt{4\alpha d^{\pi^*}(s_h,a_h)}\int_{x=\btau^{\pi^*}_{s_h,a_h}}^{\ttau^{\pi^*}_{s_h,a_h,\epsClip}} \sqrt{\frac{\log (B_2 x)}{x}}dx\\
    \leq& 2\sum_{h=1}^H\sum_{\substack{s_h,a_h:\\d^{\pi^*}(s_h,a_h) > 0}} B_1 \sqrt{4\alpha d^{\pi^*}(s_h,a_h)}\cdot 2(\sqrt{\ttau^{\pi^*}_{s_h,a_h,\epsClip}\log B_2\ttau^{\pi^*}_{s_h,a_h,\epsClip}} - \sqrt{\btau^{\pi^*}_{s_h,a_h}\log B_2 \btau^{\pi^*}_{s_h,a_h}}) \tag{Lemma \ref{lem:compute_integral}}\\
    \leq & c_2 \frac{\alpha B_1 H}{\Delta_{\min}} \sum_{h=1}^H\sum_{\substack{s_h,a_h:\\d^{\pi^*}(s_h,a_h) > 0}} \log \frac{\alpha B_2 H}{d^{\pi^*}(s_h,a_h)\Delta_{\min}}
\end{align*}
where in the last step, we drop the term $-\sqrt{\btau^{\pi^*}_{s_h,a_h}\log B_2 \btau^{\pi^*}_{s_h,a_h}}$, and $c_2$ is a constant.

Combining the above results, we have:
\begin{align*}
&\EE_{\algO, M, \algE}[\sum_{k=1}^K V_1^*(s_1)-V^{\piE_k}_1(s_1)]\\
\leq& \sum_{k=1}^K \frac{3H}{k^\alpha} + c_\tau\frac{\alpha(C_1+C_2)}{\Delta_{\min}} \cdot \sum_{h=1}^H\sum_{\substack{s_h,a_h:\\d^{\pi^*}(s_h,a_h) > 0}} \log \frac{\alpha SAH(C_1+C_2)}{d^{\pi^*}(s_h,a_h) \Delta_{\min}} \\
&+ c_2\frac{\alpha B_1 H}{\Delta_{\min}} \sum_{h=1}^H\sum_{\substack{s_h,a_h:\\d^{\pi^*}(s_h,a_h) > 0}} \log \frac{\alpha B_2 H}{d^{\pi^*}(s_h,a_h)\Delta_{\min}} \\
\leq& \frac{3\alpha H}{\alpha - 1} + c_{\algE}\cdot \Big(\sum_{h=1}^H\sum_{\substack{s_h,a_h:\\d^{\pi^*}(s_h,a_h) > 0}} \frac{\alpha(C_1+C_2)}{\Delta_{\min}}\log \frac{\alpha SAH(C_1+C_2)}{d^{\pi^*}(s_h,a_h) \Delta_{\min}} + \frac{\alpha B_1 H}{\Delta_{\min}}\log \frac{\alpha B_2 H}{d^{\pi^*}(s_h,a_h)\Delta_{\min}}\Big)
\end{align*}
where $c_{\algE}$ is some constant.
\paragraph{Case 2: $|\Pi^*|>1$ (Non-Unique Optimal Policy)}
Similar to the discussion above, we define:
\begin{align*}
    \tau^{\pi^*_{\text{cover}}}_{s_h,a_h,\epsClip'} := \inf_{t,\forall t' \geq t} \{B_1\sqrt{\frac{8\alpha \log (B_2 t)}{t\tilde{d}^{\pi^*_{\text{cover}}}(s_h,a_h)}} < \epsClip'\}
\end{align*}
Recall that $\epsClip':= d_{\min}\Delta_{\min}/(2SAH)$, it's easy to verify that, there exists a constant $c_{\text{Clip}}$ such that,
\begin{align*}
    \tau^{\pi^*_{\text{cover}}}_{s_h,a_h,\epsClip'} \leq \ttau^{\pi^*_{\text{cover}}}_{s_h,a_h,\epsClip'} := c_{\text{Clip}} \frac{\alpha (SAH)^2}{\tilde{d}^{\pi^*_{\text{cover}}}(s_h,a_h)(d_{\min}\Delta_{\min})^2} \log \frac{\alpha B_2 SAH}{\tilde{d}^{\pi^*_{\text{cover}}}(s_h,a_h)d_{\min}\Delta_{\min}}
\end{align*}
Following a similar discussion, we have:
\begin{align*}
    \lim_{K\rightarrow \infty} \quad &\sum_{k=1}^K 2\EE_{\pi^*_{\text{cover}}}\Big[\sum_{h=1}^H\Clip\Big[\mathbb{I}[k < \btau^{\pi^*_{\text{cover}}}_{s_h,a_h}]\}\cdot H + \mathbb{I}[k \geq \btau^{\pi^*_{\text{cover}}}_{s_h,a_h}]\cdot B_1\sqrt{\frac{8\alpha \log (B_2 k)}{k\tilde{d}^{\pi^*_{\text{cover}}}(s_h,a_h)}}\Big|\epsClip'\Big]\Big]\\
    \leq& 2\EE_{\pi^*_{\text{cover}}}[\sum_{h=1}^H\sum_{k=1}^{\btau^{\pi^*_{\text{cover}}}_{s_h,a_h}} H] + 2\sum_{h=1}^H\sum_{\substack{s_h,a_h:\\d^{\pi^*_{\text{cover}}}(s_h,a_h) > 0}} B_1 \sqrt{8\alpha d^{\pi^*_{\text{cover}}}(s_h,a_h)}\int_{x=\btau^{\pi^*_{\text{cover}}}_{s_h,a_h}}^{\ttau^{\pi^*_{\text{cover}}}_{s_h,a_h,\epsClip'}} \sqrt{\frac{\log (B_2 x)}{x}}dx\\
    \leq& c'_\tau\frac{\alpha H(C_1+C_2)}{\Delta_{\min}} \cdot \sum_{h=1}^H\sum_{\substack{s_h,a_h:\\d^{\pi^*_{\text{cover}}}(s_h,a_h) > 0}} \log \frac{\alpha SAH(C_1+C_2)}{\tilde{d}^{\pi^*_{\text{cover}}}(s_h,a_h) \Delta_{\min}}\\
    &+c_2' \frac{\alpha B_1 SAH}{d_{\min}\Delta_{\min}} \sum_{h=1}^H\sum_{\substack{s_h,a_h:\\d^{\pi^*_{\text{cover}}}(s_h,a_h) > 0}} \log \frac{\alpha B_2 SAH}{d_{\min}\Delta_{\min}} \tag{Note that $\tilde{d}^{\pi^*_{\text{cover}}} \geq d^{\pi^*_{\text{cover}}}$}
\end{align*} 
Therefore, we have:
\begin{align*}
&\EE_{\algO, M, \algE}[\sum_{k=1}^K V_1^*(s_1)-V^{\piE_k}_1(s_1)]\\
\leq& \frac{3\alpha H}{\alpha - 1} + c'_{\algE}\cdot\Big(\sum_{h=1}^H\sum_{\substack{s_h,a_h:\\d^{\pi^*_{\text{cover}}}(s_h,a_h) > 0}} \frac{\alpha(C_1+C_2)}{\Delta_{\min}}\log \frac{\alpha SAH(C_1+C_2)}{\tilde{d}^{\pi^*_{\text{cover}}}(s_h,a_h)\Delta_{\min}} + \frac{\alpha B_1 SAH}{d_{\min}\Delta_{\min}}\log \frac{\alpha B_2 SAH}{d_{\min}\Delta_{\min}}\Big)
\end{align*}
\end{proof}

\begin{lemma}[Computation of Integral]\label{lem:compute_integral}
    Suppose $p \geq 1$, $b \geq a \geq e/p$, then we have:
    \begin{align*}
        \int_a^b \sqrt{\frac{\log px}{x}} dx \leq 2(\sqrt{b\log pb} - \sqrt{a\log pa})
    \end{align*}
\end{lemma}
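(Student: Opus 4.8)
The right-hand side $2(\sqrt{b\log pb}-\sqrt{a\log pa})$ has exactly the form $F(b)-F(a)$ for the function $F(x):=2\sqrt{x\log px}$, which immediately suggests treating $F$ as an antiderivative-style majorant. The plan is therefore to show the pointwise inequality $\sqrt{\log(px)/x}\le F'(x)$ on the whole interval $[a,b]$ and then invoke the fundamental theorem of calculus, so that $\int_a^b\sqrt{\log(px)/x}\,dx\le\int_a^b F'(x)\,dx=F(b)-F(a)$.

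First I would compute the derivative explicitly. Writing $g(x)=x\log(px)$ so that $F=2\sqrt{g}$, one has $g'(x)=\log(px)+1$ and hence $F'(x)=g'(x)/\sqrt{g(x)}=(\log(px)+1)/\sqrt{x\log(px)}$. Subtracting the integrand and placing everything over the common denominator $\sqrt{x\log(px)}$ yields the clean identity $F'(x)-\sqrt{\log(px)/x}=1/\sqrt{x\log(px)}$, which is strictly positive as soon as $\log(px)>0$. This is precisely where the hypothesis $a\ge e/p$ is used: it forces $\log(px)\ge\log(pa)\ge 1>0$ for every $x\in[a,b]$, guaranteeing both that the integrand is real-valued and that the comparison $F'(x)\ge\sqrt{\log(px)/x}$ holds throughout the interval.

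With the pointwise bound in hand, the conclusion is immediate: integrating $F'(x)\ge\sqrt{\log(px)/x}$ over $[a,b]$ and evaluating $F$ at the endpoints gives $\int_a^b\sqrt{\log(px)/x}\,dx\le F(b)-F(a)=2(\sqrt{b\log pb}-\sqrt{a\log pa})$, which is the claimed estimate.

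As for the main obstacle, there is essentially no technical difficulty once the majorant $F(x)=2\sqrt{x\log px}$ is identified; the entire content of the lemma is that single guess. The only point worth flagging is the reason $F'$ dominates the integrand rather than matching it: the extra $+1$ produced when differentiating $x\log(px)$ leaves a nonnegative surplus $1/\sqrt{x\log(px)}$, and that surplus stays nonnegative exactly under the condition $\log(px)\ge 0$ that the hypothesis $a\ge e/p$ secures. I would present the derivative computation and the common-denominator simplification as the only lines of real calculation.
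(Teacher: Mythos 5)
Your proof is correct and is essentially identical to the paper's: both arguments bound the integrand by $2(\sqrt{x\log px})' = \sqrt{\log(px)/x} + 1/\sqrt{x\log px}$, using $\log(px)\ge 1$ on $[a,b]$ (from $a\ge e/p$) to ensure the surplus term is nonnegative, and then integrate. Nothing is missing.
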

\begin{proof}
    \begin{align*}
        \int_a^b \sqrt{\frac{\log px}{x}} dx \leq \int_a^b \frac{1}{\sqrt{x\log px}}+\sqrt{\frac{\log px}{x}} dx= 2\int_a^b (\sqrt{x\log px})' = 2(\sqrt{b\log pb} - \sqrt{a\log pa})
    \end{align*}
\end{proof}

\section{Doubling Trick for $\algO$ Satisfying Cond. \ref{cond:realistic_requirement_on_algO}}\label{appx:Doubling_Trick}
As we briefly mentioned in Sec.\ref{sec:choice_algO}, Cond.\ref{cond:requirement_on_algO} may not holds for some algorithms with near-optimal regret guarantees. 
For example, in \citep{simchowitz2019non,xu2021fine,dann2021beyond}, although these algorithms are anytime, they require a confidence interval $\delta$ as input at the beginning of the algorithm and fix it during the running, which we abstract into the Cond.\ref{cond:realistic_requirement_on_algO} below:
\begin{condition}[Alternative Condition of $\algO$]\label{cond:realistic_requirement_on_algO}
    $\algO$ is an algorithm which returns a deterministic policies $\piO_\tk$ at each iteration $\tk$,
    and for arbitrary fixed $\nk \geq 2$, with probability $1-\delta$, we have the following holds:
    \begin{align*}
        \sum_{\tk=1}^\nk V^*_1(s_1)-V^{\piO_\tk}_1(s_1) \leq C_1 +  C_2 \log \frac{\nk}{\delta}
    \end{align*}
    where $C_1,C_2$ are some parameters depending on $S,A,H$ and $\Delta_h(s_h,a_h)$ and independent with $\nk$.
\end{condition}
As a result, no matter how small $\delta$ is chosen at the beginning, when $\nk \geq \lceil 1/\delta \rceil$, the Cond. \ref{cond:requirement_on_algO} can not be directly guaranteed.
To overcome this issue, we present a new framework in Alg \ref{alg:Tiered_RL_with_Doubling_Trick} inspired by doubling trick. 

\begin{algorithm}[H]
    \textbf{Input}: $\alpha > 1$.\\
    $K_0=1,\quad k = 1,\quad \piE_{1,1}\gets \algE(\{\})$.\\
    \For{$n=1,2,...$}{
        $K_n \gets 2K_{n},\quad \delta_{n-1} = 1/K_n^\alpha,\quad D_{n,1} \gets \{\}$ \\
        \For{$k=1,...,K_{n}$}{
            // Here we do not update $\piE$ \\
            $\piO_{n,k+1}\gets \algO(D_{n,k},\delta_n)$. \\
            $\piE_{n,k+1} =
            \begin{cases}
                \piE_{n-1,K_{n-1}/2+\lceil k / 2\rceil},\quad & \text{If }k \leq K_{n}/2,\\
                \algE(D_{n,k}, 1/k^\alpha),\quad & \text{Otherwise}.\\
            \end{cases}
            $\\
            $\tau_{k+1} \sim \piO_{n,k+1}$\\
            $D_{n,k+1} = D_{n,k} \cup \tau_{n,k+1}$ \\
        }
    }
    \caption{Tiered RL Algorithm with Doubling Trick}\label{alg:Tiered_RL_with_Doubling_Trick}
\end{algorithm}
The basic idea is to iteratively run $\algO$ satisfying Cond.\ref{cond:realistic_requirement_on_algO} from scratch while gradually doubling the number of iterations (i.e. $K_n$) and shrinking the confidence level $\delta_n$ rather than runnning with a fixed $\delta$ forever.
Besides, another crucial part is the computation of $\piE_k$. Instead of continuously updating $\piE$ with the data collected before, we only update the exploitation policy when $k \geq K_n/2$ for each outer loop $n$. As we will discuss in Lemma \ref{lem:conversion_between_algO_condition}, $\algE_{n,k}$ will behave as if the dataset is generated by another online algorithm satisfying Cond. \ref{cond:requirement_on_algO}, and therefore, the analysis based on Cond. \ref{cond:requirement_on_algO} can be adapted here, which we summarize to Thm. \ref{Thm:DoublingTrick} below.
\begin{lemma}\label{lem:conversion_between_algO_condition}
    By running an algorithm satisfying Cond. \ref{cond:realistic_requirement_on_algO} in Alg. \ref{alg:Tiered_RL_with_Doubling_Trick} as $\algO$, for arbitrary $n \geq 1$ and $K_n/2 + 1\leq k < K_n / 2$, we have:
    \begin{align*}
        \Pr(\sum_{k=1}^K V^*-V^{\piO_{n,k}} > C_1' + \alpha C_2' \log k) \leq 1/k^\alpha
    \end{align*}
    with $C_1'=C_1 + (\alpha+1)C_2 \log 2$ and $C_2'=\frac{\alpha+1}{\alpha}C_2$.
\end{lemma}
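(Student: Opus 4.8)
The plan is to exploit the fact that, within a single outer loop $n$ of Alg.~\ref{alg:Tiered_RL_with_Doubling_Trick}, $\algO$ is restarted from scratch on an empty dataset and run with a \emph{fixed} confidence level $\delta_n = 1/K_n^\alpha$ for exactly $K_n$ iterations. This is precisely the regime covered by Cond.~\ref{cond:realistic_requirement_on_algO}. First I would invoke that condition a \emph{single} time, at the endpoint $\nk = K_n$ and with $\delta = \delta_n$, to define the good event
\begin{align*}
\mathcal{G}_n := \Big\{\textstyle\sum_{\tk=1}^{K_n} V^*_1(s_1) - V^{\piO_{n,\tk}}_1(s_1) \le C_1 + C_2 \log\tfrac{K_n}{\delta_n}\Big\},
\end{align*}
which satisfies $\Pr(\mathcal{G}_n) \ge 1 - \delta_n = 1 - 1/K_n^\alpha$. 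Substituting $\delta_n = 1/K_n^\alpha$ collapses the logarithmic term to $\log(K_n/\delta_n) = (\alpha+1)\log K_n$, so on $\mathcal{G}_n$ the total within-loop regret is at most $C_1 + (\alpha+1)C_2\log K_n$.

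The second step is to upgrade this single endpoint bound into an anytime bound valid for every $k$ in the second half of the loop. Since each per-step term $V^*_1(s_1) - V^{\piO_{n,\tk}}_1(s_1) \ge 0$, the partial sums are monotone in their upper limit, so on $\mathcal{G}_n$ we have $\sum_{\tk=1}^{k} V^*_1(s_1) - V^{\piO_{n,\tk}}_1(s_1) \le C_1 + (\alpha+1)C_2\log K_n$ simultaneously for all $k \le K_n$, with no union bound. For $k$ in the range $K_n/2 < k \le K_n$ we have $K_n < 2k$, hence $\log K_n < \log 2 + \log k$, and the bound becomes $C_1 + (\alpha+1)C_2\log 2 + (\alpha+1)C_2\log k = C_1' + \alpha C_2'\log k$ with exactly the claimed constants $C_1' = C_1 + (\alpha+1)C_2\log 2$ and $C_2' = \frac{\alpha+1}{\alpha}C_2$. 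For the failure probability, $k \le K_n$ gives $\Pr(\mathcal{G}_n^c) \le 1/K_n^\alpha \le 1/k^\alpha$, which completes the argument.

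The conceptual crux — and the step I would be most careful about — is the decision to apply Cond.~\ref{cond:realistic_requirement_on_algO} only once, at $\nk = K_n$, rather than at each intermediate $k$. A naive attempt to obtain an anytime guarantee by instantiating the fixed-$\delta$ condition at every $k$ would require a union bound over $k$, which both inflates the accumulated failure probability and forces larger constants; the monotonicity of non-negative regret is what lets a single high-probability event at the endpoint control all earlier partial sums. The restriction to the second half $k > K_n/2$ is also essential and benign: it is exactly where $\log K_n \le \log 2 + \log k$ holds, so the conversion from the fixed horizon $K_n$ to the running index $k$ costs only an additive $\log 2$, and it coincides with the only iterations in which Alg.~\ref{alg:Tiered_RL_with_Doubling_Trick} actually refreshes $\piE$, so nothing is lost by ignoring the first half. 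I would also note two apparent typos in the statement — the summation upper limit should read $k$ rather than $K$, and the valid range should read $K_n/2 + 1 \le k \le K_n$ — and prove the corrected version.
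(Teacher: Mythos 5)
Your proof is correct and arrives at exactly the paper's constants, but it routes through the condition slightly differently. The paper instantiates Cond.~\ref{cond:realistic_requirement_on_algO} directly at the running index $\nk = k$ with $\delta = \delta_n$, and then observes that the threshold $C_1 + C_2\log(k/\delta_n)$ is dominated by $C_1 + (1+\alpha)C_2\log(2k)$ because $(2k)^{1+\alpha} \geq 2kK_n^\alpha \geq k/\delta_n$ when $k \geq K_n/2$; monotonicity of probabilities in the threshold then gives the bound immediately, with failure probability $\delta_n = 1/K_n^\alpha \leq 1/k^\alpha$. You instead instantiate the condition once at the endpoint $\nk = K_n$ and use monotonicity of the non-negative partial sums to carry the endpoint bound back to every $k$ in the second half of the loop. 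Both arguments hinge on the same inequality $2k \geq K_n$ and produce identical constants, so the difference is cosmetic. One point to correct in your commentary: your stated reason for avoiding the per-$k$ instantiation --- that it ``would require a union bound over $k$'' --- is mistaken. The lemma is a separate probability statement for each fixed $k$, not a simultaneous anytime guarantee, so applying the fixed-$\delta$ condition at each $k$ individually (as the paper does) incurs no union bound and no constant inflation. Your observation about the two typos in the statement (the summation limit should be $k$, and the range should be $K_n/2+1 \leq k \leq K_n$) is correct and matches what the paper's own proof implicitly assumes.
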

\begin{proof}
    Based on Cond. \ref{cond:realistic_requirement_on_algO}, we know that:
    \begin{align*}
        \Pr(\sum_{\pk=1}^k V_1^*(s_1)-V_1^{\piO_\pk}(s_1) > C_1 +  C_2 \log \frac{k}{\delta_n}) \leq \delta_n
    \end{align*}
    Since $\delta_n = 1/K_n^\alpha$ and $k \geq K_n / 2 $, we have:
    \begin{align*}
        &\Pr(\sum_{\pk=1}^k V_1^*(s_1)-V_1^{\piO_\pk}(s_1) > C_1 + (1+\alpha) C_2 \log 2k) \\
        =& \Pr(\sum_{\pk=1}^k V_1^*(s_1)-V_1^{\piO_\pk}(s_1) > C_1 + C_2 \log (2k)^{1+\alpha}) \\
        \leq& \Pr(\sum_{\pk=1}^k V_1^*(s_1)-V_1^{\piO_\pk}(s_1) > C_1 + C_2 \log \frac{k}{\delta_n}) \tag{$(2k)^{1+\alpha} \geq 2kK_n^\alpha > k/\delta_n$}\\
        \leq& \delta_n \leq 1/k^\alpha
    \end{align*}
\end{proof}
Now, we are ready to upper bound the regret of $\algE$:
\begin{restatable}{theorem}{ThmRegretDoublingTrick}\label{Thm:DoublingTrick}
    By choosing an arbitrary algorithm satisfying Cond. \ref{cond:realistic_requirement_on_algO} as $\algO$, choosing Alg. \ref{alg:PVI} as $\algE$ and choosing a bonus function satisfying Cond. \ref{cond:bonus_term} as \textbf{Bonus}, the Pseudo regret of $\piE_{n,k}$ in Alg. \ref{alg:Tiered_RL_with_Doubling_Trick} can be upper bounded by:

    (i) $|\Pi^*| = 1$ (unique optimal deterministic policy):
    \begin{align*}
        &\EE[\sum_{n=1}^N \sum_{k=1}^{K_n} V_1^*(s_1) - V^{\piE_{n,k}}]\leq 2H + \frac{9\alpha H}{\alpha - 1} \\
        &+3c_{\algE}\cdot \Big(\sum_{h=1}^H\sum_{\substack{s_h,a_h:\\d^{\pi^*}(s_h,a_h) > 0}} \frac{\alpha(C'_1+C'_2)}{\Delta_{\min}}\log \frac{\alpha SAH(C'_1+C'_2)}{d^{\pi^*}(s_h,a_h) \Delta_{\min}} + \frac{\alpha B_1 H}{\Delta_{\min}}\log \frac{\alpha B_2 H}{d^{\pi^*}(s_h,a_h)\Delta_{\min}}\Big)
    \end{align*}
    \indent (ii) $|\Pi^*| > 1$ (non-unique optimal deterministic policies):
    \begin{align*}
        &\EE[\sum_{n=1}^N \sum_{k=1}^{K_n} V_1^*(s_1) - V^{\piE_{n,k}}]\leq 2H +
        \frac{9\alpha H}{\alpha - 1} \\
        & + 3c'_{\algE}\cdot\Big(\sum_{h=1}^H\sum_{\substack{s_h,a_h:\\d^{\pi^*_{\text{cover}}}(s_h,a_h) > 0}} \frac{\alpha(C'_1+C'_2)}{\Delta_{\min}}\log \frac{\alpha SAH(C'_1+C'_2)}{\tilde{d}^{\pi^*_{\text{cover}}}(s_h,a_h)\Delta_{\min}} + \frac{\alpha B_1 SAH}{d_{\min}\Delta_{\min}}\log \frac{\alpha B_2 SAH}{d_{\min}\Delta_{\min}}\Big)
    \end{align*}
    where $C_1'=C_1 + (\alpha+1)C_2 \log 2$ and $C_2'=\frac{\alpha+1}{\alpha}C_2$.
\end{restatable}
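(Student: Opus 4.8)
The plan is to reduce the analysis of Algorithm \ref{alg:Tiered_RL_with_Doubling_Trick} to the fixed-confidence bound already proved in Theorem \ref{thm:UB_Regret_PVI}, exploiting the doubling structure so that the per-phase regrets telescope to a constant. First I would decompose the total regret over outer loops $n$ and, within each loop, split the iterations $k\in[K_n]$ into a \emph{reuse phase} $k\le K_n/2$ (where $\piE_{n,k}$ is copied from the previous loop) and a \emph{fresh phase} $k>K_n/2$ (where $\piE_{n,k}$ is recomputed by PVI on the restarted dataset $D_{n,k}$). Writing $R_n^{\mathrm{fresh}}:=\sum_{k=K_n/2+1}^{K_n}\EE[V^*-V^{\piE_{n,k}}]$ and $R_n^{\mathrm{reuse}}$ for the complementary sum, the key combinatorial observation is that the reuse rule $\piE_{n,k}=\piE_{n-1,K_{n-1}/2+\lceil k/2\rceil}$ replays exactly the fresh-phase policies of loop $n-1$, each exactly twice; hence $R_n^{\mathrm{reuse}}=2R_{n-1}^{\mathrm{fresh}}$, and after reindexing the total regret is $\sum_n\big(R_n^{\mathrm{fresh}}+2R_{n-1}^{\mathrm{fresh}}\big)\le 3\sum_n R_n^{\mathrm{fresh}}$ up to boundary terms, which already explains the factor $3$ in front of $c_{\algE}$ and $c'_{\algE}$.

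Next I would bound each $R_n^{\mathrm{fresh}}$. Within loop $n$ the online learner $\algO$ is restarted from scratch with confidence $\delta_n=1/K_n^\alpha$, and Lemma \ref{lem:conversion_between_algO_condition} shows that on the fresh phase it behaves exactly as an algorithm satisfying Cond.~\ref{cond:requirement_on_algO} with the inflated constants $C_1',C_2'$. Therefore the one-step sub-optimality bound of Lemma \ref{thm:SubOpt_under_GoodEvents} (with $C_1',C_2'$ in place of $C_1,C_2$) applies verbatim to each $\piE_{n,k}$ in the fresh phase, conditioned on the good events $\cEBk\cap\cECk\cap\cEOk$; off these events I would charge the crude bound $H\cdot\Pr(\text{failure})\le 3H/k^\alpha$ exactly as in the proof of Theorem \ref{thm:UB_Regret_PVI}.

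The crucial step that keeps the final bound constant, rather than growing with the number of phases, is that the fresh-phase index ranges $(K_n/2,K_n]=(K_{n-1},K_n]$ are disjoint and partition $(1,\infty)$, because $K_n=2K_{n-1}$. Consequently $\sum_n R_n^{\mathrm{fresh}}$ is, term by term, bounded by the single sum $\sum_{k\ge 2} g(k)$ of the per-iteration clipped surpluses $g(k)$ appearing in Lemma \ref{thm:SubOpt_under_GoodEvents}, which is precisely the series already summed to a constant in the proof of Theorem \ref{thm:UB_Regret_PVI}. Substituting $C_1',C_2'$ then reproduces the stated right-hand sides in both the $|\Pi^*|=1$ and $|\Pi^*|>1$ cases, while the failure-event contributions, summed over the (at most tripled) iterations, give the $\tfrac{9\alpha H}{\alpha-1}$ term via $\lim_{K\to\infty}\sum_k 1/k^\alpha<\alpha/(\alpha-1)$.

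The main obstacle I anticipate is bookkeeping rather than a new idea: I must verify that restarting both $\algO$ and $\algE$ at the start of each loop does not break the concentration event $\cECk$, so that the empirical counts $N_{k,h}$ and the expectation $\sum_{\tk\le k}d^{\piO_\tk}$ are both taken relative to loop $n$; and I must confirm that the clipping threshold $\ttau$ governing when $g(k)=0$ depends only on $C_1',C_2',d^{\pi^*}$ (resp.\ $\tilde d^{\pi^*_{\text{cover}}}$) and $\Delta_{\min}$, and not on $n$, so that the telescoped sum genuinely converges. I would also account for the small additive $2H$ term, coming from the initialization $\piE_{1,1}$ and the first outer loop before the reuse recursion takes effect.
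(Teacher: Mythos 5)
Your proposal is correct and follows essentially the same route as the paper's proof: the same reuse/fresh-phase decomposition with the observation that each fresh-phase policy of loop $n-1$ is replayed exactly twice in loop $n$ (giving the factor $3$ and the $2H$ boundary term), followed by Lemma \ref{lem:conversion_between_algO_condition} to restore Cond.~\ref{cond:requirement_on_algO} with $C_1',C_2'$ and an application of the machinery of Thm.~\ref{thm:UB_Regret_PVI} to the fresh phases. Your explicit remark that the fresh-phase local-index ranges $(K_{n-1},K_n]$ are disjoint, so the per-iteration bounds sum to the same convergent series as in Thm.~\ref{thm:UB_Regret_PVI}, is exactly the detail the paper leaves implicit in its final sentence.
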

\begin{remark}[$O(\log^2 K)$-Regret of $\algO$]
    Although the regret of $\algE$ stays constant under this framework, it is easy to verify that the pseudo-regret of $\algO$ will be $O(\log^2 K)$ as a result of the doubling trick, which is worse than $O(\log K)$ up to a factor of $\log K$. Therefore, more rigorously speaking, the regret of $\algO$ will be almost near-optimal.
\end{remark}
\begin{proof}
The key observation is that one can decompose the total expected regret into two parts:
\begin{align*}
    \EE[\sum_{n=1}^N \sum_{k=1}^{K_n} V^* - V^{\piE_{n,k}}] =&\EE[\sum_{n=1}^N \sum_{k=1}^{K_n/2} V^* - V^{\piE_{n,k}}] +\EE[\sum_{n=1}^N \sum_{k=K_n/2+1}^{K_n} V^* - V^{\piE_{n,k}}]\\
    =&2\EE[\sum_{n=0}^{N-1} \sum_{k=K_{n}/2+1}^{K_{n}} V^* - V^{\piE_{n,k}}] +\EE[\sum_{n=1}^N \sum_{k=K_n/2+1}^{K_n} V^* - V^{\piE_{n,k}}]\\
    \leq& 2K_0 H +3\EE[\sum_{n=1}^N \sum_{k=K_n/2+1}^{K_n} V^* - V^{\piE_{n,k}}]\numberthis\label{eq:regret_decomposition}
\end{align*}
Therefore, all we need to do is to upper bound the second part of Eq.\eqref{eq:regret_decomposition}.
As a result of Lemma \ref{lem:conversion_between_algO_condition}, we can apply Lemma \ref{thm:SubOpt_under_GoodEvents} to upper bound the regret of the policy sequence $\{\{\piE_{n,k}\}_{n=1}^N\}_{k=K_n/2+1}^{K_n}$, since the Cond. \ref{cond:requirement_on_algO} is satisfied when generating those policies. Therefore, the Pseudo regret $\EE[\sum_{n=1}^N \sum_{k=K_n/2+1}^{K_n} V^* - V^{\piE_{n,k}}]$ can be upper bounded by extending the results in Thm. \ref{thm:UB_Regret_PVI} here, and we finish the proof.

\end{proof}

\section{Experiments}\label{appx:experiments}
\subsection{Experiment Setup}
\paragraph{Environment}
We test our algorithms in tabular MDPs with randomly generated transition and rewards functions. To generate the MDP, for each layer $h$ and each state action pair $(s_h,a_h)$, we first sample a random vector $\mP(\cdot|s_h,a_h)$, where each element is uniformly sampled from $\{1,2,3...,10\}$, and then normalize it to a valid probability vector. Besides, the reward function is set to $\xi/10$ where $\xi$ is randomly generated from $\{1,2,...,10\}$ to make sure it locates in $[0,1]$.

\paragraph{Algorithm}
We implement the StrongEuler algorithm in \citep{simchowitz2019non} as $\algO$ and construct the same adaptive bonus term (Alg. 3 in \citep{simchowitz2019non}) for $\algE$ to match Cond. \ref{cond:bonus_term}.
Although for the convenience of analysis, in our Framework \ref{alg:general_learning_framework}, we do not consider to use the data generated by $\algE$, in experiments, we use both $\tauO$ and $\tauE$, which slightly improves the performance.
Besides, in practice, we observe that the bonus term is quite loose, and it will take a long time before the estimated $Q/V$ value fallen in the interval $[0, H]$, which is the value range of true value functions.
Therefore, we introduce a multiplicator $\alpha$ and adjust the bonus term from $b_{k,h}$ to $\alpha \cdot b_{k,h}$, and set $\alpha = 0.25$ for both $\algO$ and $\algE$.

\subsection{Results}
We test the algorithms in tabular MDPs with $S=A=H=5$\footnote{The code can be find in \url{https://github.com/jiaweihhuang/Tiered-RL-Experiments}.}. Although the minimal gap $\Delta_{\min}$ is hard to control since we generate the MDP in a random way, we filter out three random seeds in MDP construction, which correspond to minimal gaps (approximately) equal to 0.0015, 0.003 and 0.009, respectively. We report the accumulative regret in Fig. \ref{fig:simulation_results}.

As predicted by our theory, $\algE$ can indeed achieve constant regret in contrast with the continuously increasing regret of $\algO$, which demonstrates the advantage of leveraging tiered structure.
\begin{figure}
    \centering
    \includegraphics[scale=0.45]{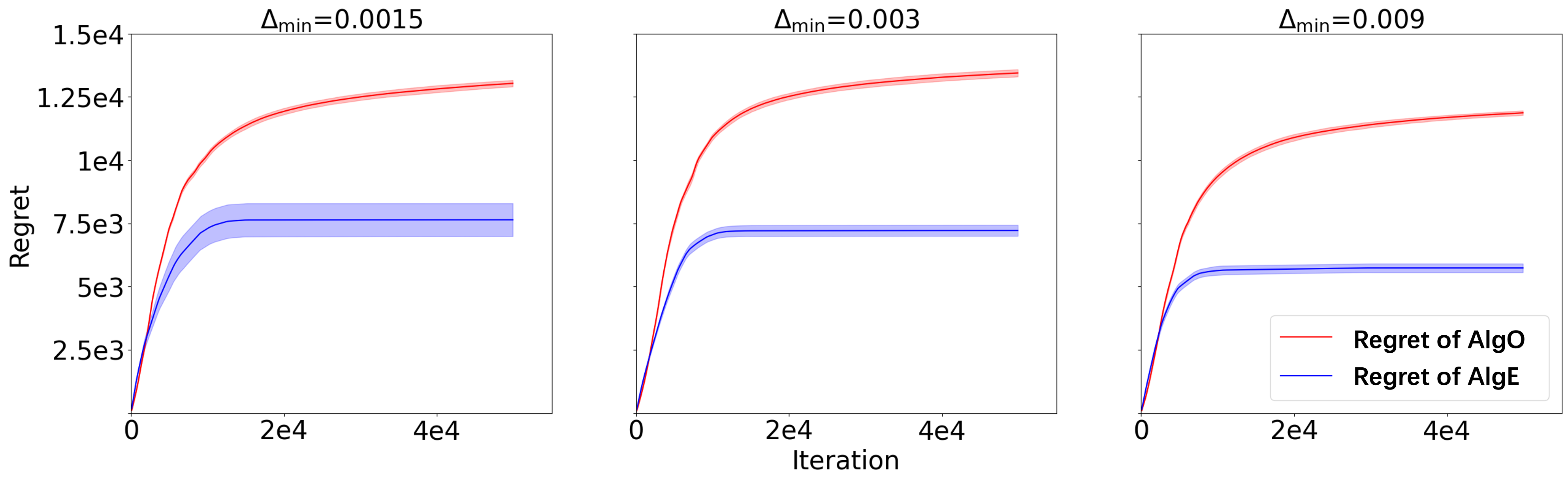}
    \caption{Simulation results with $S=A=H=5$ and different $\Delta_{\min}$, averaged over 10 different random seeds. Error bars show double the standard errors, which correspond to 95\% confidence intervals. Our choice of $\algE$ can achieve constant regret as predicted by theory. We can also see the tendency that larger $\Delta_{\min}$ will result in smaller accumulative regret.}
    \label{fig:simulation_results}
\end{figure}


\end{document}